\definecolor{mydarkgreen}{RGB}{39,130,67}
\definecolor{mydarkred}{RGB}{192,47,25}
\newcommand{\cmark}{{\small \color{mydarkgreen}\ding{51}}}%
\newcommand{\xmark}{{\small \color{mydarkred} \ding{55}}}%
\date{}
\newcommand{\mat}[1]{\bm{#1}}
\newcommand{\dotprod}[1]{\left< #1\right>}
\newcommand{\norm}[1]{ \|#1 \|}
\DeclareMathOperator{\argmininn}{argmin} 
\newcommand{\argmin}[1]{ \underset{#1}{\argmininn} \;}
\newcommand{\cP}{\mathcal{P}}
\newcommand{\cO}[1]{\mathcal{O}\left(#1\right)}
\newcommand{\mD}{\mat{ D}}
\newcommand{\R}{\mathbb{R}}
\newcommand{\E}[1]{\mathbb{E}\left[#1\right] } 
\newcommand{\EE}[2]{\mathbb{E}_{#1}\left[#2\right] } 
\definecolor{shadecolor}{gray}{0.90}
\declaretheoremstyle[
headfont=\normalfont\bfseries,
notefont=\mdseries, notebraces={(}{)},
bodyfont=\normalfont,
postheadspace=0.5em,
spaceabove=6pt,
mdframed={
  skipabove=8pt,
  skipbelow=8pt,
  hidealllines=true,
  backgroundcolor={shadecolor},
  innerleftmargin=4pt,
  innerrightmargin=4pt}
]{shaded}
\declaretheorem[style=shaded,within=section]{definition}
\declaretheorem[style=shaded,sibling=definition]{theorem}
\declaretheorem[style=shaded,sibling=definition]{proposition}
\declaretheorem[style=shaded,sibling=definition]{lemma}
\declaretheorem[style=shaded,sibling=definition]{remark}
\newcommand{\guillaume}[1]{\todo[inline]{\textbf{Guillaume: }#1}}
\definecolor{kleinblue}{RGB}{0, 47, 167}
\newcommand{\SGD}{\texttt{SGD}}
\newcommand{\SPS}{\texttt{SPS}}
\newcommand{\IAM}{\texttt{IAM}}
\newcommand{\gamSPS}{\gamma^{\SPS{*}}}
\newcommand{\GPT}{\texttt{GPT2}}
\icmltitlerunning{Idealized Polyak and Black-Box distillation}
\begin{document}

\twocolumn[
\icmltitle{Analysis of an Idealized Stochastic Polyak Method and its Application to Black-Box Model Distillation}



\icmlsetsymbol{equal}{*}

\begin{icmlauthorlist}
\icmlauthor{Robert M. Gower}{yyy}
\icmlauthor{Guillaume Garrigos}{comp}
\icmlauthor{Nicolas Loizou}{sch}
\icmlauthor{Dimitris Oikonomou}{sch}
\icmlauthor{Konstantin Mishchenko}{meta}
\icmlauthor{Fabian Schaipp}{xx}
\end{icmlauthorlist}

\icmlaffiliation{yyy}{Flatiron Institute, New York.}
\icmlaffiliation{comp}{Université Paris Cité and Sorbonne Université, CNRS,
Laboratoire de Probabilités, Statistique et Modélisation, Paris.}
\icmlaffiliation{sch}{AMS \& MINDS
Johns Hopkins University.}
\icmlaffiliation{meta}{Meta, Paris.}
\icmlaffiliation{xx}{Inria, Departement d’Informatique de l’Ecole Normale Superieure, PSL Research University, Paris}

\icmlcorrespondingauthor{Robert M. Gower}{rgower@flatironinstitute.org}


\vskip 0.3in
]



\printAffiliationsAndNotice{}  

\begin{abstract}
We provide a general convergence theorem of an idealized stochastic Polyak step size called \SPS*. Besides convexity, we only assume a local expected gradient bound, that includes locally smooth and locally Lipschitz losses as special cases. We refer to \SPS* as idealized because it requires access to the loss for every training batch evaluated at a solution. It is also ideal, in that it achieves the optimal lower bound for globally Lipschitz function, and is the first Polyak step size to have an $\mathcal{O}(1/\sqrt{t})$ anytime convergence in the smooth setting.  We show how to combine \SPS* with momentum to achieve the same favorable rates for the last iterate. We  conclude with several experiments to validate our theory, and a more practical setting  showing how we can distill a teacher GPT-2 model into a smaller student model without any hyperparameter tuning.
\end{abstract}

\section{Introduction}
Consider the problem 
\begin{equation} \label{eq:prob}
x_* \in \argmin{x\in \R^d} f(x), \quad f(x):=\EE{\xi\sim \cP}{f_{\xi}(x)}, 
\end{equation} 
where $\cP$ is a distribution over the data, the loss functions $f_\xi$ are convex.
We assume that the minimizer  $x_*\in\R^d$ exists, and that $\mathbb{E}_\xi \left[ \inf f_\xi \right] > - \infty$ (e.g. the losses are nonnegative).

One of the main costs in developing new machine learning models is training them, that is, finding an approximate solution to~\eqref{eq:prob}. 
The training of GPT-4  is estimated to have cost over \$40M~\cite{cottier2024risingcoststrainingfrontier}. The elevated cost of training bigger models, and the success of Adam~\cite{adam}, has sparked an intense research effort into developing new stochastic optimization methods.  Yet the performance difference among many newly developed methods is minimal when the step size is tuned~\cite{pmlr-v139-schmidt21a}. 
Finding a good step size often involves multiple re-runs on a subset of the data, which adds considerably to this cost. 

Here we advance the theory of an adaptive stochastic Polyak step size. The Polyak step size uses both the current loss and gradient norm to compute a step size at each iteration.

We show that if we had access to
$f_{\xi}(x_*)$, the value of the loss at the solution for each batch $\xi$ of data, a variant of the stochastic Polyak step we call \SPS* achieves the best known rates across several subclasses of convex functions. 
Specifically, we show that \SPS*
achieves either the optimal rate when known, or the best known rate, for convex functions, including Lipschitz, smooth, and strongly convex. Furthermore we only require that these assumptions hold in a ball around the solution.  This mirrors the same result in the deterministic setting for the  Polyak step size~\cite{hazan2022revisiting}.

We also prove convergence in the finite-sum, convex and continuous setting, without any additional assumption, for which we are unaware of any other stochastic method that provably converges. 

We then show how to combine this Polyak step size with momentum, in such a way that the last-iterate converges at the optimal (competitive) rate in the Lipschitz (smooth) setting.  
For this we use \emph{iterate averaging}, which is one of the many equivalent ways of writing momentum \citep{pmlr-v134-sebbouh21a}.



These fast and adaptive convergence results speak to the strength of the \SPS* method. However, they also show that having access to $f_{\xi}(x_*)$ for every $\xi$ is a strong assumption, which we can not expect to hold in general. But we do consider two settings where $f_{\xi}(x_*)$ is known or can be approximated. The first setting is that of interpolation, where typically $f_{\xi}(x_*) =0$ or is relatively easy to compute~\cite{SPS}. The second setting is one we call \emph{blackbox model distillation}. In this setting, we can query a teacher (a larger pretrained model) with any input, but we do not have access to the teachers architecture or weights. Our objective is to train the student (a smaller model) on one of the tasks that the teacher is accomplished. The teacher's loss on each input serves as an approximation of $f_{\xi}(x_*)$ for the student. This enables us to use \SPS* with momentum to set the step size for the student, and train it efficiently without having to tune any hyper-parameters.


\subsection{Stochastic Polyak Step Size}

Here we analyse the following variant of the \SPS{} (Stochastic Polyak step size) method
\begin{equation}\label{eqn:sps-iter} 
    x_{t+1} \; = \; x_t - \gamSPS_t g_t, \;  \; \gamSPS_t := 
        \frac{(f_{t}(x_t) - f_{t}(x_*))_+}{\|g_t\|^2}
\end{equation} 
where $\xi_t\sim \cP$ is sampled i.i.d at each iteration, and $g_t$ denotes either a gradient (smooth setting) or a subgradient (non-smooth setting) of $f_{t}:=f_{\xi_t}$ evaluated at $x_t$. Throughout, we use the notation $(z)_+:=\max\{z,0\}$ for $z\in \R$.
We refer to~\eqref{eqn:sps-iter} as a the \SPS* method. 
We will prove several anytime convergence rates for \SPS*.  By \emph{anytime}, we mean a proof that the method converges to
any predefined tolerance without prior knowledge of that tolerance.



 See~\Cref{tab:sps-compare} for a comparison between our rates of convergence, that of other variants of \SPS{}, and the best known anytime rates for \SGD{} in each setting. For the \SGD{} rates within each setting, we included rates that rely on the global problem constants. For instance, to achieve the $GD/\sqrt{t}$ rate in the $G$-Lipschitz setting, we need to set the step size as $\gamma = \frac{D}{G}\frac{1}{\sqrt{t}}$, 
 and we need to project the iterates of \SGD{} back onto the ball of radius $D:= \|x_0-x_*\|.$ 
 In contrast, \SPS* achieves this rate without  without access to $G$ or $D$, but with access to $f_{\xi}(x_*)$ instead.

 
The main downside to~\eqref{eqn:sps-iter} is that it requires access  to $f_{t}(x_*)$. This is why we refer to \SPS{*} as an idealized variant, both because of its ideal convergence rates, and this idealized setting of assuming access to 
  $f_{t}(x_*)$.  In this sense, the comparisons in \cref{tab:sps-compare} to alternative Polyak type methods are not entirely fair, because they do not require such access to  $f_{t}(x_*)$. Our message here is not that \SPS{*} is a better method than $\SPS_{\max}$, \texttt{NGN} or \texttt{DecSPS}, but rather that $f_{t}(x_*)$ is the object that we should try to approximate, or learn on the fly.

Despite our claim that \SPS{*} is an idealized method, we do consider two settings where access to, or approximating, $f_{t}(x_*)$ is reasonable. 
One setting where $f_{t}(x_*)$ is often known  is the interpolation setting, where we assume that there exists a minimizer $x_* \in \R^d$ such that the loss over every data is simultaneously minimized, in other words
\begin{equation}\label{eq:interpolation}
    f_{\xi}(x_*) \; = \; \inf_{x\in\R^d} f_{\xi}(x), \quad \forall \xi \in \mbox{support }(\cP).
\end{equation} 
Thus under interpolation, our model has a perfect fit (as measured by  $f_{\xi}(x)$) for every data point.
Typically the loss is a non-negative function and its infimum is zero~\cite{SPS}, that is 
$ \inf_{x\in\R^d} f_{\xi}(x) =0 $. 
When this is the case, we have access to every $f_{\xi}(x_*) $, which happens to be zero. Alternatively when $\inf  f_{\xi}(x)$ is close to zero, then using zero as approximation is reasonable. Finally, even when $\inf  f_{\xi}(x)$ is far from zero, it can sometimes be efficiently approximated~\cite{SPS}. 

 The ease of approximating $\inf  f_{\xi}(x)$ is what motivated  $\SPS{_{\max}}$~\cite{SPS} which uses the step size
   \begin{equation}\label{eq:SPSmax-intro}
     \gamma^{\SPS_{\max}}_t := 
       \min \left\{ \frac{f_{t}(x_t) -\inf_x f_{t}(x)}{\|g_t\|^2}, \gamma_b \right\},
\end{equation}
where  $\gamma_b >0$ is an additional hyperparameter to safe-guard against excessively large step sizes.
\citet{SPS} present a comprehensive analysis of $\SPS{_{\max}}$ in the non-smooth, smooth and strongly convex setting. But in all these cases, $\SPS{_{\max}}$ is only guaranteed to converge when interpolation holds. Outside of interpolation, $\SPS{_{\max}}$ converges to a neighborhood of the solution. Here we show that it is not necessary to assume that interpolation holds to establish convergence of a \SPS{} type method. Having access to $f_{\xi}(x_*)$ is sufficient.

To be clear,  assuming access to 
$f_{\xi}(x_*)$ is not the same as assuming that interpolation holds. Interpolation~\eqref{eq:interpolation} imposes constraints on the data and the model, usually requiring the model to be overparameterized~\cite{pmlr-v80-ma18a,LIU202285,SGDstruct}. In contrast having access to $f_{\xi}(x_*)$ imposes no constraints on the model and data. Furthermore, there are settings outside of interpolation where $f_{\xi}(x_*)$ can be known or reasonably approximated, such as model distillation which we consider in~\Cref{sec:distillation}.

As a secondary objective of our work, we also present \IAM{} (Iterate Averaging Adaptive method), a variant of \SPS* with momentum.   We prove that in the smooth and Lipschitz setting the \emph{last} iterate of \IAM{} converges as fast as the \emph{average} iterate of  \SPS*. As the last iterate is usually more relevant in practice, this is the first time that a version of \SPS{} with momentum has some theoretical advantage.

\begin{table*}[t]
    \centering
\setlength\tabcolsep{3.5pt} 
  \begin{threeparttable}[b]
    {
	\renewcommand\arraystretch{1.8}
	\caption{ \small A summary of anytime convergence rates for variants of stochastic Polyak step size. Notation: $D = \|x_0-x_*\|,$ $\Delta_* = \inf f - \E{ \inf f_{\xi}} $, $\Delta_{pos} = \E{ \inf f_{\xi}}$,  $G^2 = \max_{x} \EE{\xi}{\| \nabla f_{\xi}(x)\|^2}.$ We compare to the stochastic Polyak methods  \texttt{DecSPS}$^{\tnote{\color{green}(3)}}$,   \SPS{$\max$} \cite{SPS}  and \texttt{NGN} \cite{orvieto2024NGN}. The proof of convergence for \SPS* in the Lipschitz convex and strongly convex setting was first given in \cite{garrigos2023handbook} and~\cite{pedregosa2023sps2}, respectively. 
    For the results making use of the gradient variance constant $\sigma_*^2 = \mathbb{V}_\xi \left[ \nabla f_\xi(x_*) \right]$, we replaced it with its upper bound $L \Delta_*$ for a more uniform comparison.}
  \label{tab:sps-compare}
	\centering 
        {
	\begin{tabular}{ccccccc}\toprule[.1em]
            \textbf{Algorithm} & \makecell{\textbf{Convex}\\ \textbf{finite sum}} & \makecell{ \textbf{$G$-Lipschitz} \\ \textbf{problems}} & \makecell{ $L$-\textbf{Smooth} \\ \textbf{problems}}  & \makecell{$L$-\textbf{Smooth}\\ $\mu$-\textbf{Convex} } & \makecell{$G$-Lipschitz\\ $\mu$-\textbf{Convex} } \\
            \midrule
            \texttt{DecSPS}$^{\tnote{\color{green}(3)}}$  & \xmark & \xmark & \xmark & $\frac{L D^2 + \Delta_*}{\sqrt{t}}$ & \xmark   \\
            \SPS{$\max$} & \xmark & \xmark &  $\frac{LD^2}{t} + \Delta_* L$ 
            &  $\left(1-\frac{\mu}{L}\right)^t D^2 + \frac{\Delta_* L}{\mu} $ 
  & \xmark   \\
            \texttt{NGN} & \xmark & \xmark 
            &  $\frac{L^2 D^2}{\sqrt{t}} + \frac{L(\Delta_* + L\Delta_{pos})\log(t)}{\sqrt{t}}$& \cmark\tnote{\color{purple}(4)}  & \xmark  \\[7pt]
               \hline 
            \SGD$^*$\tnote{\color{blue}(2)} & \xmark  
            & $\frac{G D}{\sqrt{t}}$ 
            &  $\frac{LD^2}{\sqrt{t}} + \frac{ \Delta_* \log(t)}{\sqrt{t}}$ 
            & $\frac{L \Delta_*}{\mu^2} \frac{1}{t} + \frac{L^2D^2}{\mu^2t^2}$ & $\frac{B^2}{\mu^2} \frac{1}{t}$  \\[7pt]
              \hline 
            \SPS*  
             & \makecell{ $\frac{G D}{\sqrt{t}}$\tnote{\color{red}(1)} \\ \Cref{rem:finitesum} } & \makecell{ $\frac{G D}{\sqrt{t}}$ \\ \Cref{cor:spsnonsmooth}  } & \makecell{$\frac{LD^2}{t} + \frac{ \sqrt{L\Delta_*}D}{\sqrt{t}}$ \\ \Cref{cor:spssmooth} } & \makecell{ $\frac{\Delta_*}{\mu^2} \frac{1}{t}$\tnote{\color{yellow}(5)}\\ \Cref{thm:better-bounds-sps-strong} } & \makecell{$\frac{B^2}{\mu^2} \frac{1}{t}$    \\ \Cref{thm:better-bounds-sps-strong} } \\[0.3cm]  
             \IAM{}  (new)
             & \makecell{ $\frac{G D}{\sqrt{t}}$\tnote{\color{red}(1)} \\ \Cref{rem:finitesum} } & \makecell{ $\frac{G D}{\sqrt{t}}$ \\ \Cref{theo:nonsmooth} } & \makecell{$\frac{LD^2 \log(t+1)}{t} + \frac{\sqrt{L\Delta_*}D}{\sqrt{t}}$ \\ \Cref{theo:smooth}} & \xmark  & \xmark  \\
		\bottomrule[.1em]
    \end{tabular}
    }
    }
    \begin{tablenotes}
      {\footnotesize
        \item [\color{red}(1)] The convex finite sum result  assumes  $\EE{\xi}{f_{\xi}} = \frac{1}{n} \sum_{i=1}^n f_i$
        \item [\color{blue}(2)] $\SGD^*$ denotes \SGD{} where we can use all the global constants $D, G, L, \Delta_*$ and $\mu$ to set the step size. For the left to right, these results can be found in 
        Thm.\ 9.12~\cite{garrigos2023handbook},
        Thm.\ 4.1~\cite{SGDstruct},
        Thm.\ 3.1~\cite{gower_sgd_2019}, Section 3.2~\cite{lacostesimple1t}. 
        \item [\color{green}(3)]  Under the additional assumption that the iterates of \texttt{DecSPS} are bounded, we have from \cite{Orvieto2022}  that \texttt{DecSPS} converges at a $\cO{1/\sqrt{t}}$ rate in the $G$-Lipschitz and $L$-smooth setting.  
        \item [\tnote{\color{purple}(4)}] The paper claims an $\cO{\log(t)/t}$ anytime rate is possible, but does not give the explicit proof or constants.
        \item [\tnote{\color{yellow}(5)}] Here we have an anytime rate only for $t \geq \cO{\frac{L}{\mu}}$
      }
    \end{tablenotes}
  \end{threeparttable}
\end{table*}

Next we describe the related work to ours, and use the context to detail our specific contributions. See~\Cref{tab:compare_with_others} for a high-level resume of our results.

\subsection{Related Work and Contributions}

\paragraph{Polyak step size.}
The Polyak step size was first introduced by \citet{Polyak87} in the deterministic setting, where he also proved convergence for non-smooth and convex functions.
\citet{hazan2022revisiting} revisited the Polyak step size and showed that for the class of gradient descent methods (where we can only choose the step size),  it has  near-optimal convergence rate in the  Lipschitz, smooth, and strongly convex setting. Furthermore, it is optimal without having access to any of the Lipschitz ($G$), smoothness ($L$),  or strong convexity ($\mu$) parameters. Recently,  the proof of convergence in the smooth setting has been generalized to a broader class of relatively smooth functions~\cite{takezawa2024} and locally smooth functions~\cite{richtarik2024localcurvaturedescentsqueezing}.
In the smooth and strongly convex setting, \citet{pmlr-v125-barre20a} show how to accelerate gradient descent with the Polyak step size, and without having access to the strong convexity parameter, but estimating it instead.


\paragraph{The stochastic Polyak step size.}

The current research into the stochastic Polyak step size was kick-started by the \texttt{ALI-G} method~\cite{ALI-G} and $\SPS_{\max}$~\cite{SPS}.
Both \texttt{ALI-G} and $\SPS_{\max}$ offered a practical stochastic variant of the Polyak step size with strong empirical results to support their use. In terms of convergence theory, for smooth and convex functions, $\SPS_{\max}$ was shown to converge to a neighborhood of the solution~\cite{SPS}. To enforce that $\SPS_{\max}$ does converge in the smooth setting, \citet{Orvieto2022} proposed the \texttt{DecSPS} method that combines $\SPS_{\max}$ with a decreasing step size sequence, and show that if the stochastic loss functions are strongly convex and smooth, then suboptimality converges at a rate of $\mathcal{O}(1/\sqrt{T})$, where $T$ is the number of iterations. This rate is slower than \SGD{} in the same setting, which is $\cO{1/T}.$

As for \SPS*, \citet{garrigos2023function} showed that it converges with the optimal rate in the Lipschitz non-smooth setting. Convergence in the smooth setting was shown in \cite{garrigos2023function,SGDstruct}, but under interpolation.

A proximal version of \SPS{} was introduced in~\cite{schaipp2023a}  in order to handle regularization terms.
More recently, a new variant of $\SPS{}$ called \texttt{NGN} was introduced in~\cite{orvieto2024NGN} for specifically non-negative functions.  \texttt{NGN} uses a combination of Gauss-Newton and truncation to introduce a dampened version of the Polyak step sizes. Though \texttt{NGN} also converges to a neighborhood of the solution for smooth functions, \citet{orvieto2024NGN} prove a $\mathcal{O}(1/\sqrt{T})$  and $\cO{1/T}$ complexity for convex and strongly convex functions, respectively.  \citet{orvieto2024NGN} also give a $\mathcal{O}(\log(T)/\sqrt{T})$ anytime result in the smooth and convex setting. 

\emph{Contributions.} 
We present a unifying anytime convergence in the smooth and non-smooth setting in~\Cref{thm:better-bounds-sps} for \SPS*. Besides convexity, \Cref{thm:better-bounds-sps} only makes local 
 assumptions and thus applies to a broader class of functions as compared to prior results.
 We then specialize this result into the locally Lipschitz and locally smooth setting in~\Cref{cor:spsnonsmooth} and~\Cref{cor:spssmooth}, respectively. Our proof also leverages a new trick, where we explicitly invert a convex monotone function (\Cref{L:psi rate reciprocal}). We show how this trick is used in  a sketch of the proof of~\Cref{thm:better-bounds-sps} in Section~\ref{sec:sketchproof}. 
 Finally, our convergence result in the smooth setting in~\Cref{cor:spssmooth} is the first $\mathcal{O}(1/\sqrt{T})$ anytime result which benefits from interpolation (see details in Section \ref{S:discussion}).

\paragraph{Momentum.}
\citet{Polyak1964} introduced the momentum method through the heavy-ball viewpoint. In the deterministic setting, \citet{Polyak1964} showed that it converges at an accelerated rate for strongly convex quadratic functions. Only rather recently, a global convergence was established for smooth and non-smooth functions without strong convexity~\cite{ghadimi2014HB}.

In the stochastic setting, there is little to no theoretical advantage for using momentum for \SGD{}, unless we consider the specialized setting of minimizing a quadratic~\cite{LeePaquettePaquette2024,Bollapragadamomentum2024}. The main theoretical improvement from using momentum in the stochastic setting for general convex functions is  that the last iterate $x_t$ of momentum converges at the same favourable rate as the average iterate of the \SGD{} iterates~\cite{pmlr-v134-sebbouh21a,pmlr-v157-defazio21a}. The analysis in~\cite{pmlr-v134-sebbouh21a} relies on an equivalent reformulation of momentum known as the \emph{iterate  averaging} viewpoint,
which we also use in this work. Recent online-to-batch conversion techniques can also achieve the same rate of convergence of the last iterate of \SGD{} without momentum, albeit with slightly worse constants~\cite{pmlr-v97-cutkosky19a}.  These online-to-batch techniques rely on monotonic step sizes, and thus are not applicable to Polyak-type step sizes.
\paragraph{Stochastic Polyak with momentum.}
In the stochastic setting, some very recent works have considered different ways of blending \SPS{} with momentum~\cite{Schaipp2024a,pmlr-v202-wang23l}. 
The first analysis of a variant of \SPS{} with momentum was  developed in~\citet{pmlr-v202-wang23l}. Their \texttt{ALR-SMAG} method is the result of choosing a learning rate that minimizes a particular upper bound on $\|x_{t+1}-x_*\|$ for the iterates of momentum or heavy-ball. The current analysis for \texttt{ALR-SMAG}  shows that it has a slower convergence as compared to \SPS{} unless $\beta_t=0$, which corresponds to using no momentum.
The same issue holds for the recently introduced \texttt{MoMo} method~\cite{Schaipp2024a},
which empirically reduces the tuning effort for the learning rate across many tasks, but theoretically has best bounds with no momentum, that is, when the method is equal to \SPS{}. 
Another recent approach that combines \SPS{} with momentum is proposed by \citet{oikonomou2024stochastic}, introducing \texttt{MomSPSmax} and its variants, \texttt{MomDecSPS} and \texttt{MomAdaSPS}. These step sizes guarantee convergence in the stochastic setting without relying on the interpolation condition. Instead, they assume in addition that the iterates remain bounded. Specifically, \texttt{MomSPSmax} achieves an $\mathcal{O}(1/t)$ convergence rate to a neighborhood of the solution, while \texttt{MomDecSPS} and \texttt{MomAdaSPS} converge to the exact solution with a rate of $\mathcal{O}(1/\sqrt{t})$.

\emph{Contributions.} We prove that the last iterate of our momentum variant of \SPS{} (\cref{alg:IAM}) converges anytime in (i)~the convex and \emph{locally} Lipschitz case (see \Cref{theo:nonsmooth})  and (ii)~ the \emph{locally} smooth case (see ~\Cref{theo:smooth}). Furthermore, in the non-smooth setting, the convergence rate in~\Cref{theo:nonsmooth} is \emph{at least as fast} as the corresponding rate for \SPS* in \Cref{cor:spsnonsmooth}. 


\paragraph{Adaptive methods.} Historically, line search procedures, such as Armijo line search \cite{armijo1966minimization}, used to be commonly employed to estimate the smoothness around the current point when the exact smoothness constant $L$ was not known. More recent works have shown that it is also possible to estimate the value of $L$ using the previously observed gradients \cite{malitsky20adaptive,latafat2024adaptive}. Furthermore, in the last decade, more line-search \cite{nesterov14universal} and bisection \cite{carmon2022making} methods have been proposed that adapt simultaneously to smooth and non-smooth objectives. Unfortunately, most of the approaches either don't have strong guarantees in the stochastic case or require large batch sizes.

In online learning, when the Lipschitz constant of the objective is known, coin-betting approaches \cite{orabona16coin} can be used to adaptively estimate distances to a solution. When the Lipschitz constant is not known, one can either use restarts \cite{mhammedi2020lipschitz}, which require a lot of extra work, or use a technique called hints \cite{cutkosky19artifical}, but the latter introduces even more hyperparameters.

\texttt{AdaGrad}~\cite{streeter12noregret,duchi2011adaptive} and its variants offer an alternative by estimating the gradient magnitudes instead of estimating smoothness. These methods can be combined with momentum and achieve strong complexity results, but they either require bounded domain \cite{levy2018online,kavis2019unixgrad} or are only studied in the deterministic setting \cite{li2023simple}. Furthermore, most variants use step sizes that can only decrease over time, meaning they will not adapt if the problem curvature becomes flatter. This has been partially addressed by a series of new methods that have an increasing estimate of distances to the solution set \cite{defazio23learning,ivgi23dog,khaled2023dowg}, but their stochastic guarantees are provided only for large batch sizes \cite{ivgi23dog} or the interpolation setting. We compare to the most relevant of these works in Table~\ref{tab:compare_with_others}.

\emph{Contributions.} Our theoretical results show that the \SPS* method is adaptive to the following settings and parameters: smoothness ($L$), initial distance ($D$), Lipschitz ($G$) and strong convexity $(\mu).$ The precise definition of these parameters and constants are given later.


\section{Stochastic Polyak Step Size}



Before giving our convergence proofs, we first will motivate \SPS{*} as the step size that minimizes an upper bound on the distance to a minimizer. Suppose we are at iteration $t$, have drawn a batch of data $\xi_t$, and let $g_t: =g_{\xi_t}(x_t)$ be the stochastic (sub)gradient evaluated at $x_t$. For short-hand we will also use $f_t:= f_{\xi_t}.$ Consider an iterate of \SGD,
\[x_{t+1} = x_t - \gamma_t g_t,\]
where $\gamma_t>0$ is the step size. The subgradient $g_t\in \partial f_{t}(x_t),$ by definition satisfies
\begin{equation} \label{eq:convexitysub}
     f_{t}(x) \geq  f_{t}(x_t) + \dotprod{g_t, x-x_t}, \quad \forall x\in \R^d.
\end{equation}

Now consider the task of choosing $\gamma_t$ that brings $x_{t+1}$ as close as possible to the solution $x_*.$  In general, this is impossible since we do not know $x_*$. However, we can minimize the upper bound
\begin{align}
    &\hspace{-4ex}\norm{x_{t+1}-x_*}^2 - \norm{x_{t}-x_*}^2 \nonumber \\
    & = -2\gamma_t \dotprod{g_t, x_t-x_*} + \gamma_t^2 \norm{g_t}^2 \nonumber \\
     & \leq
     -2\gamma_t (f_{t}(x_t) -f_{t}(x_*) ) + \gamma_t^2 \norm{g_t}^2,    \label{eq:iterdistance}
\end{align}
where we use \eqref{eq:convexitysub} in the inequality.
Minimizing the right-hand side under the constraint $\gamma_t \geq 0$ gives the step size
\begin{equation}\label{eq:SPS}
    \gamSPS_t 
    \;=\;
        \frac{(f_{t}(x_t) - f_{t}(x_*))_+}{\|g_t\|^2},
\end{equation}  
 which together with \SGD{} gives the  \SPS* method~\eqref{eqn:sps-iter}. 

Note that in \eqref{eq:SPS} we divide by the squared norm of the stochastic gradient, which could be equal to zero. This is only possible if $(f_{t}(x_t) - f_{t}(x_*))_+=0$, so we define  $\gamSPS_t:=0$ if $g_t=0$. That is, if the stochastic gradient is zero,  no step is taken.

\subsection{Convergence Theory for Convex Problems} \label{sec:spsconv}

We now give in \cref{thm:better-bounds-sps} our unifying convergence theorem for \SPS*, that aside from convexity only assumes in~\eqref{spscv2} that the expected norm of the stochastic gradients verifies a certain bound within 
\[
\mathbb{B}_D(x_*) := \{x\in \R^d: \; \norm{x-x_*} \leq D\}, \ D:= \norm{x_0-x_*}.
\]
 Because our proof makes use of a new technical lemma that may find uses elsewhere, we give a sketch of the proof in~\Cref{sec:sketchproof}. The full proof is also in~\Cref{sec:better-bounds-sps}.

\begin{restatable}[Convergence of \SPS*]{theorem}{theospsgen}\label{thm:better-bounds-sps}
Consider problem~\eqref{eq:prob} and 
let $(x_t)_{t \geq 0}$ be the iterates of \SPS* given by \eqref{eqn:sps-iter}.
Then the iterates are almost surely monotone:
\begin{equation*}
    \norm{x_{t+1} -x_*}^2 \; \leq \; \norm{x_t -x_*}^2 \; \text{with probability 1}.
\end{equation*}
If there exists $A,B \geq 0$ with $A+B\neq 0$ and such that
\begin{equation}\label{spscv2}
    \EE{\xi}{\norm{g_{\xi}(x)}^2}
    \leq 
    A(f(x) - f(x_*)) + B
\end{equation}
for all $x \in \mathbb{B}_D(x_*)$, then the averaged iterates of \SPS*
$\bar x_T := \tfrac{1}{T} \sum_{t=0}^{T-1} x_t$ 
verify: 
\begin{equation*}
    \mathbb{E}\left[   f(\bar x_T) - \inf f \right]
    \leq 
    \frac{D^2 A}{T}
    +
    \sqrt{\frac{D^2B}{T}}.
\end{equation*}
\end{restatable}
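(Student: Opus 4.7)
My plan is to prove the two parts separately, first establishing the monotonicity almost surely, and then using that to keep the iterates inside $\mathbb{B}_D(x_*)$ so that the local bound \eqref{spscv2} can be applied along the trajectory.

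For \emph{monotonicity}, I would start from the inequality \eqref{eq:iterdistance}, which held for any step size $\gamma_t \geq 0$, and plug in the explicit form $\gamma^{\SPS^*}_t = (f_t(x_t)-f_t(x_*))_+/\|g_t\|^2$. A short calculation (handling separately the cases $f_t(x_t) > f_t(x_*)$ and $f_t(x_t) \leq f_t(x_*)$, noting that the latter forces $\gamma^{\SPS^*}_t=0$ and hence $x_{t+1}=x_t$) gives the per-step bound
\begin{equation*}
\|x_{t+1}-x_*\|^2 \;\leq\; \|x_t-x_*\|^2 - \frac{(f_t(x_t)-f_t(x_*))_+^2}{\|g_t\|^2},
\end{equation*}
which is $\leq \|x_t - x_*\|^2$ almost surely.

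For the \emph{rate}, the monotonicity guarantees $x_t \in \mathbb{B}_D(x_*)$ with probability 1 for all $t$, so \eqref{spscv2} may be applied along the iterates. Taking conditional expectation $\mathbb{E}_t[\cdot]$ on the per-step bound and using Cauchy--Schwarz on the pair $(U,V) = ((f_t(x_t)-f_t(x_*))_+/\|g_t\|,\, \|g_t\|)$ gives
\begin{equation*}
\mathbb{E}_t\!\left[\tfrac{(f_t(x_t)-f_t(x_*))_+^2}{\|g_t\|^2}\right]
\;\geq\;
\frac{(\mathbb{E}_t[(f_t(x_t)-f_t(x_*))_+])^2}{\mathbb{E}_t[\|g_t\|^2]}
\;\geq\;
\frac{(f(x_t)-f(x_*))^2}{A(f(x_t)-f(x_*))+B},
\end{equation*}
where in the last step I use $(\cdot)_+ \geq (\cdot)$ together with $\mathbb{E}_t g_t$-style identities (giving $\mathbb{E}_t[(f_t(x_t)-f_t(x_*))_+] \geq f(x_t)-f(x_*) \geq 0$) and the local bound \eqref{spscv2}. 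Introducing the auxiliary function $\psi(u) := u^2/(Au+B)$, a brief computation shows $\psi$ is nonnegative, monotone nondecreasing and \emph{convex} on $[0,\infty)$ (e.g.\ by rewriting it as $u/A - B/A^2 + (B/A)^2/(Au+B)$), so taking full expectations and telescoping over $t=0,\dots,T-1$ yields
\begin{equation*}
\sum_{t=0}^{T-1} \psi\!\bigl(\mathbb{E}[f(x_t)-f(x_*)]\bigr) \;\leq\; D^2
\end{equation*}
via Jensen's inequality on the convex $\psi$.

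To finish, I apply Jensen's inequality to $f$ and then to $\psi$, combined with the monotonicity of $\psi$, to push the average inside: $\psi(\mathbb{E}[f(\bar x_T)-f(x_*)]) \leq \frac{1}{T}\sum_t \psi(\mathbb{E}[f(x_t)-f(x_*)]) \leq D^2/T$. The last step is to invert $\psi$, which is the ``new trick'' the authors advertise as \Cref{L:psi rate reciprocal}: from $y^2 \leq (D^2/T)(Ay+B)$ with $y \geq 0$, solving the quadratic and using $\sqrt{a+b}\leq \sqrt{a}+\sqrt{b}$ gives $y \leq D^2A/T + \sqrt{D^2 B/T}$, which is exactly the claimed bound. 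I expect the main subtleties to be (i) handling the positive-part carefully so that Cauchy--Schwarz produces exactly $(f(x_t)-f(x_*))^2$ on top and (ii) checking the convexity and monotonicity of $\psi$ when $A$ or $B$ vanishes, so that Jensen and the inversion step are valid in every regime covered by $A+B\neq 0$.
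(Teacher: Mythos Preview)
Your proposal is correct and follows essentially the same route as the paper's proof: establish the per-step contraction, pass to $\psi(r_t)$ with $r_t=\mathbb{E}[f(x_t)-\inf f]$, telescope, and invert $\psi$ via \Cref{L:psi rate reciprocal}. The only notable variation is that you obtain the key inequality $\mathbb{E}_t\bigl[(a_t)_+^2/b_t\bigr]\geq (\mathbb{E}_t[(a_t)_+])^2/\mathbb{E}_t[b_t]$ by Cauchy--Schwarz on $(U,V)=((a_t)_+/\|g_t\|,\|g_t\|)$, whereas the paper derives it as Jensen's inequality for the jointly convex map $(a,b)\mapsto (a)_+^2/b$ (their ``extended Titu's Lemma''); the two arguments yield the same bound, and your subsequent extra Jensen step on $\psi$ (to pass from $\mathbb{E}[\psi(\cdot)]$ to $\psi(r_t)$) is absorbed in the paper's version because they take full expectation before applying \eqref{spscv2}.
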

Next we specialize~\Cref{thm:better-bounds-sps} to the non-smooth and smooth settings, 
where the local bound \eqref{spscv2} is always true.
The proof of the next two corollaries can be found in Appendix \ref{S:proof SPS nonsmooth} and \ref{S:proof SPS smooth}.

\begin{restatable}[Non-smooth setting]{corollary}{corspsnonsmooth} \label{cor:spsnonsmooth}
Consider problem~\eqref{eq:prob}, and assume that the losses $f_\xi$ are locally Lipschitz in expectation (see Definition \ref{D:local lipschitz expectation}). In particular   there exists $G \geq 0$ such that
\begin{align} \label{eq:exp-lipschitz}
    \EE{\xi}{\norm{g_{\xi}(x)}^2} & \leq  G^2, 
\end{align}
for all $x \in \mathbb{B}_D(x_*)$. Then the averaged iterates of \SPS*
$\bar x_T := \tfrac{1}{T} \sum_{t=0}^{T-1} x_t$ 
verify: 
\begin{equation*}\label{eq:spsnonsmooth}
    \mathbb{E}\left[ f(\bar x_T) - \inf f \right]
    \leq
    \frac{G D}{\sqrt{T}}. 
\end{equation*}
\end{restatable}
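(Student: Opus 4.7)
The plan is to derive Corollary \ref{cor:spsnonsmooth} as a direct specialization of Theorem \ref{thm:better-bounds-sps}, by choosing the constants $A$ and $B$ in the bound \eqref{spscv2} so that they match the local Lipschitz hypothesis \eqref{eq:exp-lipschitz}. Concretely, I would set $A=0$ and $B=G^2$. With these choices, the assumption \eqref{spscv2} reduces to $\mathbb{E}_\xi[\|g_\xi(x)\|^2] \leq G^2$ for all $x \in \mathbb{B}_D(x_*)$, which is exactly what the local Lipschitz-in-expectation hypothesis provides. Since $G \geq 0$ and we may assume without loss of generality that $G>0$ (the case $G=0$ forces the iterates to be stationary and the bound is trivial), the non-degeneracy condition $A+B\neq 0$ of Theorem \ref{thm:better-bounds-sps} is satisfied.

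Having verified the hypotheses, I would invoke Theorem \ref{thm:better-bounds-sps} to obtain
\begin{equation*}
    \mathbb{E}\left[ f(\bar x_T) - \inf f \right]
    \;\leq\;
    \frac{D^2 A}{T} + \sqrt{\frac{D^2 B}{T}}
    \;=\; \sqrt{\frac{D^2 G^2}{T}}
    \;=\; \frac{GD}{\sqrt{T}},
\end{equation*}
which is the desired conclusion.

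There is no real obstacle here; the only subtle point worth flagging explicitly in the write-up is that Theorem \ref{thm:better-bounds-sps} only requires the gradient bound to hold on the ball $\mathbb{B}_D(x_*)$ of radius $D=\|x_0-x_*\|$, not globally. This is the same domain on which \eqref{eq:exp-lipschitz} is assumed, so no additional argument (such as a projection step or a global Lipschitz hypothesis) is needed. The almost-sure monotonicity of the iterates guaranteed by Theorem \ref{thm:better-bounds-sps} ensures that the bound \eqref{eq:exp-lipschitz} is in fact applied only at points where it holds, but this is already built into the statement of Theorem \ref{thm:better-bounds-sps} and does not need to be re-argued in the corollary's proof.
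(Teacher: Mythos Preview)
Your proposal is correct and follows essentially the same approach as the paper: the paper's proof also sets $A=0$ and $B=G^2$ and invokes Theorem~\ref{thm:better-bounds-sps} directly. The paper additionally cites Proposition~\ref{P:local lipschitz expectation implies expected gradient bounded} to justify that the local Lipschitz-in-expectation assumption yields \eqref{eq:exp-lipschitz}, but since the corollary statement already asserts this as part of the hypothesis, your omission of that step is harmless.
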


\begin{restatable}[Smooth setting]{corollary}{corspssmooth}\label{cor:spssmooth}
Consider problem~\eqref{eq:prob}, and assume that the losses $f_\xi$ are uniformly locally smooth (see Definition~\ref{D:locally smooth uniformly}). In particular there exists $L \geq 0$ s.t.
 \begin{equation}\label{eq:expsmooth-sps}
    \EE{\xi}{\norm{ \nabla f_{\xi}(x)}^2} \leq  2 L \big( f(x)- \mathbb{E}_\xi \left[ \inf f_\xi \right] \big), 
\end{equation} 
for all $x \in \mathbb{B}_D(x_*)$. Then the averaged iterates of \SPS*
$\bar x_T := \tfrac{1}{T} \sum_{t=0}^{T-1} x_t$ 
verify, with $\Delta_* := \inf f - \mathbb{E}_\xi \left[ \inf f_\xi \right]$:
\begin{equation*}\label{eq:spssmooth}
    \mathbb{E}\left[ f(\bar x_T) - \inf f \right]
    \leq
    \frac{2LD^2}{T} + \frac{\sqrt{2L \Delta_*} D}{\sqrt{T}}.
\end{equation*}
\if{
Consider problem~\eqref{eq:prob}, and assume that the losses $f_\xi$ are uniformly locally smooth (see Definition~\ref{D:locally smooth uniformly}). In particular there exists $L \geq 0$ s.t.
 \begin{equation}\label{eq:expsmooth-sps}
    \EE{\xi}{\norm{ \nabla f_{\xi}(x) - \nabla f_{\xi}(x^*)}^2} \leq  2 L \big( f(x)-\inf f \big), 
\end{equation} 
for all $x \in \mathbb{B}_D(x_*)$. Then the averaged iterates of \SPS*
$\bar x_T := \tfrac{1}{T} \sum_{t=0}^{T-1} x_t$ 
verify, with $\sigma_*^2 := \mathbb{E}_\xi \left[ \Vert \nabla f_\xi(x_*)\Vert^2 \right]$:
\begin{equation*}\label{eq:spssmooth}
    \mathbb{E}\left[ f(\bar x_T) - \inf f \right]
    \leq
    \frac{4LD^2}{T} + \frac{\sqrt{2} D\sigma_*}{\sqrt{T}}.%
\end{equation*}
}\fi 
\end{restatable}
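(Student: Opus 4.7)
The plan is to derive Corollary \ref{cor:spssmooth} as a direct specialization of Theorem \ref{thm:better-bounds-sps}. The only real work is to rewrite the uniform local smoothness bound \eqref{eq:expsmooth-sps} in the exact form \eqref{spscv2} required by the theorem, i.e.\ to identify the correct constants $A$ and $B$.

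First I would split the constant $\mathbb{E}_\xi[\inf f_\xi]$ appearing on the right-hand side of \eqref{eq:expsmooth-sps} by adding and subtracting $\inf f = f(x_*)$:
\begin{align*}
    \mathbb{E}_\xi[\|\nabla f_\xi(x)\|^2]
    &\leq 2L\bigl(f(x) - \mathbb{E}_\xi[\inf f_\xi]\bigr) \\
    &= 2L\bigl(f(x) - f(x_*)\bigr) + 2L\bigl(\inf f - \mathbb{E}_\xi[\inf f_\xi]\bigr) \\
    &= 2L\bigl(f(x) - f(x_*)\bigr) + 2L\Delta_*,
\end{align*}
for every $x \in \mathbb{B}_D(x_*)$. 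This is exactly \eqref{spscv2} with the constants $A = 2L$ and $B = 2L\Delta_*$, both nonnegative. Note $\Delta_* \geq 0$ by Jensen applied to $\inf$, which justifies $B \geq 0$; if both $L = 0$ and $\Delta_* = 0$ the claim is trivial (the iterates do not move, and $f(x_0) = f(x_*)$), so we may assume $A + B \neq 0$ and invoke the theorem.

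Then I would plug these constants into the conclusion of Theorem \ref{thm:better-bounds-sps}, which gives
\begin{equation*}
    \mathbb{E}[f(\bar x_T) - \inf f]
    \leq \frac{D^2 \cdot 2L}{T} + \sqrt{\frac{D^2 \cdot 2L\Delta_*}{T}}
    = \frac{2LD^2}{T} + \frac{\sqrt{2L\Delta_*}\,D}{\sqrt{T}},
\end{equation*}
which is the claimed bound.

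Since Theorem \ref{thm:better-bounds-sps} is invoked as a black box, there is no serious obstacle here; the only subtlety is the bookkeeping around the infima, namely explaining why the expected smoothness inequality \eqref{eq:expsmooth-sps} (which naturally features $\mathbb{E}_\xi[\inf f_\xi]$, since it comes from applying smoothness to each $f_\xi$ separately and taking expectations) can be rewritten in terms of $f(x) - f(x_*)$ at the cost of the additive term $2L\Delta_*$. The uniform local smoothness definition (Def.\ \ref{D:locally smooth uniformly}) is what ensures the inequality holds on all of $\mathbb{B}_D(x_*)$, which is precisely the domain required by Theorem \ref{thm:better-bounds-sps}.
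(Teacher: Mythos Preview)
Your proposal is correct and follows essentially the same approach as the paper: the paper invokes Proposition~\ref{P:local variance transfer:function noise} to obtain \eqref{spscv2} with $A=2L$ and $B=2L\Delta_*$, which is exactly the add-and-subtract $\inf f$ computation you carry out explicitly, and then plugs these constants into Theorem~\ref{thm:better-bounds-sps}. One small imprecision: $\Delta_* \geq 0$ follows from the pointwise bound $f_\xi(x) \geq \inf f_\xi$ (take expectation then infimum), not from Jensen's inequality, but this does not affect the argument.
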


For the non-smooth setting,  it is typically assumed that the loss functions are \emph{globally} Lipschitz, {uniformly} with respect to $\xi$, which in turn gives a global bound on the stochastic subgradients. 
Here instead we require very little: the convexity of our losses entails that $f_\xi$ is $G_\xi$-Lipschitz on $\mathbb{B}_D(x^*)$ (see Proposition \ref{P:convex functions are locally lipschitz}), so we only need to assume that the expectation $\EE{\xi}{G_\xi}$ is finite.
An advantage of this local Lipschitz assumption is that it is \emph{always  true for finite sums} (see Proposition \ref{P:finite family always loclip expectation}).
Another advantage of our local assumption is that it is compatible with strong convexity. 
Indeed there is no function which is both globally Lipschitz and strongly convex, see e.g. Lemma 9.13 in~\cite{garrigos2023handbook}.
We provide an additional result analogous to \cref{thm:better-bounds-sps} in the strongly convex setting in~\Cref{sec:stronglycvx}.

Despite this additional generality, we achieve a $\mathcal{O}(1/\sqrt{T})$ convergence rate which is the optimal lower bound for the class of convex Lipschitz functions~\cite{drori2016optimal}.
Currently this rate can only be achieved by combining adaptive methods such as AdaGrad~\cite{duchi2011adaptive} together with knowing and using $\| x_0-x_*\|$ to set the learning rate or a projection radius~\cite{orabonabook}. In contrast, our oracle requires knowing $f_{\xi}(x_*)$. 
We note that a weaker version of~\Cref{cor:spsnonsmooth} was first established in 
\citep[Thm.\ 2.3]{garrigos2023function}, where the losses $f_\xi$ are assumed to be globally Lipschitz. 

As for the smooth setting, it is typically assumed in the literature that the loss functions $f_{\xi}$ are \emph{globally} smooth, uniformly with respect to $\xi$ \cite{GowerRichBach2018,SGDstruct,gower_sgd_2019}.
Our result instead requires much less: all we need is that the losses $f_\xi$ are \emph{locally} smooth, and that their local smoothness constants are uniformly bounded with respect to $\xi$. 
We defer to \Cref{L:local smooth uniformly expected smoothness} in the appendix for a formal proof that such local smoothness implies \eqref{eq:expsmooth-sps}.
In particular, it is remarkable that our assumption is \emph{always true for finite sums} of $C^2$ losses (see Proposition \ref{P:C2 sum finite is uniformly locally smooth}).


Our smooth result in~\Cref{cor:spssmooth} is, as far as we know, the first $\mathcal{O}(1/\sqrt{T})$ anytime convergence rate for a stochastic variant of the Polyak step size, assuming only smoothness and convexity. Note that \SGD{} has a $\cO{\log(T)/\sqrt{T}}$ anytime rate in this setting, see~\Cref{sec:sps-smooth-details}.

Another interesting aspect of the convergence rate in~\Cref{cor:spsnonsmooth} is that it benefits from interpolation: when $\Delta_*=0$ the convergence rate in~\Cref{cor:spsnonsmooth} becomes $\mathcal{O}(1/T),$ which is the expected accelerated rate of \SGD{} under interpolation~\cite{fastersgd2019}.
We are unaware of prior work that establishes an anytime rate of convergence that is adaptive to interpolation.  
As a comparison, we illustrate in~\Cref{theo:SGDadaptsigma} in the appendix how the \emph{complexity} of \SGD{} can benefit from interpolation, to the price of knowing the value of the interpolation constant $\Delta_*$. 
We further contrast our rate to the best known anytime rate for \SGD{} and \texttt{SPS}$_{\max}$ in~\Cref{sec:sps-smooth-details}.


\begin{remark}[Finite sum] \label{rem:finitesum}
We emphazise that for finite-sum minimization $f = \tfrac{1}{m}\sum_{i=1}^m f_i$, our assumptions are drastically simplified.
The local Lipschitz assumption in \Cref{cor:spsnonsmooth} is automatically true ; and the local smoothness assumption in \Cref{cor:spssmooth} is true if the $f_i$ are $C^2$.
\end{remark}

We have shown that
\SPS* has  the optimal rate of convergence in the non-smooth setting, and  a fast adaptive anytime rate in the smooth setting. This motivates us to think of \SPS* as an idealized variant of the stochastic Polyak step size. In~\Cref{sec:approx} we show how several practical variants of the stochastic Polyak step size, that do not need access to $f_{\xi}(x_*),$ can be viewed as approximations of \SPS*.

\section{Momentum and the Iterate Moving Average Method}
\label{sec:iams}

The \SPS* method is missing one important and practical ingredient, which is momentum. Furthermore, our previous convergence only holds for the average iterate, whereas the last iterate is often preferred since it is used in practice.

Momentum is often presented as replacing the gradient with an exponential moving average of gradients as follows: for $\gamma_t > 0$ and $\beta_t \in [0,1)$, let
\begin{align}\label{eq:momentum}
    m_t = \beta_t m_{t-1} + g_t, \quad x_{t+1}  = x_t - \gamma_t m_t.
\end{align}
To derive our momentum variant of \SPS{}, 
we will make use of the equivalent reformulation given by
\begin{eqnarray}
z_{t} & = & z_{t-1}-\eta_{t}g_t,\label{eq:zup}\\
x_{t+1} & = &\frac{\lambda_{t+1}}{1+\lambda_{t+1}} x_{t}+\frac{1}{1+\lambda_{t+1}}z_{t}, \label{eq:xup}
\end{eqnarray}
where $\eta_t >0$ and $\lambda_t \in [0,1]$ are hyperparameters. Though not obvious, the $x_t$ iterates in~\eqref{eq:xup} are equivalent to the $x_t$ iterates of Momentum~\eqref{eq:momentum} by choosing a particular mapping between $ (\beta_t, \gamma_t)$ and $ (\lambda_{t}, \eta_{t})$, see \citet[Thm.\ 1]{pmlr-v157-defazio21a} and~\cref{lem:mom-and-iterate-av} for convenience.


Inspired by both~\citet{pmlr-v202-wang23l} and~\citet{Schaipp2024a}, we  now choose the learning rate $\eta_t$ in \eqref{eq:zup} that minimizes an upper bound on $D_t:=\norm{z_t -x_*}^2$.
We have
\begin{align}
   D_t\
    & = 
    D_{t-1}  -2\eta_t \dotprod{g_t, z_{t-1}-x_*} + \eta_t^2 \norm{g_t}^2.  \nonumber 
\end{align}
Using convexity we have that
\begin{align*}
     \dotprod{g_t, z_{t-1}-x_*} &=  \dotprod{g_t,x_t-x_*} + \dotprod{g_t,z_{t-1}-x_t}  \\
     & \geq  f_{t}(x_t) -f_{t}(x_*)  + \dotprod{g_t,z_{t-1}-x_t} . \end{align*}
     With this bound we have that
    \begin{align}
    \begin{split}
     & \hspace{-2ex} D_t \leq  D_{t-1}+ \eta_t^2 \norm{g_t}^2 \\
     & -2\eta_t \big[ f_{t}(x_t) -f_{t}(x_*)  + \dotprod{g_t,z_{t-1}-x_t} \big].
    \end{split}
    \label{eq:expsquare}
\end{align}
We will use this upper bound to choose an adaptive learning rate. Minimizing the right-hand side over $\eta_t\geq 0 $ gives
\begin{eqnarray}\label{eq:IMA-step}
    \eta_t = \frac{\big[ f_{t}(x_t) -f_{t}(x_*) +\dotprod{g_t,z_{t-1}-x_t}\big]_+}{\norm{g_t}^2}.
\end{eqnarray}
We refer to~\eqref{eq:xup} with the learning rate~\eqref{eq:IMA-step} as the \emph{Iterate Averaging Adaptive Method} (\IAM{}) method, for which we give the complete pseudo-code in \cref{alg:IAM}.


%
\begin{algorithm}[t]
    \caption{\IAM:\hspace{-0.5mm} Iterate Averaging Adaptive Method. }
    \label{alg:IAM}
\begin{algorithmic}
    \STATE \textbf{Input:} $z_{-1}=x_0 \in \R^d$, $\lambda_t > 0$, for $t = 0, \ldots, T$
    \FOR{$t=0$ to $T-1$}
    	\STATE $ \eta_t \;= \; \displaystyle\frac{\big[ f_{t}(x_t) -f_{t}(x_*)+\dotprod{g_t,z_{t-1}-x_t}\big]_+}{\norm{g_t}^2} $ 
		\STATE $ z_{t} \; = \; z_{t-1}-\eta_{t}\nabla f_{t}(x_{t}) $ \label{ln:zup}
		\STATE $x_{t+1}  \;= \; \displaystyle \frac{\lambda_{t+1}}{1+\lambda_{t+1}} x_{t}+\frac{1}{1+\lambda_{t+1}}z_{t} 
$ \label{ln:xup}
    \ENDFOR
    \STATE \textbf{Return:} $x_T$
\end{algorithmic}
\end{algorithm}

\subsection{Convergence Theorems}
Again  $D_t =  \norm{z_{t} - x_*}^2$.
Our proofs all start from plugging in the step size~\eqref{eq:IMA-step} into~\eqref{eq:expsquare} giving
\begin{align*}
   D_t\ & \leq
   D_{t-1}  -\frac{\big(f_{t}(x_t) -f_{t}(x_*)  + \dotprod{g_t,z_{t-1}-x_t} \big)_+^2}{ \norm{g_t}^2}. 
\end{align*}
\begin{lemma}\label{lem:iam-iterates-monotone}
    Let $f_{\xi}$ be convex for every $\xi$. The distances of iterates $z_t$ of \cref{alg:IAM} to a solution $x_* \in \R^d$ decreases monotonically, that is, with probability one
    \begin{eqnarray*}
        \norm{z_{t} - x_*}^2 \; \leq \; \norm{z_{t-1} - x_*}^2 \; \leq \; \cdots \; \leq \;  \norm{z_{0} - x_*}^2.
    \end{eqnarray*}
\end{lemma}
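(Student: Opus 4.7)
The plan is to read off the lemma directly from the computation that already appears in the excerpt, just above its statement. Starting from the identity for $D_t = \|z_t - x_*\|^2$ and using convexity of $f_t$, the paper derives the deterministic (pathwise) upper bound
\begin{equation*}
    D_t \;\le\; D_{t-1} + \eta_t^2 \|g_t\|^2 - 2\eta_t\bigl[f_t(x_t) - f_t(x_*) + \langle g_t, z_{t-1}-x_t\rangle\bigr],
\end{equation*}
which is quadratic and convex in $\eta_t \ge 0$. I would then simply substitute the IAM step size \eqref{eq:IMA-step}, which is by construction the minimizer of the right-hand side over $\eta_t \ge 0$, to obtain
\begin{equation*}
    D_t \;\le\; D_{t-1} - \frac{\bigl(f_t(x_t) - f_t(x_*) + \langle g_t, z_{t-1}-x_t\rangle\bigr)_+^2}{\|g_t\|^2}.
\end{equation*}

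The second term on the right is manifestly non-positive, so $D_t \le D_{t-1}$ holds pathwise (hence a fortiori with probability one). A short induction on $t$ then gives the full chain $\|z_t - x_*\|^2 \le \|z_{t-1}-x_*\|^2 \le \cdots \le \|z_0-x_*\|^2$. The only technical nuisance is the degenerate case $g_t = 0$, where the formula \eqref{eq:IMA-step} is undefined; as was done for \SPS* in \eqref{eq:SPS}, I would adopt the convention $\eta_t := 0$ in that event, under which $z_t = z_{t-1}$ and monotonicity is trivial at that step.

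There is no real obstacle here: convexity of $f_\xi$ is used exactly once, to pass from $\langle g_t, z_{t-1}-x_*\rangle$ to $f_t(x_t) - f_t(x_*) + \langle g_t, z_{t-1}-x_t\rangle$ via the subgradient inequality, and this has already been carried out in the display \eqref{eq:expsquare}. The lemma is thus an immediate consequence of the "minimize an upper bound on $\|z_t-x_*\|^2$" design principle that motivated the IAM step size in the first place — the monotonicity of the $z_t$'s (not the $x_t$'s) is precisely what this design buys us, and this is what will be exploited in the subsequent convergence proofs for \IAM.
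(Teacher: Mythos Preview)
Your proposal is correct and matches the paper's own argument: the paper derives exactly the inequality $D_t \le D_{t-1} - \frac{(f_t(x_t)-f_t(x_*)+\langle g_t, z_{t-1}-x_t\rangle)_+^2}{\|g_t\|^2}$ by plugging \eqref{eq:IMA-step} into \eqref{eq:expsquare}, and then reads off the monotonicity from the non-positivity of the second term (this is also repeated in the proof of Lemma~\ref{lem:descent}(i) in the appendix). Your handling of the degenerate case $g_t=0$ via the convention $\eta_t=0$ is consistent with the paper's treatment of the analogous issue for \SPS*.
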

This type of monotonicity for stochastic methods is very rare, with the only other example that we are aware of being \SPS* (cf.\ \cref{thm:better-bounds-sps}). To complete the convergence proofs, we will telescope the recurrence on $D_t$ and bound the gradient norm on the denominator. 

\subsection{Non-smooth Setting}
For our first result
we consider the setting where $f_{\xi}$ could be non-smooth,  thus  $g_{\xi}$ denotes a subgradient of $f_{\xi}$.

\begin{restatable}[Non-smooth setting]{theorem}{theononsmooth}\label{theo:nonsmooth}
Consider problem~\eqref{eq:prob},
and let $D := \Vert x_0 - x_* \Vert$.
Assume that the losses $f_\xi$ are locally Lipschitz in expectation. In particular there exists $G \geq 0$ such that \eqref{eq:exp-lipschitz} holds 
for all $x \in \mathbb{B}_D(x_*)$. 
Then the iterates of \cref{alg:IAM} (\IAM{}) started from $x_0$ with  $\lambda_t =t$ verify the \emph{last iterate} bound
\begin{align*}
\E{f(x_T) -f(x_*)}  &+\frac{1}{T+1}\sum_{t=1}^T t \,\E{B_{f}(x_{t-1},x_t) }  \nonumber\\
 & \quad  \leq  \frac{GD}{\sqrt{T+1}}, 
\end{align*}  
where $B_{f}(x,y) := f(x) -f(y)  -\dotprod{\nabla f(y),x-y}.$
\if{
    Consider the iterates of \IAM{} in \cref{alg:IAM} with the learning rate~\eqref{eq:IMA-step} and $\lambda_t =t.$  
     Let $f_{\xi}$ be convex for all $\xi.$ Let $D:=\norm{x_{0} - x_*}$,
\begin{align*}
    G^2 &\;:=\; \max_{x \in \mathbb{B}_D(x_*) } \; \mathbb{E}_{\xi}\norm{g_{\xi}(x)}^2,\\
    B_{f}(x,y) & := f(x) -f(y)  -\dotprod{\nabla f(y),x-y}.
\end{align*}
    The suboptimality of the \emph{last iterate} $x_T$ is bounded by
    \vspace{-0.2cm}
  \begin{align} \label{eq:nonsmoothconv}
\E{f(x_T) -f(x_*)}  &+\frac{1}{T+1}\sum_{t=1}^T t \,\E{B_{f}(x_{t-1},x_t) }  \nonumber\\
 & \quad  \leq  \frac{GD}{\sqrt{T+1}}. 
\end{align}  
}\fi 
\end{restatable}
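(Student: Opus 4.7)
The plan is to start from the per-step inequality derived just before Lemma~\ref{lem:iam-iterates-monotone}. Writing $A_t := f_t(x_t) - f_t(x_*) + \dotprod{g_t, z_{t-1}-x_t}$, the choice $\eta_t = (A_t)_+ / \norm{g_t}^2$ plugged into \eqref{eq:expsquare} yields
\[
\norm{z_t-x_*}^2 \;\leq\; \norm{z_{t-1}-x_*}^2 - \frac{(A_t)_+^2}{\norm{g_t}^2},
\]
using the identity $A_t (A_t)_+ = (A_t)_+^2$. Since $z_{-1}=x_0$, so that $\norm{z_{-1}-x_*}^2 = D^2$, telescoping this recursion from $t=0$ to $T$ and taking expectations gives the energy bound $\E{\sum_{t=0}^T (A_t)_+^2/\norm{g_t}^2} \leq D^2$.

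To turn this into a rate I first verify that $x_t \in \mathbb{B}_D(x_*)$ for every $t$: Lemma~\ref{lem:iam-iterates-monotone} already gives it for the $z_t$, and since $x_{t+1}$ is a convex combination of $x_t$ and $z_t$ (with coefficients in $[0,1]$), a one-line induction delivers the same for the $x_t$. Hypothesis~\eqref{eq:exp-lipschitz} therefore applies at every iterate, giving $\E{\norm{g_t}^2 \mid \mathcal{F}_{t-1}} \leq G^2$. I then apply Cauchy--Schwarz pathwise,
\[
\sum_{t=0}^{T} (A_t)_+ \;\leq\; \sqrt{\sum_{t=0}^{T} \tfrac{(A_t)_+^2}{\norm{g_t}^2}} \cdot \sqrt{\sum_{t=0}^{T} \norm{g_t}^2},
\]
followed by Cauchy--Schwarz in expectation ($\E{\sqrt{UV}} \leq \sqrt{\E{U}\E{V}}$) to conclude $\E{\sum_{t=0}^T (A_t)_+} \leq DG\sqrt{T+1}$.

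Finally I need a lower bound on $\E{\sum_t (A_t)_+}$ in terms of the quantity on the left of the theorem. The key identity produced by the choice $\lambda_t = t$ is $z_{t-1} - x_t = t(x_t - x_{t-1})$, which rewrites $A_t = f_t(x_t) - f_t(x_*) + t\dotprod{g_t, x_t - x_{t-1}}$. Taking conditional expectation over $\xi_t$ (using that $\E{g_t \mid \mathcal{F}_{t-1}}$ is a subgradient of $f$ at $x_t$) and introducing the Bregman divergence via $\dotprod{\nabla f(x_t), x_t - x_{t-1}} = f(x_t) - f(x_{t-1}) + B_f(x_{t-1}, x_t)$ delivers, with $F_t := f(x_t) - f(x_*)$,
\[
\E{A_t \mid \mathcal{F}_{t-1}} \;=\; (t+1) F_t - t F_{t-1} + t\,B_f(x_{t-1}, x_t).
\]
Summing over $t$ telescopes the first two terms into $(T+1)\E{F_T}$, yielding $\sum_{t=0}^T \E{A_t} = (T+1)\E{F_T} + \sum_{t=1}^T t\,\E{B_f(x_{t-1}, x_t)}$. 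Combining with $(A_t)_+ \geq A_t$ pathwise, the Cauchy--Schwarz bound from the previous paragraph, and dividing by $T+1$, gives the claim.

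The main obstacle is the positive part in $(A_t)_+$, which prevents a direct telescope: I bypass it by upper-bounding $\sum (A_t)_+$ via Cauchy--Schwarz applied to the energy bound, and lower-bounding $\sum A_t$ via the telescope identity. A secondary subtlety is keeping the $x_t$ inside $\mathbb{B}_D(x_*)$ so that the local Lipschitz hypothesis can be invoked at each iterate; this follows cleanly from the iterate-averaging formula together with the monotonicity of the $z_t$.
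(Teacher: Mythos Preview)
Your proof is correct and reaches the same bound, but the route differs from the paper's in a meaningful way. The paper first applies the extended Titu/Jensen inequality (Lemma~\ref{lem:titu}) to push the conditional expectation through $(A_t)_+^2/\norm{g_t}^2$, obtaining the ``derandomized'' recurrence \eqref{eq:recur-after-cond-exp} with $a_t := f(x_t)-f(x_*)+\dotprod{g(x_t),z_{t-1}-x_t}$; it then bounds the denominator by $G^2$, sums, and applies Titu again (with $b_t\equiv 1$) pathwise to lower-bound $\sum_t (a_t)_+^2$ by $\tfrac{1}{T+1}\bigl(\sum_t a_t\bigr)_+^2$, before invoking the telescoping identity of Lemma~\ref{lem:estimate-nominator} and a final Jensen step $\E{X}^2\le\E{X^2}$. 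You instead keep the stochastic $A_t$ throughout and use two Cauchy--Schwarz steps: one pathwise with the split $(A_t)_+ = \bigl((A_t)_+/\norm{g_t}\bigr)\cdot\norm{g_t}$, and one in expectation to separate $\E{\sqrt{UV}}$. Your lower bound on $\sum_t\E{A_t}$ via the conditional-expectation identity is exactly the content of Lemma~\ref{lem:bregmanview} combined with Lemma~\ref{lem:estimate-nominator}, just computed in one breath.

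What each buys: the paper's approach is more modular, reusing the joint convexity of $(x,y)\mapsto (x)_+^2/y$ that also drives the \SPS* analysis and the smooth \IAM{} proof. Your approach is slightly more elementary, needing only Cauchy--Schwarz rather than this joint-convexity lemma, and it bypasses the intermediate passage to the conditionally-expected $a_t$. Both are clean; the paper's version arguably generalizes more readily to the smooth case (where the denominator is bounded by an affine function of the loss rather than a constant), which is why it keeps the $(\,\cdot\,)_+^2/\norm{g_t}^2$ structure intact longer.
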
 

This rate of convergence is the same as \SPS* (\cref{cor:spsnonsmooth}), with two notable differences.
First this rate holds for the last iterate, as opposed to the average of the iterates.
Second, this rate for \IAM{} can be faster than that of \SPS* due to the additional Bregman divergence term.

\Cref{theo:nonsmooth} restricts the parameter choice of $\lambda_t =t,$ 
which when translated back (See~\Cref{Asec:iterate-averaging} for details) to the  momentum method~\eqref{eq:momentum} restricts the parameters $(\gamma_t, \beta_t)$ to $\beta_t  =  \frac{t}{t+1 } \frac{\eta_{t-1} }{\eta_t}$ and $\gamma_t =\frac{\eta_t}{t+2}$ for all $t$.
To allow for other parameter settings, we provide Thm.~\ref{thm:iaam-relax-nonsmooth} in the Appendix, which allows for any deceasing $(\lambda_t)_t$, but  does not establish a last-iterate convergence. 

\begin{figure*}[h!]
    \centering
     \includegraphics[width=0.7\textwidth]{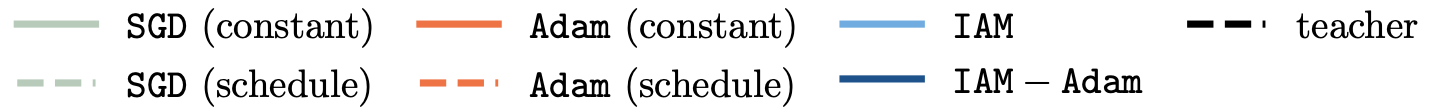}\\
    \begin{minipage}[t]{0.32\textwidth}
        \centering
        \includegraphics[width=\textwidth, trim=0 0 0 0, clip]{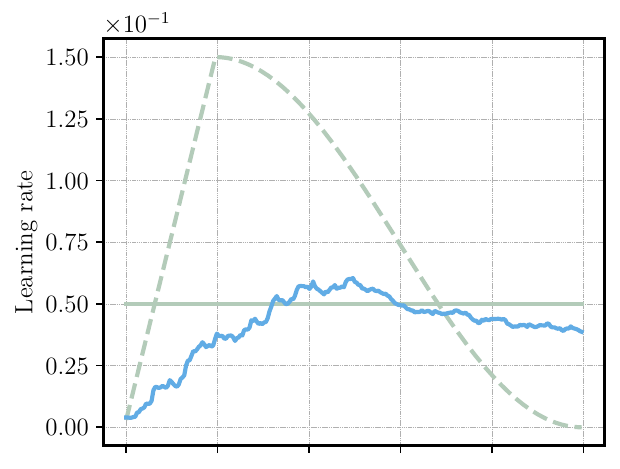}
    \end{minipage}
    \hfill
    \begin{minipage}[t]{0.32\textwidth}
        \centering
        \includegraphics[width=\textwidth, trim=0in 0 0 0, clip ]{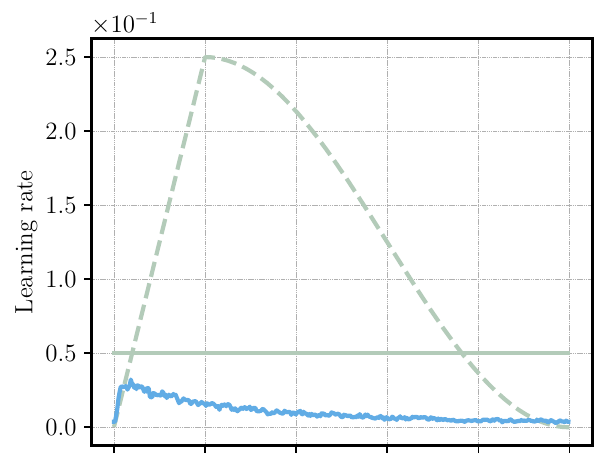}
    \end{minipage}
    \hfill
    \begin{minipage}[t]{0.32\textwidth}
        \centering
        \includegraphics[width=\textwidth, trim=0in 0 0 0, clip]{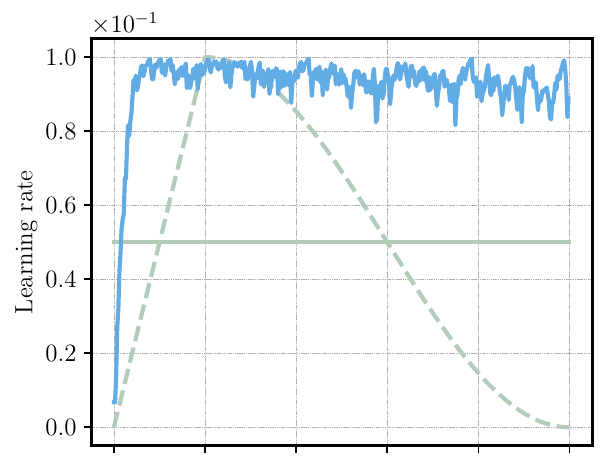}
    \end{minipage}

    \begin{minipage}[t]{0.32\textwidth}
        \centering
        \includegraphics[width=\textwidth, trim=0 0 0 0, clip]{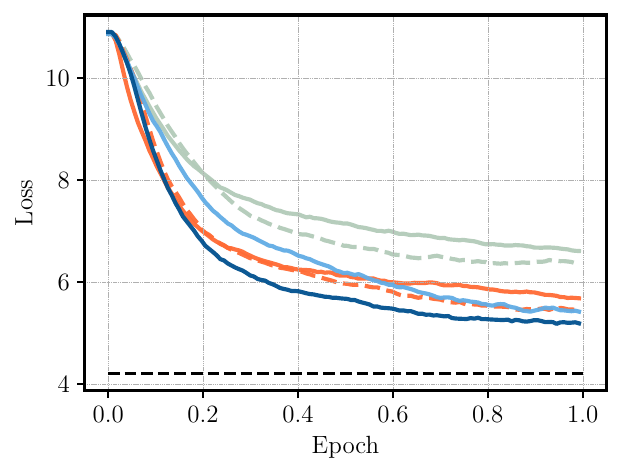}
        {\small  \texttt{tinyShakespeare}}
    \end{minipage}
    \hfill
        \begin{minipage}[t]{0.32\textwidth}
        \centering
        \includegraphics[width=\textwidth, trim=0in 0.0 0 0, clip]{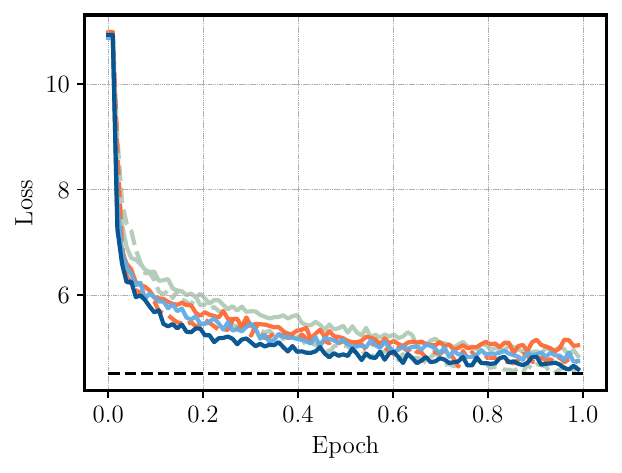}
        {\small  \texttt{PTB}}
    \end{minipage}
    \hfill
    \begin{minipage}[t]{0.32\textwidth}
        \centering
        \includegraphics[width=\textwidth, trim=0 0 0 0, clip]{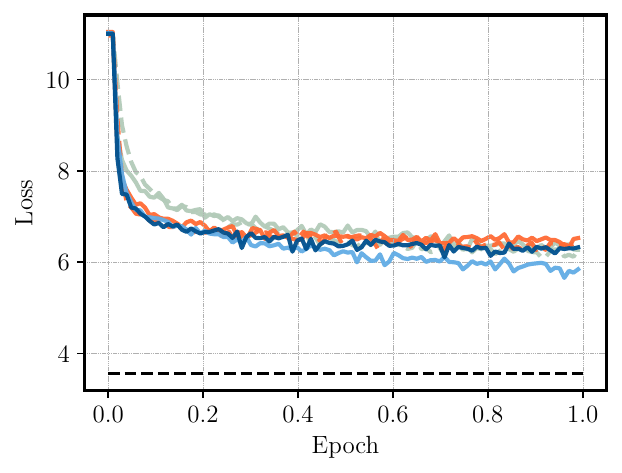}
        {\small  \texttt{Wikitext2}}
    \end{minipage}

    \caption{Distilling a teacher \GPT{} on three datasets. Adaptive learning rate of \IAM{} and learning rates of \SGD{} \textbf{(top)} and cross-entropy training loss \textbf{(bottom)}. Black line marks the average teacher loss.}
    \label{fig:distill}
\end{figure*}

\subsection{Smooth Setting}
Here we consider the setting where we assume that the loss functions $f_{\xi}$ satisfy a local \emph{smoothness} condition.

\begin{restatable}[Smooth setting]{theorem}{theosmooth}\label{theo:smooth}
Consider problem~\eqref{eq:prob}, and let $D:= \Vert x_0 - x_* \Vert$.
Assume that the losses $f_\xi$ are uniformly locally smooth. In particular there exists $L \geq 0$ such that
\eqref{eq:expsmooth-sps} holds for all $x \in \mathbb{B}_D(x_*)$. 
Then the iterates of \cref{alg:IAM} (\IAM{}) started from $x_0$ with  $\lambda_t =t$ verify this \emph{last iterate} bound, with $\Delta_* := \inf f - \mathbb{E}_\xi \left[ \inf f_\xi \right]$:
   \begin{align*}
        \E{f(x_{T-1}) -f(x_*)}
       &\le \frac{2LD^2 (\log(T)+1)}{T} + \frac{\sqrt{2L\Delta_*}D}{\sqrt{T}}. 
   \end{align*}
\if{ 
Let $f_{\xi}$ be convex for all $\xi$. 
Assume local \emph{expected smoothness}~\eqref{eq:expsmooth-sps}
holds.
Let $x_t$ be the iterates of  \cref{alg:IAM} (\IAM{}) with $\lambda_t=t$. It holds 
   \begin{align}\label{eq:theosmooth}
        \E{f(x_{T-1}) -f(x_*)}
       &\le \frac{2LD^2 (\log(T)+1)}{T} \nonumber  \\
       &\quad + \frac{\sqrt{2L\sigma_*^2}D}{\sqrt{T}}.
   \end{align}
}\fi
\end{restatable}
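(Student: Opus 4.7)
The plan is to mirror the non-smooth proof of \Cref{theo:nonsmooth} for \IAM{}, replacing the uniform subgradient bound by the local expected smoothness bound \eqref{eq:expsmooth-sps}. Starting from the one-step recursion derived in Section~\ref{sec:iams},
$$D_t \leq D_{t-1} - \frac{(\delta_t)_+^2}{\|g_t\|^2}, \quad \delta_t := f_t(x_t) - f_t(x_*) + \dotprod{g_t, z_{t-1} - x_t}, \; D_t := \|z_t - x_*\|^2,$$
the choice $\lambda_t = t$ yields the iterate-averaging identity $z_{t-1} - x_t = t(x_t - x_{t-1})$. Taking $\E{\cdot \mid \mathcal{F}_{t-1}}$, using $\E{g_t \mid \mathcal{F}_{t-1}} = \nabla f(x_t)$ and the Bregman identity $\dotprod{\nabla f(x_t), x_t - x_{t-1}} = f(x_t) - f(x_{t-1}) + B_f(x_{t-1}, x_t)$ gives, for $t \geq 1$,
$$\bar\delta_t := \E{\delta_t \mid \mathcal{F}_{t-1}} = (t+1)\,a_t - t\,a_{t-1} + t\,B_f(x_{t-1}, x_t), \quad a_t := f(x_t) - f(x_*),$$
with $\bar\delta_0 = a_0$. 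Since $B_f \geq 0$, summing in $t$ telescopes exactly to $\sum_{t=0}^{T-1} \E{\bar\delta_t} = T\,\bar a_{T-1} + \sum_{t=1}^{T-1} t\, \E{B_f(x_{t-1},x_t)} \geq T\,\bar a_{T-1}$, so a last-iterate bound reduces to upper-bounding $\sum_t \E{\bar\delta_t}$.

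For the upper bound, I would apply the conditional Cauchy-Schwarz $\E{(\delta_t)_+ \mid \mathcal{F}_{t-1}}^2 \leq \E{\|g_t\|^2 \mid \mathcal{F}_{t-1}} \cdot (D_{t-1} - \E{D_t \mid \mathcal{F}_{t-1}})$ followed by the local smoothness bound $\E{\|g_t\|^2 \mid \mathcal{F}_{t-1}} \leq 2L(a_t + \Delta_*)$ from \eqref{eq:expsmooth-sps}, together with the trivial $\bar\delta_t \leq \E{(\delta_t)_+ \mid \mathcal{F}_{t-1}}$. Taking full expectation, summing in $t$, and a second (discrete) Cauchy-Schwarz against the telescope $\sum_{t=0}^{T-1}\left(\E{D_{t-1}} - \E{D_t}\right) \leq D^2$ yields, writing $S_T := \sum_{t=0}^{T-1}\bar a_t$,
$$T\,\bar a_{T-1} \leq D\sqrt{2L(S_T + T\Delta_*)} \leq D\sqrt{2L S_T} + D\sqrt{2LT\Delta_*}.$$
The noise term $D\sqrt{2LT\Delta_*}$ already matches the claimed $\sqrt{2L\Delta_*}D/\sqrt{T}$ part after dividing by $T$.

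The main obstacle is that the right-hand side depends on $S_T$, which is itself a sum of the quantities we are trying to bound. To close this implicit inequality I would use a bootstrap: writing $\bar a_{T-1} = S_T - S_{T-1}$, the inequality above holds for every horizon $T' \leq T_0$, so dividing by $T$ and summing over $T = 1, \ldots, T_0$ (using monotonicity $S_T \leq S_{T_0}$) gives
$$S_{T_0} \leq D\sqrt{2L\,S_{T_0}}\,(1 + \log T_0) + 2D\sqrt{2L\Delta_* T_0},$$
where the logarithm emerges from $\sum_{T=1}^{T_0} 1/T = O(\log T_0)$ and $\sum_{T=1}^{T_0} 1/\sqrt{T} = O(\sqrt{T_0})$. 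Solving this quadratic in $\sqrt{S_{T_0}}$ yields $S_{T_0} = O(L D^2 \log^2 T_0 + D\sqrt{L \Delta_* T_0})$, and substituting back into the main inequality and dividing by $T_0$ produces the advertised $O\bigl(LD^2 \log T_0 / T_0 + \sqrt{L\Delta_*}\,D/\sqrt{T_0}\bigr)$ rate. The most delicate points are (i)~tracking the $(\cdot)_+$ truncations through the repeated Cauchy-Schwarz steps, and (ii)~tightening constants to recover the stated factor $2LD^2(\log T + 1)$; here the convex-monotone inversion lemma (\Cref{L:psi rate reciprocal}) used for the \SPS* smooth bound in \Cref{cor:spssmooth} provides a cleaner alternative to the bootstrap, which can be imported essentially verbatim to yield the precise constants.
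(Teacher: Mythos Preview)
Your route to the implicit inequality $T\,\bar a_{T-1} \leq D\sqrt{2L(S_T + T\Delta_*)}$ is correct and coincides with what the paper obtains; the paper gets there via the summed Titu recurrence of \cref{lem:descent},~\ref{item:summed} combined with \cref{lem:estimate-nominator}, rather than your per-step conditional Cauchy--Schwarz followed by a discrete Cauchy--Schwarz over $t$, but the two derivations land on exactly the same bound.

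The gap is in how you close this implicit relation. Your bootstrap (divide by $T'$, sum, use $S_{T'}\leq S_{T_0}$, solve the quadratic in $\sqrt{S_{T_0}}$) is valid but leaves an extra $O(T^{-3/4})$ cross term, and the proposed remedy via \cref{L:psi rate reciprocal} does \emph{not} apply here: that lemma inverts a relation $\psi(r)\leq C$ for a \emph{single} unknown $r$, whereas your inequality couples the last-iterate gap $\bar a_{T-1}$ on the left with the distinct running sum $S_T$ on the right, so there is no single quantity to invert. The paper's actual device is the AdaGrad-type inequality \cref{lem:adagrad}. Setting $c_t := \bar a_t + \Delta_*$ and $\tilde S_k := \sum_{t=0}^k c_t$, the key inequality rearranges to
\[
\frac{c_k}{\sqrt{\tilde S_k}} \;\leq\; \frac{\sqrt{2L}\,D}{k+1} \;+\; \frac{\Delta_*}{\sqrt{\tilde S_k}}.
\]
Summing in $k$, one applies $\sqrt{\tilde S_K}\leq\sum_{k} c_k/\sqrt{\tilde S_k}$ from \cref{lem:adagrad} on the left, and bounds $\Delta_*/\sqrt{\tilde S_k}\leq\sqrt{\Delta_*/(k{+}1)}$ on the right using $\tilde S_k\geq(k{+}1)\Delta_*$. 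This gives $\sqrt{\tilde S_K}\leq\sqrt{2L}\,D(\log(K{+}1)+1) + 2\sqrt{\Delta_*(K{+}1)}$ directly, with no quadratic solving, and back-substituting into $\bar a_K \leq \sqrt{2L}\,D\,\sqrt{\tilde S_K}/(K{+}1)$ yields the stated constants cleanly.
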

Analogous to the \SPS* result in \Cref{cor:spssmooth}, the above 
shows that \IAM{} benefits from interpolation, since \Cref{theo:smooth} gives a $\tilde{\mathcal{O}}(1/T)$ convergence in the case of interpolation ($\Delta_* =0$).  
In contrast to \SPS{}* 
the rate of convergence of \IAM{}  has an additional $\log(T+1)$  on the non-dominant $\mathcal{O}(\tfrac{1}{T+1})$ term.

\section{Experiments}

Here we present several numerical results. First, we test the extent of our convergence theory for \SPS* and \IAM{}. According to Remark \ref{rem:finitesum}, both \SPS* and \IAM{} will converge for differentiable convex finite-sum problems, even when the loss is non-smooth and non-Lipschitz. We test this on Poisson regression in~\Cref{sec:exp-convex-only}, where we show that \IAM{} converges to a loss value comparable to \texttt{L-BFGS}, and to \SGD{} with the best step size chosen from a grid.
In~\cref{sec:exp-misspecification} we investigate how \IAM{} behaves when $f_{\xi}(x_*)$ is wrongly specified (or guessed inaccurately).
Finally, in~\Cref{sec:distillation} we use \IAM{} and an \texttt{Adam} variant of \IAM{} for model distillation. 


\subsection{Black-box Model Distillation}
\label{sec:distillation}

Here we consider a variant of knowledge distillation where the goal is to train a small model (called \emph{student}) while having access to a pretrained, large model (called \emph{teacher}).

The main idea we propose here is that, when training the student, the loss of teacher model for a given batch $\xi$ can be used as an approximation of $f_\xi(x_*)$.

For a given batch $\xi\sim \mathcal{P}$ from the training set of the student, denote by $f_{\xi}^s(x)$ the loss function\footnote{This is usually the cross-entropy loss for the language modeling tasks we consider.} of the student with weights $x$ for batch $\xi$. Denote by $f_{\xi}^\tau$ the loss of the pretrained teacher model for the same batch.
Since the teacher is a significantly larger and more expressive model, we can assume that even after training the student, its loss will not fall below $f_{\xi}^\tau$. Thus, we use $f_{\xi}^s(x_*) \approx f_{\xi}^\tau$ for  the \IAM{} method (\cref{alg:IAM}) to train the student.

Many variations of knowledge distillation have been proposed \citep{Hinton2015,Beyer2022,hsieh-etal-2023-distilling}.
The variant we present here is slightly different to previous works in that it requires only access to the batch loss of the teacher model (and not to the logits). We discuss this relationship in more detail in \cref{asec:distildetails}.



We use three different datasets, \texttt{tinyShakespeare}, \texttt{PTB} and \texttt{Wikitext2}.
As teacher model we use a pretrained \GPT{} model with 774M parameters \citep{radford2019language,Wolf2020}. The student models are much smaller \GPT{} architectures. All details are deferred to  \Cref{asec:distildetails}.
Our results are in Figure~\ref{fig:distill}.
We compare \IAM{} and \texttt{IAM-Adam} (\IAM{} with an \texttt{Adam} preconditioner, see~\Cref{asec:iam-adam}) to \texttt{SGD} and \texttt{Adam} with (i) constant learning rate, and (ii) \textit{warmup+cosine-decay} schedule; tuning procedures are detailed in \cref{asec:distildetails}. We trained for only one pass over the data (epoch), and consequently we used no weight decay.

We find that both versions of \IAM{} achieve the best resulting loss on all three problems.
Consequently, when we are able to load a suitable pretrained teacher model, we find that \IAM{} is able to efficiently train a student model without any hyperparameter tuning.
\vspace{-0.3cm}
\section{Limitations}
The limitation of our methods is that they require the batch loss at an optimal point. Because of this, outside of applications that interpolate, or our model distillation setup, it could be hard to find direct applications for \SPS* and \IAM{}.


\bibliography{references}
\bibliographystyle{icml/icml2025}

\paragraph{Acknowledgements}
Dimitris Oikonomou MINDS Fellowship. Nicolas Loizou acknowledges support from CISCO Research and Vector Institute for Artificial Intelligence.

\newpage
\appendix
\onecolumn

\tableofcontents

\newpage

\section{Bits of Convex Analysis and Details on our Hypotheses}
\label{sec:convex}

Here we introduce and define some of the more technical bits of convex analysis we need throughout the paper.
In particular we make precise the technical assumptions that we are making on the functions $f_\xi$, which correspond to the assumptions made in the Section 9 of \citet{garrigos2023handbook}.

\subsection{Subgradients}

Throughout our paper, we consider for every sampled data $\xi$ a loss function  $f_{\xi} : \mathbb{R}^d \to \mathbb{R}$ taking finite values.
We also always assume that $f_\xi$ is convex, which implies that it is continuous on $\mathbb{R}^d$ (see Proposition  3.5 in \cite{Pey}).
Nevertheless, we do not always assume that our loss functions $f_\xi$ are differentiable.
For example, $f_{\xi}(x)$ could be defined with an absolute value, such as $f_{\xi}(x) = | w_{\xi}^\top x -y_{\xi}|$ where $ w_{\xi}$ is a sample feature vector and $y_{\xi}$ a target value. 
In general, instead of using gradients we will making use of  \emph{subgradients}, which play a similar role.

\begin{definition}\label{D:subdifferential convex}
Let $f : \mathbb{R}^d \to \mathbb{R}$, and $x \in \mathbb{R}^d$.
We say that $g \in  \mathbb{R}^d$ is a \textbf{subgradient} of $f$ at $x\in\R^d$ if 
\begin{equation*}\label{eq:defsubgrad}
    \mbox{for every }y \in \mathbb{R}^d, \quad
    f(y) - f(x) - \langle g, y-x \rangle \geq 0.
\end{equation*}
\end{definition}

Since our loss functions $f_\xi$ are convex and continuous, we are guaranteed that at every $x \in \mathbb{R}^d$, there exists some subgradient that we will note $g_\xi(x)$
(the existence of such subgradient is stated in {[Prop.\ 3.25]\cite{Pey}} and [Cor.\ 8.40]\cite{BauCom}).
In our proofs we will often need to take the expectation of these subgradients $g_{\xi}(x)$ with respect to $\xi$.
To be able to do this, we must formally assume throughout that the function $\xi \mapsto g_\xi(x)$ is measurable for every $x \in \R^d$. 
This will for instance allow us to say that the expectation of $g_{\xi}(x)$ is a subgradient of $f$ at $x$ (see Lemma 9.5 in \cite{garrigos2023handbook}).
 

\subsection{Local Lipschitzness}

\begin{definition}
    We say that $f : \mathbb{R}^d \to \mathbb{R}^p$ is locally Lipschitz continuous if, for every $x \in \mathbb{R}^d$, there exists a neighbourhood $\mathbb{B}_\delta(x)$ and $G_x \geq 0$ such that $f$ is $G_x$-Lipschitz continuous on $\mathbb{B}_\delta(x)$.
\end{definition}

As a matter of fact, convex functions are locally Lipschitz continuous.

\begin{proposition}\label{P:convex functions are locally lipschitz}
    If $f : \mathbb{R}^d \to \mathbb{R}$ is convex, then it is locally Lipschitz continuous.
\end{proposition}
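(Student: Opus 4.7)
The plan is to establish local Lipschitz continuity via the classical two-step argument: first show that $f$ is locally bounded, then exploit convexity along segments to convert local boundedness into a local Lipschitz estimate.

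First I would fix $x_0 \in \mathbb{R}^d$ and obtain an upper bound for $f$ on some ball $\mathbb{B}_{2r}(x_0)$. To do this, I would enclose $\mathbb{B}_{2r}(x_0)$ inside a cube centered at $x_0$ with vertices $v_1, \ldots, v_N$ (where $N = 2^d$). Every point in the cube is a convex combination of these vertices, so Jensen's inequality (i.e. repeated application of convexity) gives $f(y) \leq \max_i f(v_i) =: M$ on the cube, and therefore on $\mathbb{B}_{2r}(x_0)$.

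Next I would obtain a lower bound on the smaller ball $\mathbb{B}_{r}(x_0)$ via a reflection trick. For any $y \in \mathbb{B}_r(x_0)$, the reflected point $y' := 2x_0 - y$ also lies in $\mathbb{B}_r(x_0) \subset \mathbb{B}_{2r}(x_0)$, and convexity applied to the identity $x_0 = \tfrac{1}{2}(y + y')$ gives $f(x_0) \leq \tfrac{1}{2}(f(y) + f(y'))$, whence $f(y) \geq 2f(x_0) - f(y') \geq 2f(x_0) - M$. Combining with the upper bound, $|f|$ is bounded by some constant $M'$ on $\mathbb{B}_r(x_0)$.

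Finally I would derive the Lipschitz estimate on $\mathbb{B}_{r/2}(x_0)$. Given distinct $x, y \in \mathbb{B}_{r/2}(x_0)$, extend the segment past $y$ to the point $z := y + \tfrac{r}{2}\,\tfrac{y-x}{\|y-x\|}$, which still lies in $\mathbb{B}_r(x_0)$. Writing $y$ as the convex combination $y = (1-t)x + tz$ with $t = \tfrac{\|y-x\|}{\|y-x\| + r/2}$ and applying convexity yields
\[
f(y) - f(x) \;\leq\; t\bigl(f(z) - f(x)\bigr) \;\leq\; \frac{2\|y-x\|}{r}\cdot 2M' \;=\; \frac{4M'}{r}\,\|y-x\|.
\]
Swapping $x$ and $y$ gives the symmetric bound, so $f$ is $(4M'/r)$-Lipschitz on $\mathbb{B}_{r/2}(x_0)$, as required.

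The only mildly nontrivial step is the local upper bound via the cube construction; the remaining steps are routine manipulation of convex combinations. No real obstacle is anticipated since this is a textbook result in convex analysis.
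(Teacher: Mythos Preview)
Your proof is correct and follows the standard textbook argument. The paper's own proof is simply a one-line citation to Corollary~8.41 in Bauschke--Combettes, so your self-contained argument (local upper bound via cube vertices, lower bound via reflection, then the segment-extension trick to extract the Lipschitz constant) is in fact more detailed than what the paper provides, and is essentially the proof that the cited reference contains.
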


\begin{proof}
    See Corollary~8.41 in \cite{BauCom}.
\end{proof}

By compactness, this definition  is equivalent to ask for Lipschitzness over any bounded set.

\begin{lemma}\label{L:local lip equiv liip on bounded}
    A function $f : \mathbb{R}^d \to \mathbb{R}^p$ is locally Lipschitz continuous if and only if for every bounded set $\mathcal{B} \subset \mathbb{R}^d$ there exists $G_\mathcal{B} \geq 0$ such that $f$ is $G_\mathcal{B}$-Lipschitz continuous on $\mathcal{B}$.
\end{lemma}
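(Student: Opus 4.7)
The plan is to prove the two directions of the equivalence separately, the forward direction being immediate and the reverse direction requiring a compactness-plus-chaining argument.

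\textbf{Easy direction.} For ($\Leftarrow$), I would observe that any open ball $\mathbb{B}_\delta(x)$ is bounded, so Lipschitzness on every bounded set directly yields the definition of local Lipschitzness with neighbourhood $\mathbb{B}_\delta(x)$ and constant $G_{\mathbb{B}_\delta(x)}$.

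\textbf{Hard direction.} For ($\Rightarrow$), let $\mathcal{B}$ be a bounded set. I would enclose it in a \emph{compact convex} superset, for instance the closed ball $K := \overline{\mathbb{B}_R(0)}$ for $R$ large enough that $\mathcal{B} \subset K$. Since $f$ is locally Lipschitz, to each $x \in K$ we associate a radius $\delta_x > 0$ and a constant $G_x \geq 0$ such that $f$ is $G_x$-Lipschitz on $\mathbb{B}_{\delta_x}(x)$. The open cover $\{\mathbb{B}_{\delta_x/2}(x) : x \in K\}$ of the compact set $K$ admits a finite subcover $\{\mathbb{B}_{\delta_i/2}(x_i)\}_{i=1}^N$, where I write $\delta_i := \delta_{x_i}$ and $G_i := G_{x_i}$. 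Set
\[
\delta := \min_{1 \leq i \leq N} \frac{\delta_i}{2}, \qquad G := \max_{1 \leq i \leq N} G_i.
\]

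\textbf{Short-range step.} Next I would show that for any two points $p, q \in K$ with $\|p-q\| \leq \delta$, one has $\|f(p)-f(q)\| \leq G\|p-q\|$. Indeed, $p$ lies in some $\mathbb{B}_{\delta_i/2}(x_i)$, so $\|q - x_i\| \leq \|q-p\| + \|p-x_i\| \leq \delta + \delta_i/2 \leq \delta_i$; hence both $p$ and $q$ lie in $\mathbb{B}_{\delta_i}(x_i)$, where $f$ is $G_i$-Lipschitz.

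\textbf{Chaining step.} Finally, for general $x, y \in K$, I would subdivide the segment $[x,y]$ into $n := \lceil \|x-y\|/\delta\rceil$ equal parts via $p_k := x + (k/n)(y-x)$ for $k = 0, \ldots, n$. By \emph{convexity} of $K$ each $p_k \in K$, and consecutive points are at distance $\|x-y\|/n \leq \delta$. The triangle inequality combined with the short-range step yields
\[
\|f(x)-f(y)\| \leq \sum_{k=1}^n \|f(p_k)-f(p_{k-1})\| \leq G \sum_{k=1}^n \|p_k-p_{k-1}\| = G\|x-y\|.
\]
Thus $f$ is $G$-Lipschitz on $K$, and in particular on $\mathcal{B} \subset K$.

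\textbf{Main obstacle.} The only subtle point is that local Lipschitzness on individual balls does not a priori control the values of $f$ across different balls; this is why the convexity of the enclosing compact set $K$ matters, since it guarantees that the interpolating points $p_k$ all stay inside $K$ so that the short-range bound applies at each step. Once this chaining device is in place, the rest is routine.
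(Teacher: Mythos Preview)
Your proof is correct and follows the same compactness strategy as the paper: pass to a compact superset, extract a finite subcover from the local Lipschitz balls, and take the maximum of the finitely many constants. The difference is that the paper stops at ``take $G_\mathcal{B} = \max_i G_{x_i}$'' and declares the conclusion, whereas you go further and supply the half-radius cover plus the chaining along the segment $[x,y]$ inside a \emph{convex} compact superset. This extra care is not cosmetic: without the chaining step (or an equivalent argument such as a Lebesgue-number lemma), knowing that $f$ is $G$-Lipschitz on each ball of a finite cover does not by itself control $\|f(x)-f(y)\|$ when $x$ and $y$ sit in different balls. So your version actually fills a gap that the paper's sketch glosses over, and your observation that convexity of the enclosing set is what keeps the interpolation points $p_k$ inside the region where the short-range bound applies is exactly the right point to highlight.
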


\begin{proof}
    One implication is trivial. The other implication is standard. Assuming that $f$ is locally Lipschitz, and given any bounded set $\mathcal{B} \subset \mathbb{R}^d$, we can prove that $f$ is Lipschitz on $\mathcal{B}$.
    To see this, consider $\mathcal{K}$ the closure of $\mathcal{B}$ which is compact. 
    From the local Lipschitz assumption we are given for every $x \in \mathcal{K}$ a certain $\delta_x$ and $G_x$ such that $f$ is $G_x$-Lipschitz over $\mathbb{B}_{\delta_x}(x)$.
    It is clear that 
    \begin{equation*}
        \mathcal{B} \subset \mathcal{K} \subset \bigcup\limits_{x \in \mathcal{B}} \text{int } \mathbb{B}_{\delta_x}(x).
    \end{equation*}
    From the compactness of $\mathcal{K}$, there exists a finite number of points $x_1, \dots, x_n$ 
    such that 
    \begin{equation*}
        \mathcal{B} \subset \mathcal{K} \subset \bigcup\limits_{i=1}^n \text{int } \mathbb{B}_{\delta_{x_i}}(x_i).
    \end{equation*}
    The conclusion follows by taking $G_\mathcal{B} = \max\limits_{i=1, \dots, n} G_{x_i}$.
\end{proof}

Lipschitzness of a convex function is tightly connected to the boundedness of its subgradients, as we see next.

\begin{definition}
    Let $f: \mathbb{R}^d \to \mathbb{R}$ be convex, and $G \geq 0$.
    We say that $f$ has $G$-bounded subgradients over $\mathcal{B} \subset \mathbb{R}^d$ if, for every $x \in \mathcal{B}$, for every $g \in \partial f(x)$, we have $\Vert g \Vert \leq G$.
\end{definition}

\begin{lemma}\label{L:loclip equiv bounded gradient}
    Let $f : \mathbb{R}^d \to \mathbb{R}$ be convex. Let $\mathcal{U} \subset \mathbb{R}^d$ be any open set. Then the following is equivalent:
    \begin{enumerate}
        \item $f$ is $G$-Lipschitz over $\mathcal{U}$;
        \item $f$ has $G$-bounded subgradients over $\mathcal{U}$.
    \end{enumerate}
\end{lemma}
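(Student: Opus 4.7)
The plan is to prove the two implications separately, using only the definition of subgradient (Def.~\ref{D:subdifferential convex}) and the Cauchy--Schwarz inequality, exploiting the fact that the subgradient inequality holds \emph{globally}, not just on $\mathcal{U}$.

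\textbf{(2) $\Rightarrow$ (1).} I would begin with this direction since it does not even require $\mathcal{U}$ to be open. Fix $x, y \in \mathcal{U}$ and pick any $g_x \in \partial f(x)$ and $g_y \in \partial f(y)$ (existence is guaranteed since $f$ is convex and finite-valued). The subgradient inequality gives
\begin{equation*}
    f(y) - f(x) \geq \langle g_x, y - x\rangle
    \quad \text{and} \quad
    f(x) - f(y) \geq \langle g_y, x - y\rangle.
\end{equation*}
Applying Cauchy--Schwarz together with $\Vert g_x\Vert, \Vert g_y\Vert \leq G$ to both inequalities yields $-G\Vert y - x\Vert \leq f(y) - f(x) \leq G \Vert y - x\Vert$, which is the $G$-Lipschitz bound.

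\textbf{(1) $\Rightarrow$ (2).} Here I use openness of $\mathcal{U}$. Fix $x \in \mathcal{U}$ and $g \in \partial f(x)$; I want to show $\Vert g \Vert \leq G$. Choose $\delta > 0$ such that $\mathbb{B}_\delta(x) \subset \mathcal{U}$, and fix a unit vector $d \in \mathbb{R}^d$. For $\epsilon \in (0,\delta)$, set $y = x + \epsilon d \in \mathcal{U}$. The subgradient inequality provides $f(y) - f(x) \geq \epsilon \langle g, d\rangle$, while $G$-Lipschitzness on $\mathcal{U}$ provides $f(y) - f(x) \leq G \epsilon$. Dividing by $\epsilon > 0$ gives $\langle g, d \rangle \leq G$ for every unit vector $d$; choosing $d = g/\Vert g\Vert$ (if $g \neq 0$, else the bound is trivial) yields $\Vert g \Vert \leq G$.

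There is no real obstacle here: this is a textbook-style equivalence and the only subtlety is ensuring that the subgradient inequality is applied at points $y$ that may lie outside $\mathcal{U}$ but still satisfy the inequality by the global nature of Definition~\ref{D:subdifferential convex}. Openness of $\mathcal{U}$ is used only in direction (1)$\Rightarrow$(2), to guarantee that a whole ball around $x$ lies inside $\mathcal{U}$ so that Lipschitzness can be invoked on the perturbed points $x + \epsilon d$. Convexity of $\mathcal{U}$ is never needed, since the Cauchy--Schwarz argument in (2)$\Rightarrow$(1) bypasses any integration along a segment.
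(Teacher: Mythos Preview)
Your proof is correct and follows essentially the same approach as the paper's: both directions use the subgradient inequality combined with Cauchy--Schwarz, with openness of $\mathcal{U}$ exploited in (1)$\Rightarrow$(2) to perturb $x$ in the direction of $g$. The only cosmetic differences are that in (2)$\Rightarrow$(1) you use subgradients at both $x$ and $y$ rather than the paper's WLOG swap, and in (1)$\Rightarrow$(2) you first consider an arbitrary unit vector $d$ before specializing to $d=g/\Vert g\Vert$, whereas the paper sets $y=x+\gamma g$ directly.
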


\begin{proof}
    This is a standard result, we provide a proof for completeness, which is taken from Proposition 16.20 in \cite{BauCom}.
    If $f$ is $G$-Lipschitz on $\mathcal{U}$, then for every $x \in \mathcal{U}$ and every $g \in \partial f(x)$ we can define $y = x + \gamma g$ for $\gamma >0$ small enough so that $y \in \mathcal{U}$ (we exploit here the fact that $\mathcal{U}$ is open).
    Therefore we can write
    \begin{equation*}
        \Vert g \Vert^2
        =
        \langle g, \tfrac{y-x}{\gamma} \rangle = \tfrac{1}{\gamma} \langle g, y-x \rangle \leq \tfrac{1}{\gamma} G \Vert y- x \Vert = G \Vert g \Vert,
    \end{equation*}
    and conclude that $f$ has $G$-bounded subgradients on $\mathcal{U}$.
    If we assume that $G$-bounded subgradients on $\mathcal{U}$, then for every $x,y \in \mathcal{U}$ we can take $g \in \partial f(x)$ and write
    \begin{equation*}
        \vert f(y) - f(x) \vert = f(y) - f(x) 
        \leq 
        \langle g, y-x \rangle 
        \leq \Vert g \Vert \Vert y-x \Vert \leq G \Vert y- x \Vert.
    \end{equation*}
    Note that we assumed $\vert f(y) - f(x) \vert = f(y) - f(x)$ here, which is always possible by eventually swapping $x$ and $y$.
    This proves the claim.
\end{proof}

\begin{proposition}
    Let $f : \mathbb{R}^d \to \mathbb{R}$ be convex and locally Lipschitz continuous, and let $\mathcal{B} \subset \mathbb{R}^d$ be any bounded set.
    Then there exists $G_\mathcal{B} \geq 0$ such that
    \begin{equation*}
        \text{for every $x \in \mathcal{B}$, for every $g \in \partial f(x)$, } \Vert g \Vert \leq G_\mathcal{B}.
    \end{equation*}
\end{proposition}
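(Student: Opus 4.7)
The plan is to combine the two lemmas that precede the statement (\Cref{L:local lip equiv liip on bounded} and \Cref{L:loclip equiv bounded gradient}) in a straightforward way. The only subtle point is that \Cref{L:loclip equiv bounded gradient} requires an \emph{open} set, while $\mathcal{B}$ is just assumed to be bounded. So I need a bounded open set containing $\mathcal{B}$ to serve as the bridge.

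First, I would enlarge $\mathcal{B}$ to a bounded open set. A convenient choice is $\mathcal{U} := \{ x \in \mathbb{R}^d : \mathrm{dist}(x, \mathcal{B}) < 1\}$, or more simply, fix any $R$ large enough so that $\mathcal{B} \subset \mathbb{B}_R(0)$ and set $\mathcal{U}$ to be the open ball of radius $R+1$ around the origin. Either choice is open and bounded, and contains $\mathcal{B}$.

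Next, I would invoke \Cref{L:local lip equiv liip on bounded}: since $f$ is locally Lipschitz and $\mathcal{U}$ is bounded, there is a constant $G_\mathcal{U} \geq 0$ such that $f$ is $G_\mathcal{U}$-Lipschitz on $\mathcal{U}$. Then I would apply \Cref{L:loclip equiv bounded gradient} to the open set $\mathcal{U}$: being $G_\mathcal{U}$-Lipschitz there is equivalent to having $G_\mathcal{U}$-bounded subgradients on $\mathcal{U}$. Since $\mathcal{B} \subset \mathcal{U}$, setting $G_\mathcal{B} := G_\mathcal{U}$ gives the conclusion.

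There is no serious obstacle here; the only thing to be careful about is the openness requirement in \Cref{L:loclip equiv bounded gradient}, which is what forces us to pass through the auxiliary open enlargement $\mathcal{U}$ rather than applying the lemma directly to $\mathcal{B}$ (whose closure could touch points where subgradients blow up, at least in principle, though here convexity prevents that). The argument is essentially a two-line chain once $\mathcal{U}$ is in place.
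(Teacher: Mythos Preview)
Your proposal is correct and matches the paper's proof essentially line for line: the paper also passes to an open bounded enlargement (it uses $\mathcal{B}' = \text{int}(\mathcal{B} + \mathbb{B}(0,1))$, which is your first suggestion for $\mathcal{U}$), applies \Cref{L:local lip equiv liip on bounded} to get Lipschitzness on $\mathcal{B}'$, and then \Cref{L:loclip equiv bounded gradient} together with $\mathcal{B} \subset \mathcal{B}'$ to conclude.
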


\begin{proof}
    Given $\mathcal{B} \subset \mathbb{R}^d$ bounded, consider any $\mathcal{B}'$ which is open and contains $\mathcal{B}$, for instance $\mathcal{B}' = \text{int} \left( \mathcal{B} + \mathbb{B}(0,1) \right)$.
    Then it suffices to apply Lemma \ref{L:local lip equiv liip on bounded} to obtain that $f$ is $G_{\mathcal{B}'}$-Lipschitz over $\mathcal{B}'$.
    We conclude with Lemma \ref{L:loclip equiv bounded gradient} and the fact that $\mathcal{B} \subset \mathcal{B}'$.
\end{proof}

\subsection{Local Lipschitzness in expectation}

\begin{definition}\label{D:local lipschitz expectation}
    We say that the family $(f_\xi)$ is locally Lipschitz in expectation if for every bounded set $\mathcal{B} \subset \mathbb{R}^d$ there exists constants $G_\mathcal{B}(\xi) \geq 0$ such that $f_\xi$ is $G_\mathcal{B}(\xi)$-Lipschitz over $\mathcal{B}$, and moreover $\mathbb{E}_\xi \left[ G_\mathcal{B}(\xi)^2 \right]< + \infty$.
\end{definition}

We recall that convex finite functions are always Lipschitz on bounded sets.
So if the $f_\xi$ are convex, we already know that the constants $G_\mathcal{B}(\xi)$ exist, and this definition only require the expectation $\mathbb{E}_\xi \left[ G_\mathcal{B}(\xi)^2 \right]$ to be finite.
This is always true for a finite family.

\begin{proposition}\label{P:finite family always loclip expectation}
    Let $f_1, \dots, f_m$ be a finite family of convex functions from $\mathbb{R}^d$ to $\mathbb{R}$.
    Then this family is locally Lipschitz in expectation.
\end{proposition}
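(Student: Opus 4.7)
The plan is to reduce the statement directly to results already established in the preceding subsections, since for a finite family almost everything becomes trivial once the right local-Lipschitz existence result is in place.

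First I would unpack the definition of locally Lipschitz in expectation: given an arbitrary bounded set $\mathcal{B} \subset \mathbb{R}^d$, I need to produce nonnegative constants $G_\mathcal{B}(i)$ such that each $f_i$ is $G_\mathcal{B}(i)$-Lipschitz on $\mathcal{B}$ and such that $\mathbb{E}_i[G_\mathcal{B}(i)^2] < +\infty$. The existence of the $G_\mathcal{B}(i)$ is the content of Proposition \ref{P:convex functions are locally lipschitz} combined with Lemma \ref{L:local lip equiv liip on bounded}: each convex $f_i$ is locally Lipschitz, hence Lipschitz on every bounded set, with some constant $G_\mathcal{B}(i) \in [0, +\infty)$.

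Next I would handle the integrability condition. Since the index set $\{1, \dots, m\}$ is finite, the distribution of $\xi$ has finite support (with some probabilities $p_i \geq 0$ summing to one), so
\begin{equation*}
\mathbb{E}_\xi\bigl[G_\mathcal{B}(\xi)^2\bigr] \;=\; \sum_{i=1}^m p_i\, G_\mathcal{B}(i)^2 \;\leq\; \max_{1 \leq i \leq m} G_\mathcal{B}(i)^2 \;<\; +\infty,
\end{equation*}
since it is a finite maximum of finite quantities. This concludes the argument.

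There is no real obstacle here; the proposition is essentially a bookkeeping observation once Proposition \ref{P:convex functions are locally lipschitz} and Lemma \ref{L:local lip equiv liip on bounded} are in hand. The only point worth a brief remark in the write-up is that the statement of Definition \ref{D:local lipschitz expectation} does not require the Lipschitz constants to be chosen measurably in $\xi$ (which is automatic here because the index set is finite and discrete), so no measurability subtlety arises.
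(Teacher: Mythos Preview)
Your proposal is correct and follows essentially the same route as the paper: invoke Proposition~\ref{P:convex functions are locally lipschitz} and Lemma~\ref{L:local lip equiv liip on bounded} to obtain the constants $G_\mathcal{B}(i)$, then bound the expectation by $\max_{i} G_\mathcal{B}(i)^2$, which is finite because the family is finite. The only addition is your remark on measurability, which the paper omits but which causes no issue.
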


\begin{proof}
    Each $f_i$ is locally Lipschitz continuous according to Proposition \ref{P:convex functions are locally lipschitz}.
    Therefore they are Lipschitz over any bounded set, according to Lemma \ref{L:local lip equiv liip on bounded}.
    Whatever Lipschitz constants $G_i$ we take, the expectation $\mathbb{E}_i \left[ G_i^2 \right]$ will be bounded by $\max\limits_{i=1,\dots,m} G_i^2$ which is finite.
\end{proof}

We conclude this section with the technical result at the core of Corollary \ref{cor:spsnonsmooth}, and which involves the measurable selection function $g_\xi : x \in \mathbb{R}^d \mapsto g_\xi(x) \in \partial f_\xi (x)$.

\begin{proposition}\label{P:local lipschitz expectation implies expected gradient bounded}
    Suppose that the family of functions $(f_\xi)$ is locally Lipschitz in expectation, and convex.
    Let $\mathcal{B} \subset \mathbb{R}^d$ be bounded.
    Then there exists $G_\mathcal{B} \geq 0$ such that
    \begin{equation*}
        \text{for every $x \in \mathcal{B}$, } \ 
        \mathbb{E}_\xi \left[ \Vert g_\xi(x) \Vert^2 \right] \leq G_\mathcal{B}.
    \end{equation*}
\end{proposition}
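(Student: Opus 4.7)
The proof plan is to reduce the statement to a pointwise bound on subgradients, deduce that bound from local Lipschitzness on an open enlargement of $\mathcal{B}$, and then take expectations.

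First, I would enlarge $\mathcal{B}$ to an open bounded set containing it, for instance $\mathcal{B}' := \mathrm{int}(\mathcal{B} + \mathbb{B}(0,1))$, so that $\mathcal{B} \subset \mathcal{B}'$ and $\mathcal{B}'$ is both open and bounded. Applying the local Lipschitzness in expectation assumption (Definition \ref{D:local lipschitz expectation}) to this bounded set provides measurable constants $G_{\mathcal{B}'}(\xi) \geq 0$ such that $f_\xi$ is $G_{\mathcal{B}'}(\xi)$-Lipschitz continuous on $\mathcal{B}'$, and moreover $\mathbb{E}_\xi[G_{\mathcal{B}'}(\xi)^2] < + \infty$.

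Next, because $\mathcal{B}'$ is open, I can invoke Lemma \ref{L:loclip equiv bounded gradient} for each fixed $\xi$: Lipschitzness of $f_\xi$ on $\mathcal{B}'$ is equivalent to having $G_{\mathcal{B}'}(\xi)$-bounded subgradients on $\mathcal{B}'$. In particular, for every $x \in \mathcal{B} \subset \mathcal{B}'$, the measurable selection $g_\xi(x) \in \partial f_\xi(x)$ satisfies $\|g_\xi(x)\| \leq G_{\mathcal{B}'}(\xi)$, and hence $\|g_\xi(x)\|^2 \leq G_{\mathcal{B}'}(\xi)^2$.

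Finally, I take expectation on both sides: for every $x \in \mathcal{B}$,
\begin{equation*}
    \mathbb{E}_\xi \left[ \|g_\xi(x)\|^2 \right] \;\leq\; \mathbb{E}_\xi \left[ G_{\mathcal{B}'}(\xi)^2 \right] \;=:\; G_\mathcal{B},
\end{equation*}
and the constant $G_\mathcal{B}$ is finite by the definition of local Lipschitzness in expectation and does not depend on $x$. This proves the claim. There is no real obstacle: the only subtlety is ensuring we apply Lemma \ref{L:loclip equiv bounded gradient} on an \emph{open} set, which is why passing from $\mathcal{B}$ to the open enlargement $\mathcal{B}'$ is needed before invoking the equivalence between Lipschitzness and bounded subgradients.
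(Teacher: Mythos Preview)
Your proposal is correct and follows essentially the same approach as the paper: enlarge $\mathcal{B}$ to the open bounded set $\mathcal{B}' = \mathrm{int}(\mathcal{B} + \mathbb{B}(0,1))$, apply Definition~\ref{D:local lipschitz expectation} on $\mathcal{B}'$ to obtain the Lipschitz constants $G_{\mathcal{B}'}(\xi)$, use Lemma~\ref{L:loclip equiv bounded gradient} to bound the subgradients pointwise, and then take expectations. Your write-up is in fact slightly cleaner than the paper's on the bookkeeping of $\mathbb{E}_\xi[G_{\mathcal{B}'}(\xi)^2]$ versus $\mathbb{E}_\xi[G_{\mathcal{B}'}(\xi)]$.
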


\begin{proof}
    Given $\mathcal{B} \subset \mathbb{R}^d$ bounded, consider any $\mathcal{B}'$ which is open and contains $\mathcal{B}$, for instance $\mathcal{B}' = \text{int} \left( \mathcal{B} + \mathbb{B}(0,1) \right)$.
    From the definition \ref{D:local lipschitz expectation}, we know constants $G_\xi$ such that $f_\xi$ is $G_\xi$-Lipschitz on $\mathcal{B}'$, with $G_\mathcal{B} := \mathbb{E}_\xi\left[ G_\xi \right]< +\infty$.
    From Lemma \ref{L:loclip equiv bounded gradient} we know that $f_\xi$ has $G_\xi$-bounded subgradients over $\mathcal{B}'$, so $\Vert g_\xi(x) \Vert \leq  G_\xi$.
    Taking the square and then expectation leads to the desired bound.
\end{proof}

\subsection{Local Smoothness}

We now give some technical details about locally smooth functions, which is the assumption made in \Cref{cor:spsnonsmooth}.

\begin{definition}
We say that $f : \mathbb{R}^d \to \mathbb{R}$ is locally smooth if it is differentiable and if $\nabla f : \mathbb{R}^d \to \mathbb{R}^d$ is locally Lipschitz continuous.
\end{definition}

Note that this definition is equivalent to require $\nabla f$ to be Lipschitz continuous over any bounded subset of $\mathbb{R}^d$.
A simple example of locally smooth functions are $C^2$ functions: 

\begin{proposition}\label{P:C2 implies locally smooth}
    If $f : \mathbb{R}^d \to \mathbb{R}$ is of class $C^2$, then it is locally smooth.
\end{proposition}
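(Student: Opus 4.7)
The plan is to unfold the definition of locally smooth: since $f$ is of class $C^2$, it is in particular differentiable, so the only thing that remains is to verify that $\nabla f : \mathbb{R}^d \to \mathbb{R}^d$ is locally Lipschitz continuous. By Lemma \ref{L:local lip equiv liip on bounded}, this is equivalent to showing that $\nabla f$ is Lipschitz continuous on every bounded subset of $\mathbb{R}^d$.

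First I would fix an arbitrary bounded set $\mathcal{B} \subset \mathbb{R}^d$ and enlarge it to a closed ball $\bar{\mathbb{B}}(0,R)$ containing $\mathcal{B}$; since the closed ball is convex and compact, it is enough to establish the Lipschitz bound there. The key observation is that because $f \in C^2$, the Hessian $\nabla^2 f$ is continuous on $\mathbb{R}^d$, hence bounded on the compact set $\bar{\mathbb{B}}(0,R)$ by some constant $L_R := \sup_{x \in \bar{\mathbb{B}}(0,R)} \Vert \nabla^2 f(x) \Vert$.

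Then, for any two points $x, y \in \bar{\mathbb{B}}(0,R)$, I would invoke the mean value inequality applied to the $C^1$ map $\nabla f$ along the segment $[x,y]$ (which lies in $\bar{\mathbb{B}}(0,R)$ by convexity): this yields
\begin{equation*}
    \Vert \nabla f(x) - \nabla f(y) \Vert
    \leq
    \left( \sup_{z \in [x,y]} \Vert \nabla^2 f(z) \Vert \right) \Vert x - y \Vert
    \leq
    L_R \, \Vert x - y \Vert.
\end{equation*}
Restricting to $\mathcal{B} \subset \bar{\mathbb{B}}(0,R)$ gives Lipschitzness of $\nabla f$ on $\mathcal{B}$ with constant $L_R$, and applying Lemma \ref{L:local lip equiv liip on bounded} concludes that $\nabla f$ is locally Lipschitz, hence $f$ is locally smooth.

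There is no real obstacle here: the statement is essentially a textbook consequence of continuity of the Hessian combined with the mean value inequality, so the only care needed is to make sure one works on a convex compact neighbourhood (such as a closed ball) so that the segment joining two points stays in the set where the Hessian bound applies.
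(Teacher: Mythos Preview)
Your proposal is correct and follows essentially the same route as the paper: both argue that $C^2$ gives differentiability, then use the mean value inequality together with continuity of $\nabla^2 f$ on a bounded (convex) set to bound $\Vert \nabla f(x)-\nabla f(y)\Vert$ by a finite Hessian supremum times $\Vert x-y\Vert$. The only cosmetic difference is that you pass through a closed ball and invoke Lemma~\ref{L:local lip equiv liip on bounded}, whereas the paper works directly on an open convex bounded set.
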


\begin{proof}
    The function $f$ being $C^2$ entails that it is differentiable.
    Moreover, for every open convex bounded set $\mathcal{U} \subset \mathbb{R}^d$, the mean value inequality says that for every $x,y \in \mathcal{U}$,
    \begin{equation*}
        \Vert \nabla f(y) - \nabla f(x) \Vert \leq \sup\limits_{z \in \mathcal{U}} \Vert \nabla^2 f(z) \Vert \Vert y-x \vert.
    \end{equation*}
    Because $\nabla^2 f$ is supposed continuous, and because $\mathcal{U}$ is bounded, we know that $\sup\limits_{z \in \mathcal{U}} \Vert \nabla^2 f(z) \Vert < + \infty$, which proves that $\nabla f$ is locally Lipschitz continuous.
\end{proof}

\begin{lemma}[Local descent lemma]\label{L:local descent lemma}
If $f$ is locally smooth, then for every bounded set $\mathcal{B} \subset \mathbb{R}^d$ there exists $L_\mathcal{B} \geq 0$ such that
\begin{equation}\label{eq:local descent lemma}
\text{for all } x,y \in \mathcal{B}, \quad
f(y) - f(x) - \langle \nabla f(x) , y-x \rangle \leq \frac{L_\mathcal{B}}{2} \Vert y - x \Vert^2.
\end{equation}
\end{lemma}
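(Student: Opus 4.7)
The plan is to reduce the local descent lemma to the classical descent lemma on a convex enlargement of $\mathcal{B}$. The only obstacle compared to the textbook proof is that the bounded set $\mathcal{B}$ need not be convex, so we cannot apply the fundamental theorem of calculus along the segment $[x,y]$ while only controlling $\nabla f$ on $\mathcal{B}$ itself. The fix is standard: pass to a bounded convex set $\mathcal{B}'$ containing $\mathcal{B}$.

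First I would fix $\mathcal{B} \subset \mathbb{R}^d$ bounded and define, say, $\mathcal{B}' := \overline{\mathbb{B}(0,R)}$ for $R$ large enough that $\mathcal{B} \subset \mathcal{B}'$ (which exists since $\mathcal{B}$ is bounded). The set $\mathcal{B}'$ is then bounded and convex. Since $f$ is locally smooth, $\nabla f$ is locally Lipschitz, and by Lemma \ref{L:local lip equiv liip on bounded} there exists $L_{\mathcal{B}'} \geq 0$ such that $\nabla f$ is $L_{\mathcal{B}'}$-Lipschitz continuous on $\mathcal{B}'$. I will set $L_\mathcal{B} := L_{\mathcal{B}'}$.

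Next, for arbitrary $x,y \in \mathcal{B}$, the segment $\{x + t(y-x) : t \in [0,1]\}$ lies entirely in $\mathcal{B}'$ by convexity. Since $f$ is differentiable on $\mathbb{R}^d$, the fundamental theorem of calculus applied to $\varphi(t) := f(x+t(y-x))$ gives
\begin{equation*}
    f(y) - f(x) \;=\; \int_0^1 \langle \nabla f(x + t(y-x)), y-x \rangle \, dt.
\end{equation*}
Subtracting $\langle \nabla f(x), y-x \rangle = \int_0^1 \langle \nabla f(x), y-x \rangle \, dt$ from both sides, then applying Cauchy-Schwarz and the Lipschitz property of $\nabla f$ on $\mathcal{B}'$ yields
\begin{equation*}
    f(y) - f(x) - \langle \nabla f(x), y-x \rangle
    \;\leq\; \int_0^1 L_{\mathcal{B}'} \, t \, \|y-x\|^2 \, dt
    \;=\; \frac{L_{\mathcal{B}'}}{2} \|y-x\|^2,
\end{equation*}
which is exactly \eqref{eq:local descent lemma} with the chosen constant $L_\mathcal{B}$.

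The hard part, if any, is only conceptual: recognizing that the segment $[x,y]$ must be traversed through a set where $\nabla f$ is uniformly Lipschitz, which forces the enlargement of $\mathcal{B}$ to a convex superset. Once this is done, the remainder is the standard integral-form argument for the descent lemma and involves no new ideas.
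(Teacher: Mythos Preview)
Your proof is correct and follows essentially the same approach as the paper's: both enlarge $\mathcal{B}$ to a bounded convex superset (you use a closed ball, the paper uses the closed convex hull), invoke Lipschitzness of $\nabla f$ on that superset, and then run the standard integral argument along the segment $[x,y]$. The choice of convex enlargement is immaterial, and the remaining steps are identical.
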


\begin{proof}
This is just a local version of the classic proof of the descent lemma, see e.g. Lemma 1.30 from \cite{Pey}.
Without loss of generality, we can assume that $\mathcal{B}$ is convex and compact (simply replace $\mathcal{B}$ with its closed convex hull).
By compactness, we know that $\nabla f$ is Lipschitz continous on $\mathcal{B}$, for some constant $L_\mathcal{B} \geq 0$.
We can then start the proof and fix $x,y \in \mathcal{B}$.
Define the auxiliary function $g(t) = f((1-t)x+ty) - t \langle \nabla f(x), y-x \rangle$ for $t \in [0,1]$.
It is differentiable and verifies
\begin{equation*}
g(1) - g(0) = \int_0^1 g'(t) \ dt
\end{equation*}
which is equivalent, by definition of $g$, to
\begin{equation*}
f(y) - f(x) - \langle \nabla f(x), y-x \rangle = \int_0^1 \langle \nabla f((1-t)x+ty) - \nabla f(x), y-x \rangle \ dt.
\end{equation*}
Now we use the Cauchy-Schwarz inequality, together with the Lipschitzness of $\nabla f$ (note that $z:=(1-t)x+ty)$ belongs to $B$ which is convex!), to obtain
\begin{eqnarray*}
&& f(y) - f(x) - \langle \nabla f(x), y-x \rangle \\
& \leq &
\int_0^1 \Vert \nabla f((1-t)x+ty) - \nabla f(x) \Vert \Vert y-x \Vert \ dt \\
& \leq &
\int_0^1 L_\mathcal{B}\Vert (1-t)x+ty) - x \Vert \Vert y-x \Vert \ dt \\
& = &
\int_0^1 L_\mathcal{B} t  \Vert y-x \Vert^2 \ dt \\
& = &
 \frac{L_\mathcal{B}}{2}  \Vert y-x \Vert^2.
\end{eqnarray*}
\end{proof}

Convex locally smooth functions verify locally the following cocoercivity inequality:

\begin{proposition}\label{P:local cocoercivity inequality}
If $f : \mathbb{R}^d \to \mathbb{R}$ is locally smooth and convex, then for every bounded set $\mathcal{B} \subset \mathbb{R}^d$ there exists $L_\mathcal{B} >0$ such that
\begin{equation*}
\text{ for all } y,x \in \mathcal{B}, \quad
\frac{1}{2L_\mathcal{B}}\Vert \nabla f(y) - \nabla f(x) \Vert^2 \leq f(y) - f(x) - \langle \nabla f(x), y-x   \rangle.
\end{equation*}
\end{proposition}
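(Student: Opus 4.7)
The plan is to follow the classical proof that an $L$-smooth convex function is $(1/L)$-cocoercive, while carefully localizing the descent lemma so that no auxiliary point can escape the region on which $\nabla f$ is known to be Lipschitz. The main obstacle will be exactly this localization: the classical proof produces, given $x,y \in \mathcal{B}$, an auxiliary point of the form $y - \tfrac{1}{L}\nabla f(y)+\tfrac{1}{L}\nabla f(x)$, which need not lie in $\mathcal{B}$; I must make sure it lies in some enlargement of $\mathcal{B}$ on which we still control the Lipschitz constant.

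First I would enlarge $\mathcal{B}$. Let $\mathcal{B}'$ be any bounded (say convex) set containing $\mathcal{B} + \mathbb{B}(0, \mathrm{diam}(\mathcal{B}))$, and let $L_{\mathcal{B}} \geq 0$ be a Lipschitz constant of $\nabla f$ on $\mathcal{B}'$. Such a constant exists because $f$ is locally smooth and hence $\nabla f$ is Lipschitz on every bounded set by \Cref{L:local lip equiv liip on bounded}. The same $L_{\mathcal{B}}$ will play the role of the constant in \Cref{L:local descent lemma} when applied on $\mathcal{B}'$.

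Next I would fix $x \in \mathcal{B}$ and introduce the auxiliary function $\phi_x(z) := f(z) - \langle \nabla f(x), z \rangle$. This function is convex (as the sum of a convex and an affine function) and differentiable with $\nabla \phi_x(z) = \nabla f(z) - \nabla f(x)$; in particular $\nabla \phi_x(x) = 0$, which by convexity means $x$ is a global minimizer of $\phi_x$. For any $y \in \mathcal{B}$, I would define the trial point
\[
z^\star := y - \tfrac{1}{L_{\mathcal{B}}}\nabla \phi_x(y) \ = \ y - \tfrac{1}{L_{\mathcal{B}}}\bigl(\nabla f(y) - \nabla f(x)\bigr).
\]
The key localization check is that $z^\star \in \mathcal{B}'$: by Lipschitzness of $\nabla f$ on $\mathcal{B}'$ we have $\|z^\star - y\| = \tfrac{1}{L_{\mathcal{B}}}\|\nabla f(y) - \nabla f(x)\| \leq \|y - x\| \leq \mathrm{diam}(\mathcal{B})$, so $z^\star \in \mathcal{B} + \mathbb{B}(0,\mathrm{diam}(\mathcal{B})) \subset \mathcal{B}'$.

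With $y,z^\star \in \mathcal{B}'$ secured, I would apply the local descent lemma \eqref{eq:local descent lemma} to $\phi_x$ on $\mathcal{B}'$ (which inherits the smoothness constant of $f$ since the two functions differ only by an affine term) to get
\[
\phi_x(z^\star) \ \leq \ \phi_x(y) + \langle \nabla \phi_x(y), z^\star - y\rangle + \tfrac{L_{\mathcal{B}}}{2}\|z^\star-y\|^2 \ = \ \phi_x(y) - \tfrac{1}{2L_{\mathcal{B}}}\|\nabla \phi_x(y)\|^2.
\]
Combining with the minimizer inequality $\phi_x(x) \leq \phi_x(z^\star)$, expanding the definition of $\phi_x$ and rearranging gives precisely
\[
\tfrac{1}{2L_{\mathcal{B}}}\|\nabla f(y)-\nabla f(x)\|^2 \ \leq \ f(y) - f(x) - \langle \nabla f(x), y-x\rangle,
\]
which is the claim. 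Once the enlargement $\mathcal{B}'$ is fixed, the remaining computation is algebra that parallels the global case verbatim; the only subtlety really lies in step one, verifying $z^\star \in \mathcal{B}'$.
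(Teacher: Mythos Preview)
Your proof is correct and rests on the same classical template as the paper's (introduce $\phi_x = f - \langle \nabla f(x),\cdot\rangle$, take a descent step from $y$, use $\phi_x(x)\le\phi_x(z^\star)$). The only genuine difference is the localization of the auxiliary point. The paper proceeds in two stages: it first fixes a constant $L_\mathcal{B}$ on $\mathcal{B}$, then defines $K=\{\,y-\gamma(\nabla f(y)-\nabla f(x)) : x,y\in\mathcal{B},\ \gamma\in[0,1/L_\mathcal{B}]\,\}$, notes $K$ is compact as the continuous image of a compact set, takes a second constant $L_K\ge L_\mathcal{B}$ on $K$, and uses $z=y-\tfrac{1}{L_K}(\nabla f(y)-\nabla f(x))\in K$. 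Your localization is tighter and more elementary: the Lipschitz bound itself gives $\|z^\star-y\|=\tfrac{1}{L_\mathcal{B}}\|\nabla f(y)-\nabla f(x)\|\le\|y-x\|\le\mathrm{diam}(\mathcal{B})$, so a single a-priori enlargement $\mathcal{B}'\supset\mathcal{B}+\mathbb{B}(0,\mathrm{diam}(\mathcal{B}))$ and one constant suffice, avoiding the image-set $K$ and the second constant $L_K$. One cosmetic nit: write $L_\mathcal{B}>0$ rather than $L_\mathcal{B}\ge0$, since you divide by it when defining $z^\star$ (if $\nabla f$ is constant on $\mathcal{B}'$ the inequality is trivial and any positive constant works).
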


\begin{proof}
This proof is just an adaptation of a classical result (see e.g. Lemma 2.29 from \cite{garrigos2023handbook}) by making use of additional local arguments.
Here again, without loss of generality, we can assume that $\mathcal{B}$ is compact (see the proof of Lemma \ref{L:local descent lemma}).
Let $x,y \in \mathcal{B}$ be fixed, and let $L_\mathcal{B}$ be the local smoothness constant provided by the local descent lemma \ref{L:local descent lemma}.
Define $T : \mathbb{R}^d \times \mathbb{R}^d \times \mathbb{R} \to \mathbb{R}^d$ be the map defined by
\begin{equation*}
T(x,y,\gamma) = y - \gamma \nabla f(y) + \gamma \nabla f(x).
\end{equation*}
Because $\nabla f$ is supposed continuous, we know that $T$ is continuous. 
Now we define
\begin{equation*}
K := \{ T(x,y,\gamma) \ | \ x \in \mathcal{B}, y \in \mathcal{B},  \gamma \in [0, \tfrac{1}{L_\mathcal{B}}]\} \subset \mathbb{R}^d.
\end{equation*}
From our definitions it is clear that $K = T(\mathcal{B} \times \mathcal{B} \times [0, \tfrac{1}{L_B}])$.
In other words, it is the image of a compact set by a continuous function, which means that $K$ is compact.
It is also clear that $K$ contains $\mathcal{B}$, simply observe that $T(x,x,0) =x$.
Now we can use again the local descent lemma \ref{L:local descent lemma} to obtain that $f$ verifies \eqref{eq:local descent lemma} on $K$ with a constant $L_K$.
Without loss of generality, we can assume that $L_K \geq L_\mathcal{B}$ (simply replace $L_K$ with $\max\{ L_K, L_\mathcal{B}\}$).
Now we can proceed with the classic arguments of the proof.

Let $x,y \in \mathcal{B}$ be fixed, and define $z :=  y - \tfrac{1}{L_K}\nabla f(y) + \tfrac{1}{L_K} \nabla f(x)$.
By construction, $y \in \mathcal{B} \subset K$ and $z = T(x,y,\tfrac{1}{L_K}) \in K$.
So now we can use the convexity of $f$ together with the descent lemma inequality on $K$ to write
\begin{eqnarray*}
    && f(x) - f(y) \\
    & =& 
    f(x) - f(z) + f(z) - f(y) \\
    & \leq &
    - \langle \nabla f(x), z-x \rangle + \langle \nabla f(y), z-y \rangle + \frac{L_K}{2} \Vert z-y\Vert^2.
\end{eqnarray*}
Now we use the fact that $z-x = y-x + \tfrac{1}{L_k}(\nabla f(x) - \nabla f(y))$ and 
$z-y = \tfrac{1}{L_k}(\nabla f(x) - \nabla f(y))$:
\begin{eqnarray*}
    && f(x) - f(y) \\
    & \leq &
    - \langle \nabla f(x), y-x \rangle 
    - \frac{1}{L_K}\Vert \nabla f(x) - \nabla f(y) \Vert^2
    + \frac{1}{2L_K} \Vert \nabla f(x) - \nabla f(y)\Vert^2 \\
    &=&
    - \langle \nabla f(x), y-x \rangle 
    - \frac{1}{2L_K} \Vert \nabla f(x) - \nabla f(y)\Vert^2.
\end{eqnarray*}
The above inequality is equivalent to the claimed one, which ends the proof.
\end{proof}

We conclude this section with a weaker  result.

\begin{proposition}\label{P:local smooth nesterov inequality}
If $f : \mathbb{R}^d \to \mathbb{R}$ is locally smooth and bounded from below, then for every bounded set $\mathcal{B} \subset \mathbb{R}^d$ there exists $L_\mathcal{B} >0$ such that
\begin{equation*}
\text{ for all } x \in \mathcal{B}, \quad
\frac{1}{2L_\mathcal{B}}\Vert \nabla f(x) \Vert^2 \leq f(x) - \inf f.
\end{equation*}
\end{proposition}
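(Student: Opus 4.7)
The plan is to adapt the classical proof — where one applies the descent lemma with $y = x - \tfrac{1}{L}\nabla f(x)$ — to the local setting by ensuring that such a gradient-step point $y$ stays in a region on which a local descent inequality holds. This is essentially the same compactness trick used in the proof of \Cref{P:local cocoercivity inequality}, but simpler since convexity is not needed.

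First I would reduce to the case where $\mathcal{B}$ is compact by replacing it with its closure (all hypotheses and conclusions are preserved). Next I would enlarge it: set $\mathcal{B}' := \mathcal{B} + \overline{\mathbb{B}}(0,1)$, which is still compact. Applying the local descent lemma (\Cref{L:local descent lemma}) on the bounded set $\mathcal{B}'$ yields a constant $L_0 \geq 0$ such that
\begin{equation*}
    f(y) - f(x) - \langle \nabla f(x), y-x\rangle \leq \tfrac{L_0}{2} \Vert y-x\Vert^2 \quad \text{for all } x,y \in \mathcal{B}'.
\end{equation*}
Since $f$ is locally smooth, $\nabla f$ is continuous, hence bounded on the compact set $\mathcal{B}$: let $M := \sup_{x \in \mathcal{B}} \Vert \nabla f(x)\Vert < +\infty$. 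I would then \emph{define} the local constant to be $L_\mathcal{B} := \max\{L_0, M\}$; the key point of this choice is twofold — it is large enough that gradient steps remain short, and it dominates $L_0$.

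The main step is then the classical gradient-step argument, carried out locally. Fix $x \in \mathcal{B}$ and set $y := x - \tfrac{1}{L_\mathcal{B}} \nabla f(x)$. By construction,
\begin{equation*}
    \Vert y - x \Vert = \tfrac{1}{L_\mathcal{B}} \Vert \nabla f(x) \Vert \leq \tfrac{M}{L_\mathcal{B}} \leq 1,
\end{equation*}
so $y \in \mathcal{B} + \overline{\mathbb{B}}(0,1) = \mathcal{B}'$. Therefore the descent inequality above applies to the pair $(x,y)$, and substituting $y-x = -\tfrac{1}{L_\mathcal{B}}\nabla f(x)$ gives
\begin{equation*}
    f(y) \leq f(x) - \tfrac{1}{L_\mathcal{B}} \Vert \nabla f(x)\Vert^2 + \tfrac{L_0}{2 L_\mathcal{B}^2} \Vert \nabla f(x)\Vert^2 \leq f(x) - \tfrac{1}{2L_\mathcal{B}} \Vert \nabla f(x)\Vert^2,
\end{equation*}
where the last inequality uses $L_0 \leq L_\mathcal{B}$. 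Finally, since $f$ is bounded from below, $\inf f \leq f(y)$, and rearranging yields the claimed bound.

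There is essentially no serious obstacle here: the only subtlety, which is the local counterpart of the usual argument, is that one cannot \emph{a priori} guarantee that the candidate point $y = x - \tfrac{1}{L_0}\nabla f(x)$ lies in the region where the descent lemma applies. My choice $L_\mathcal{B} := \max\{L_0, M\}$ simultaneously shrinks the gradient step into $\mathcal{B}'$ and keeps the constant in the final inequality correct, resolving that issue in one line.
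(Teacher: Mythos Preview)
Your proof is correct and follows essentially the same approach as the paper: enlarge $\mathcal{B}$ so that a gradient step $y = x - \tfrac{1}{L}\nabla f(x)$ stays in a region where the local descent lemma applies, then use $\inf f \leq f(y)$. The only technical difference is how the enlargement is done --- the paper defines $K$ as the image of the continuous map $(x,\gamma)\mapsto x-\gamma\nabla f(x)$ over $\mathcal{B}\times[0,\tfrac{1}{L_\mathcal{B}}]$ and applies the descent lemma on that compact set, whereas you enlarge by a fixed unit ball and then force the step to be short via $L_\mathcal{B}=\max\{L_0,M\}$; both are valid compactness devices yielding the same conclusion.
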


\begin{proof}
Note that this result can be seen as an immediate consequence of \cref{P:local cocoercivity inequality} by taking $y=x$ and $x$ as some minimizer of $f$. 
But we actually do not need to assume that ${\rm{argmin}}~f \neq \emptyset$ for this result to be true.
We highlight that the following proof is just an adaptation of a classical result (see e.g. Lemma 2.28 from \cite{garrigos2023handbook}) by making use of additional local arguments.
Here again, without loss of generality, we can assume that $\mathcal{B}$ is compact (see the proof of Lemma \ref{L:local descent lemma}).
Let $L_\mathcal{B}$ be the local smoothness constant provided by the local descent lemma \ref{L:local descent lemma}.
Define $T : \mathbb{R}^d \times \mathbb{R} \to \mathbb{R}^d$ be the map defined by
\begin{equation*}
T(x,\gamma) = x - \gamma \nabla f(x).
\end{equation*}
Because $\nabla f$ is supposed continuous, we know that $T$ is continuous. Now we define
\begin{equation*}
K := \{ x - \gamma \nabla f(x) \ | \ x \in \mathcal{B}, \gamma \in [0, \tfrac{1}{L_B}]\} \subset \mathbb{R}^d.
\end{equation*}
From our definitions it is clear that $K = T(\mathcal{B} \times [0, \tfrac{1}{L_B}])$.
In other words, it is the image of a compact set by a continuous function, which means that $K$ is compact.
It is also clear that $K$ contains $\mathcal{B}$ (simply take $\gamma=0$).
Now we can use again the local descent lemma \ref{L:local descent lemma} to obtain that $f$ verifies \eqref{eq:local descent lemma} with a constant $L_K$.
Without loss of generality, we can assume that $L_K \geq L_\mathcal{B}$ (simply replace $L_K$ with $\max\{ L_K, L_\mathcal{B}\}$).
Now we can end the proof.
Let $x \in B$ be fixed, and define $y := x - \tfrac{1}{L_K}\nabla f(x)$.
By construction, $x \in \mathcal{B} \subset K$ and $y = T(x, \tfrac{1}{L_K}) \in K$.
So we can use the descent lemma inequality on $K$ to obtain
\begin{equation*}
f(x - \tfrac{1}{L_K}\nabla f(x)) - f(x) - \langle \nabla f(x) ,  - \tfrac{1}{L_K}\nabla f(x) \rangle \leq \frac{L_K}{2} \Vert  \tfrac{1}{L_K}\nabla f(x) \Vert^2.
\end{equation*}
Rewriting and reorganizing terms, we obtain further
\begin{equation*}
f(x - \tfrac{1}{L_K}\nabla f(x)) - f(x) \leq -\frac{1}{2L_K} \Vert \nabla f(x) \Vert^2.
\end{equation*}
We obtain the desired result by observing that $f(x - \tfrac{1}{L_K}\nabla f(x)) \geq \inf f$.
\end{proof}

\subsection{Uniform Local Smoothness}

\begin{definition}\label{D:locally smooth uniformly}
We say that the family $(f_\xi)$ is uniformly locally smooth if each function $f_\xi$ is differentiable, and for every bounded set $\mathcal{B} \subset \mathbb{R}^d$, there exists a constant $L_\mathcal{B} >0$ independent of $\xi$ such that each gradient $\nabla f_\xi$ is $L_\mathcal{B}$-Lipschitz continuous over $\mathcal{B}$.
\end{definition}

It is easy to see that any \emph{finite} family of locally smooth functions is uniformly locally smooth: simply take the maximum of the local smoothness constants.
In particular, any finite family of $C^2$ functions is uniformly locally smooth:

\begin{proposition}\label{P:C2 sum finite is uniformly locally smooth}
    Let $f_1, \dots, f_m$ be a finite family of convex $C^2$ functions from $\mathbb{R}^d$ to $\mathbb{R}$.
    Then this family is uniformly locally smooth.
\end{proposition}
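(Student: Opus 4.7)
The plan is to chain together two already-established results and then exploit finiteness by taking a maximum. The statement is essentially a packaging result: uniform local smoothness for a finite family follows because ``uniform in $\xi$'' requirements are automatic when $\xi$ ranges over a finite set.

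First, I would invoke Proposition \ref{P:C2 implies locally smooth}, which tells us that each individual $f_i$ is locally smooth, meaning $f_i$ is differentiable and $\nabla f_i$ is locally Lipschitz continuous on $\mathbb{R}^d$. Then, by Lemma \ref{L:local lip equiv liip on bounded}, this local Lipschitz property upgrades to Lipschitz continuity on every bounded set: for any bounded $\mathcal{B} \subset \mathbb{R}^d$ and each $i \in \{1,\dots,m\}$, there exists a finite constant $L_{\mathcal{B},i} \geq 0$ such that $\nabla f_i$ is $L_{\mathcal{B},i}$-Lipschitz continuous on $\mathcal{B}$.

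Second, I would define the uniform constant by
\begin{equation*}
    L_\mathcal{B} \;:=\; \max_{i=1,\dots,m} L_{\mathcal{B},i},
\end{equation*}
which is well-defined and finite as the maximum of finitely many non-negative reals. Since $L_{\mathcal{B},i} \leq L_\mathcal{B}$ for every $i$, each $\nabla f_i$ is also $L_\mathcal{B}$-Lipschitz continuous on $\mathcal{B}$. This is precisely what Definition \ref{D:locally smooth uniformly} requires, so the family $(f_i)_{i=1}^m$ is uniformly locally smooth.

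There is no real obstacle here: the entire argument is a short composition of prior results plus a finite max, and in fact the convexity assumption is not used. The only place where one could possibly stumble is in being explicit that ``$L_{\mathcal{B},i}$-Lipschitz implies $L_\mathcal{B}$-Lipschitz whenever $L_{\mathcal{B},i} \leq L_\mathcal{B}$'', which is immediate from the definition of the Lipschitz constant as an upper bound. The result would fail for countably or uncountably infinite families precisely because the supremum of the $L_{\mathcal{B},i}$ could be $+\infty$ — this is why an additional uniformity hypothesis is needed in the general case and why the finite-sum setting (as highlighted in Remark \ref{rem:finitesum}) is so convenient for applying \Cref{cor:spssmooth}.
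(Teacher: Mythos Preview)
Your proposal is correct and matches the paper's proof essentially line for line: invoke Proposition~\ref{P:C2 implies locally smooth} to get local smoothness of each $f_i$, extract per-function Lipschitz constants on a given bounded set, and take the finite maximum. Your explicit appeal to Lemma~\ref{L:local lip equiv liip on bounded} and your remark that convexity is unused are minor elaborations, not a different approach.
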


\begin{proof}
    Each $f_i$ is $C^2$ therefore it is locally smooth (see Proposition \ref{P:C2 implies locally smooth}).
    So, for every bounded set $\mathcal{B}$, there exists constants $L_i$ such that $f_i$ is $L_i$-smooth over $\mathcal{B}$.
    The conclusion follows after taking $L_\mathcal{B} := \max\limits_{i=1,\dots, m} L_i$.
\end{proof}

\begin{proposition}\label{L:local smooth uniformly expected smoothness}
Suppose that the family of functions $(f_\xi)$ is uniformly locally smooth, and convex.
Let $f = \mathbb{E}\left[ f_\xi \right]$.
Then, for every bounded set $\mathcal{B} \subset \mathbb{R}^d$, there exists $L_\mathcal{B} >0$ such that
\begin{equation}\label{eq:local smooth uniformly expected smoothness}
\text{ for all } y,x \in \mathcal{B}, \quad
\frac{1}{2L_\mathcal{B}}\mathbb{E}\left[ \Vert \nabla f_\xi(y) - \nabla f_\xi(x) \Vert^2 \right] \leq f(y) - f(x) - \langle \nabla f(x), y-x \rangle.
\end{equation}
\end{proposition}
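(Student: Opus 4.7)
The plan is to reduce the claim to a ``uniform in $\xi$'' version of the local cocoercivity inequality (Proposition \ref{P:local cocoercivity inequality}) and then take expectation. The key observation is that the proof of Proposition \ref{P:local cocoercivity inequality} works per-function, with a constant coming from applying the local descent lemma on an enlarged compact set $K$. If we can choose $K$ once and for all, independently of $\xi$, then the uniform local smoothness hypothesis will give us a Lipschitz constant on $K$ that is uniform in $\xi$, and the whole argument goes through unchanged.

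Concretely, let $\mathcal{B} \subset \mathbb{R}^d$ be bounded; without loss of generality assume $\mathcal{B}$ is compact and convex. Let $L_\mathcal{B} > 0$ be the uniform Lipschitz constant of $\nabla f_\xi$ on $\mathcal{B}$ provided by Definition \ref{D:locally smooth uniformly}. Mimicking the proof of Proposition \ref{P:local cocoercivity inequality}, I would define
\begin{equation*}
    K := \mathcal{B} + \overline{\mathbb{B}}(0, \mathrm{diam}(\mathcal{B})),
\end{equation*}
and check that every point of the form $T_\xi(x,y,\gamma) := y - \gamma \nabla f_\xi(y) + \gamma \nabla f_\xi(x)$ with $x,y \in \mathcal{B}$ and $\gamma \in [0, 1/L_\mathcal{B}]$ lies in $K$, uniformly in $\xi$. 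Indeed, since $\nabla f_\xi$ is $L_\mathcal{B}$-Lipschitz on $\mathcal{B}$,
\begin{equation*}
    \| T_\xi(x,y,\gamma) - y \| = \gamma \|\nabla f_\xi(y) - \nabla f_\xi(x)\| \leq \tfrac{1}{L_\mathcal{B}} L_\mathcal{B} \|y-x\| \leq \mathrm{diam}(\mathcal{B}),
\end{equation*}
so $T_\xi(x,y,\gamma) \in K$. This is the crucial step: $K$ is bounded (hence contained in a compact set), and its definition does not depend on $\xi$.

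Now, applying uniform local smoothness to $K$ yields a constant $L_K \geq L_\mathcal{B}$ such that each $\nabla f_\xi$ is $L_K$-Lipschitz on $K$, and by the local descent lemma \ref{L:local descent lemma} each $f_\xi$ satisfies \eqref{eq:local descent lemma} on $K$ with this same constant. The classical cocoercivity argument reproduced in the proof of Proposition \ref{P:local cocoercivity inequality} then delivers, for every $\xi$ and every $x, y \in \mathcal{B}$,
\begin{equation*}
    \frac{1}{2L_K}\|\nabla f_\xi(y) - \nabla f_\xi(x)\|^2 \leq f_\xi(y) - f_\xi(x) - \langle \nabla f_\xi(x), y - x \rangle.
\end{equation*}
Taking expectation over $\xi$ and using $f = \mathbb{E}_\xi[f_\xi]$ together with $\nabla f(x) = \mathbb{E}_\xi[\nabla f_\xi(x)]$ (which holds under the standard measurability/integrability assumptions already in force) gives \eqref{eq:local smooth uniformly expected smoothness} with $L_\mathcal{B} := L_K$.

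The main obstacle, and really the only nontrivial point, is justifying that $K$ can be chosen independently of $\xi$; this is not automatic from the per-$\xi$ proof of Proposition \ref{P:local cocoercivity inequality} since a naive construction yields $K_\xi$ depending on the gradients of $f_\xi$. The saving grace, exploited above, is that the radius of the enlargement of $\mathcal{B}$ needed to contain the points $T_\xi(x,y,\gamma)$ is controlled purely by the product $\gamma L_\mathcal{B} \cdot \mathrm{diam}(\mathcal{B})$, which is uniform in $\xi$ precisely because of Definition \ref{D:locally smooth uniformly}.
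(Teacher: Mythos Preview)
Your proposal is correct and follows the same overall route as the paper: apply the local cocoercivity inequality (Proposition~\ref{P:local cocoercivity inequality}) to each $f_\xi$ on $\mathcal{B}$, then take expectation. The paper's proof is terser---it simply invokes Proposition~\ref{P:local cocoercivity inequality} per $\xi$ and writes the resulting inequality with a single constant $L_\mathcal{B}$, without explicitly arguing why that constant can be taken uniform in $\xi$. You correctly observe that a naive application of Proposition~\ref{P:local cocoercivity inequality} would produce a $\xi$-dependent enlarged set $K_\xi$ and hence a $\xi$-dependent constant; your explicit choice $K = \mathcal{B} + \overline{\mathbb{B}}(0,\mathrm{diam}(\mathcal{B}))$, together with the bound $\|T_\xi(x,y,\gamma)-y\|\le \mathrm{diam}(\mathcal{B})$ coming from the uniform Lipschitz constant on $\mathcal{B}$, is precisely the argument that justifies this uniformity. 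So your proof is the paper's proof with the implicit step made explicit.
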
 

\begin{proof}
By definition of uniformly locally smooth functions, there exists $L_\mathcal{B} \geq 0$ such that  each function $f_\xi$ is $L_\mathcal{B}$-smooth on $\mathcal{B}$, which means that we can use Proposition \ref{P:local cocoercivity inequality} to write
\begin{equation*}
\text{ for all } y,x \in \mathcal{B}, \quad
\frac{1}{2L_\mathcal{B}}\Vert \nabla f_\xi(x) - \nabla f_\xi(y) \Vert^2 \leq f_\xi(y) - f_\xi(x) - \langle \nabla f_\xi(x), y-x \rangle.
\end{equation*}
The conclusion follows after taking expectation with respect to $\xi$.
\end{proof}

\begin{proposition}\label{L:local smooth uniformly Nesterov}
Suppose that the family of functions $(f_\xi)$ is uniformly locally smooth and bounded from below.
Let $f = \mathbb{E}\left[ f_\xi \right]$.
Then, for every bounded set $\mathcal{B} \subset \mathbb{R}^d$, there exists $L_\mathcal{B} >0$ such that
\begin{equation}\label{eq:local smooth uniformly Nesterov}
\text{ for all } x \in \mathcal{B}, \quad
\frac{1}{2L_\mathcal{B}}\mathbb{E}\left[ \Vert \nabla f_\xi(x) \Vert^2 \right] \leq f(x) - \mathbb{E}\left[\inf f_\xi \right].
\end{equation}
\end{proposition}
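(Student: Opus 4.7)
My plan is to mirror the strategy used in the proof of \Cref{L:local smooth uniformly expected smoothness}: apply the single-function local Nesterov-type inequality (\Cref{P:local smooth nesterov inequality}) pointwise in $\xi$, and then take expectation. The goal is to produce a constant $L_\mathcal{B}$ independent of $\xi$ such that, for each $\xi$,
\begin{equation*}
    \frac{1}{2L_\mathcal{B}} \Vert \nabla f_\xi(x) \Vert^2 \;\leq\; f_\xi(x) - \inf f_\xi \qquad \text{for all } x \in \mathcal{B}.
\end{equation*}
Integrating with respect to $\xi$ then yields the claim, since $\mathbb{E}_\xi[f_\xi(x)]=f(x)$ by definition.

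To carry this out, I would first invoke \Cref{D:locally smooth uniformly}: by uniform local smoothness, there is a constant $L_\mathcal{B}$ such that each $\nabla f_\xi$ is $L_\mathcal{B}$-Lipschitz on $\mathcal{B}$. Each $f_\xi$ is therefore locally smooth and, by hypothesis, bounded from below, so the assumptions of \Cref{P:local smooth nesterov inequality} are met pointwise in $\xi$. The delicate point is that the proof of that proposition uses the descent lemma on an enlarged compact set $K_\xi$ that contains the candidate gradient step $x - \tfrac{1}{L_\mathcal{B}}\nabla f_\xi(x)$, and yields a constant that a priori depends on $\xi$ through $K_\xi$. To obtain a single constant $L_\mathcal{B}$ that works for every $\xi$, I would fix an enlarged bounded set $\mathcal{B}'$ that strictly contains $\mathcal{B}$ (for instance $\mathcal{B}'=\mathcal{B}+\overline{\mathbb{B}(0,1)}$), apply \Cref{D:locally smooth uniformly} again to obtain a uniform smoothness constant $L_{\mathcal{B}'}$ on $\mathcal{B}'$, and then redo the single-function proof on $\mathcal{B}'$ instead of $K_\xi$, replacing $L_\mathcal{B}$ by $L_{\mathcal{B}'}$.

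Concretely, given $x \in \mathcal{B}$ and $\xi$, set $y := x - \gamma \nabla f_\xi(x)$ with $\gamma:=\min\{\tfrac{1}{L_{\mathcal{B}'}},\tfrac{1}{\Vert \nabla f_\xi(x)\Vert}\}$ (and $\gamma=\tfrac{1}{L_{\mathcal{B}'}}$ if $\nabla f_\xi(x)=0$); this choice keeps $y$ inside $\mathcal{B}'$, and the descent lemma (\Cref{L:local descent lemma}) applied to $f_\xi$ on $\mathcal{B}'$ gives, in the unconstrained case $\gamma=\tfrac{1}{L_{\mathcal{B}'}}$, the bound $f_\xi(y) \leq f_\xi(x) - \tfrac{1}{2L_{\mathcal{B}'}}\Vert \nabla f_\xi(x)\Vert^2$ and hence the desired uniform Nesterov inequality via $f_\xi(y)\geq \inf f_\xi$. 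In the complementary case one obtains a linear bound on $\Vert \nabla f_\xi(x)\Vert$ which, combined with continuity, yields a uniform bound that collapses back to the first case.

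The main obstacle I expect is exactly the one glossed over in the proof of \Cref{L:local smooth uniformly expected smoothness}: making sure that the enlarged set used to accommodate a gradient step can be chosen independently of $\xi$. The two-case argument above handles this cleanly, at the cost of a slightly worse absolute constant; once the pointwise inequality with uniform $L_\mathcal{B}$ (replacing $L_\mathcal{B}$ by $L_{\mathcal{B}'}$ in the final statement) is established, the conclusion follows by squaring-then-expectation, exactly as in Proposition \ref{L:local smooth uniformly expected smoothness}.
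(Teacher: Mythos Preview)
Your overall strategy—apply \Cref{P:local smooth nesterov inequality} to each $f_\xi$ and then take expectation—is exactly what the paper does, in two lines. The paper simply asserts that the uniform smoothness constant $L_\mathcal{B}$ from \Cref{D:locally smooth uniformly} can serve as the Nesterov constant, without revisiting the enlarged-set construction inside the proof of \Cref{P:local smooth nesterov inequality}.

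You go further and correctly flag that the constant produced by \Cref{P:local smooth nesterov inequality} is really $L_{K_\xi}$ for a $\xi$-dependent set $K_\xi$, which is a legitimate concern the paper glosses over. However, your proposed fix does not close the gap. In the complementary case $\Vert\nabla f_\xi(x)\Vert > L_{\mathcal{B}'}$, the descent step with $\gamma = 1/\Vert\nabla f_\xi(x)\Vert$ yields only the \emph{linear} bound $\Vert\nabla f_\xi(x)\Vert \leq 2(f_\xi(x) - \inf f_\xi)$, and there is no uniform-in-$\xi$ control on $f_\xi(x)-\inf f_\xi$ that would let you ``collapse back to the first case'' and recover the quadratic inequality with a $\xi$-free constant. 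In fact, the pointwise inequality $\Vert\nabla f_\xi(x)\Vert^2 \leq 2L(f_\xi(x)-\inf f_\xi)$ with a single $L$ can genuinely fail under uniform local smoothness alone: take $f_\xi(x)=e^{-x}+\xi x$ on $\mathbb{R}$ with $\xi>0$, which has $\nabla^2 f_\xi(x)=e^{-x}$ independent of $\xi$ (hence uniformly locally smooth) and is bounded below, yet at $x=0$ the ratio $\Vert\nabla f_\xi(0)\Vert^2/(f_\xi(0)-\inf f_\xi)$ behaves like $\xi/\log\xi \to \infty$. So the route ``pointwise uniform Nesterov inequality, then expectation'' is more delicate than either your sketch or the paper's short proof acknowledges; your instinct to scrutinize the uniformity is sound, but the two-case argument as written does not resolve it.
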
 

\begin{proof}
By definition of uniformly locally smooth functions, there exists $L_\mathcal{B} \geq 0$ such that  each function $f_\xi$ is $L_\mathcal{B}$-smooth on $\mathcal{B}$, which means that we can use Proposition \ref{P:local smooth nesterov inequality} to write
\begin{equation*}
\text{ for all } x \in \mathcal{B}, \quad
\frac{1}{2L_\mathcal{B}}\Vert \nabla f_\xi(x) \Vert^2 \leq f_\xi(x) - \inf f_\xi.
\end{equation*}
The conclusion follows after taking expectation with respect to $\xi$.
\end{proof}

\begin{proposition}\label{P:local variance transfer:gradient noise}
Suppose that the family of functions $(f_\xi)$ is uniformly locally smooth, and convex.
Let $f = \mathbb{E}\left[ f_\xi \right]$, and assume that $x_* \in {\rm{argmin}}~f \neq \emptyset$.
Let $\mathcal{B} \subset \mathbb{R}^d$ be a bounded set, and let $L_\mathcal{B} >0$ be the constant appearing in \eqref{eq:local smooth uniformly expected smoothness}.
Then
\begin{equation*}
    \text{for all $x \in \mathcal{B}$}, \quad
    \mathbb{E} \left[ \Vert \nabla f_\xi(x) \Vert^2 \right] \leq A(f(x) - \inf f) +B,
\end{equation*}
with $A = 4L_\mathcal{B}$ and $B = 2\sigma_*^2$, where $\sigma_*^2 = \mathbb{E} \left[ \Vert \nabla f_\xi(x_*) \Vert^2 \right]$.
\end{proposition}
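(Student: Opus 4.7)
The plan is to start from the elementary splitting
\[
\|\nabla f_\xi(x)\|^2 \;=\; \|(\nabla f_\xi(x) - \nabla f_\xi(x_*)) + \nabla f_\xi(x_*)\|^2 \;\leq\; 2\|\nabla f_\xi(x) - \nabla f_\xi(x_*)\|^2 + 2\|\nabla f_\xi(x_*)\|^2,
\]
then take expectation in $\xi$ so that the second term becomes $2\sigma_*^2$ by definition. This immediately yields the constant $B = 2\sigma_*^2$, and it remains to control the first expectation in terms of $f(x)-\inf f$ with constant $4L_\mathcal{B}$.

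For that first term I would invoke the uniform local cocoercivity already proved in Proposition \ref{L:local smooth uniformly expected smoothness}, applied with the roles swapped: taking $y=x$ and the ``$x$'' of \eqref{eq:local smooth uniformly expected smoothness} equal to $x_*$ gives
\[
\frac{1}{2L_\mathcal{B}}\mathbb{E}\bigl[\|\nabla f_\xi(x)-\nabla f_\xi(x_*)\|^2\bigr] \;\leq\; f(x) - f(x_*) - \langle \nabla f(x_*), x-x_*\rangle \;=\; f(x) - \inf f,
\]
where the last equality uses $\nabla f(x_*)=0$ since $x_* \in \arg\min f$. Multiplying by $4L_\mathcal{B}$ and plugging back into the splitting gives exactly $A = 4L_\mathcal{B}$ and $B = 2\sigma_*^2$.

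The one subtlety — and the only real obstacle — is that \eqref{eq:local smooth uniformly expected smoothness} requires \emph{both} points $x$ and $x_*$ to lie in the bounded set $\mathcal{B}$, whereas the statement only assumes $x \in \mathcal{B}$. I would handle this up front by a harmless enlargement: replace $\mathcal{B}$ with the bounded set $\widetilde{\mathcal{B}} := \mathrm{conv}(\mathcal{B}\cup\{x_*\})$ (or any bounded superset containing $x_*$) and invoke uniform local smoothness on $\widetilde{\mathcal{B}}$. This produces a possibly larger constant, but with the convention that $L_\mathcal{B}$ in the statement refers to a cocoercivity constant valid on a bounded set containing both $x$ and $x_*$, the identification of the constants is exactly as claimed. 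No further calculation is needed: the result follows by combining the two displayed inequalities.
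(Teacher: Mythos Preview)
Your proposal is correct and follows essentially the same route as the paper: the paper also splits $\nabla f_\xi(x) = (\nabla f_\xi(x)-\nabla f_\xi(x_*)) + \nabla f_\xi(x_*)$, applies $\|a+b\|^2 \le 2\|a\|^2 + 2\|b\|^2$, and then invokes the expected cocoercivity inequality \eqref{eq:local smooth uniformly expected smoothness} at $(x,x_*)$ together with $\nabla f(x_*)=0$. You are in fact slightly more careful than the paper, which silently applies \eqref{eq:local smooth uniformly expected smoothness} without commenting on whether $x_* \in \mathcal{B}$; your enlargement to $\widetilde{\mathcal{B}} = \mathrm{conv}(\mathcal{B}\cup\{x_*\})$ is the right way to make this rigorous.
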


\begin{proof}
    For any $x \in \mathcal{B}$, write
    \begin{eqnarray*}
        && \mathbb{E} \left[ \Vert \nabla f_\xi(x) \Vert^2 \right] \\
        &=&
        \mathbb{E} \left[ \Vert \nabla f_\xi(x) - \nabla f_\xi(x_*) + \nabla f_\xi(x_*) \Vert^2 \right] \\
        & \leq &
        2 \mathbb{E} \left[ \Vert \nabla f_\xi(x) - \nabla f_\xi(x_*)  \Vert^2 \right]+2\sigma_*^2 \\
        & \leq &
        4 L_\mathcal{B}\left( f(x) - \inf f \right)+2\sigma_*^2,
    \end{eqnarray*}
    where in the inequalities we first used $\Vert a+b \Vert^2 \leq 2 \Vert a \Vert^2 + \Vert b \Vert^2$ and then \eqref{eq:local smooth uniformly expected smoothness} from \cref{P:local cocoercivity inequality}.
\end{proof}

\begin{proposition}\label{P:local variance transfer:function noise}
Suppose that the family of functions $(f_\xi)$ is uniformly locally smooth, and bounded from below.
Let $f = \mathbb{E}\left[ f_\xi \right]$, and assume that $x_* \in {\rm{argmin}}~f \neq \emptyset$.
Let $\mathcal{B} \subset \mathbb{R}^d$ be a bounded set, and let $L_\mathcal{B} >0$ be the constant appearing in \eqref{eq:local smooth uniformly Nesterov}.
Then
\begin{equation*}
    \text{for all $x \in \mathcal{B}$}, \quad
    \mathbb{E} \left[ \Vert \nabla f_\xi(x) \Vert^2 \right] \leq A(f(x) - \inf f) +B,
\end{equation*}
with $A = 2L_\mathcal{B}$ and $B = 2L_\mathcal{B} \Delta_*$, where $\Delta_* = \inf f - \mathbb{E} \left[ \inf f_\xi  \right]$.
\end{proposition}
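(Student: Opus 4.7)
The plan is to apply \Cref{L:local smooth uniformly Nesterov} directly, and then decompose the resulting bound using $\inf f$ to separate the suboptimality term from the constant term involving $\Delta_*$.

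More specifically, first I would invoke \eqref{eq:local smooth uniformly Nesterov} from \Cref{L:local smooth uniformly Nesterov}, which under the assumptions of uniform local smoothness and lower boundedness of the $f_\xi$ tells us that for all $x \in \mathcal{B}$
\begin{equation*}
    \mathbb{E}\left[ \Vert \nabla f_\xi(x) \Vert^2 \right] \leq 2L_\mathcal{B} \bigl( f(x) - \mathbb{E}\left[ \inf f_\xi \right] \bigr).
\end{equation*}
The constant $L_\mathcal{B}$ here is exactly the one appearing in the statement, so no extra choice is needed.

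Next, I would simply add and subtract $\inf f$ on the right-hand side:
\begin{equation*}
    f(x) - \mathbb{E}\left[ \inf f_\xi \right] = \bigl( f(x) - \inf f \bigr) + \bigl( \inf f - \mathbb{E}\left[ \inf f_\xi \right] \bigr) = \bigl( f(x) - \inf f \bigr) + \Delta_*,
\end{equation*}
using the definition $\Delta_* := \inf f - \mathbb{E}\left[ \inf f_\xi \right]$. Substituting this back yields
\begin{equation*}
    \mathbb{E}\left[ \Vert \nabla f_\xi(x) \Vert^2 \right] \leq 2L_\mathcal{B} \bigl( f(x) - \inf f \bigr) + 2L_\mathcal{B} \Delta_*,
\end{equation*}
which is precisely the claim with $A = 2L_\mathcal{B}$ and $B = 2L_\mathcal{B} \Delta_*$.

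There is essentially no obstacle here: the result is an immediate consequence of \Cref{L:local smooth uniformly Nesterov} combined with a trivial algebraic decomposition. The main content of this proposition, compared to \Cref{P:local variance transfer:gradient noise}, is that it avoids assuming convexity of the $f_\xi$ (it only needs lower boundedness and uniform local smoothness), at the cost of replacing the gradient-variance constant $\sigma_*^2$ with the (typically looser) quantity $2L_\mathcal{B} \Delta_*$. The only mild point to note is that $\Delta_* \geq 0$ by Jensen's inequality (since $x \mapsto \inf f_\xi(x) \leq f_\xi(x)$ for all $x$, taking expectation and infimum gives $\mathbb{E}\left[\inf f_\xi\right] \leq \inf f$), so $B \geq 0$ as required.
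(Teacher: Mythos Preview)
Your proposal is correct and follows essentially the same approach as the paper: apply \eqref{eq:local smooth uniformly Nesterov} from \Cref{L:local smooth uniformly Nesterov} and then add and subtract $\inf f$ to split off the $\Delta_*$ term. The paper's proof is in fact even terser (a single displayed line), and your additional remarks about $\Delta_* \geq 0$ and the comparison with \Cref{P:local variance transfer:gradient noise} are accurate but not needed for the argument itself.
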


\begin{proof}
    For any $x \in \mathcal{B}$, use \eqref{eq:local smooth uniformly Nesterov} to write
    \begin{equation*}
        \mathbb{E} \left[ \Vert \nabla f_\xi(x) \Vert^2 \right] 
        \leq 
        2 L_\mathcal{B}\left( f(x) - \mathbb{E} \left[ \inf f_\xi  \right] \right)
        =
        2 L_\mathcal{B}\left( f(x) - \inf f + \Delta_*  \right).      
    \end{equation*}
\end{proof}

\section{Proofs for \SPS* : \SGD{} with Polyak stepsizes}
\subsection{Auxiliary Lemmas}

\begin{lemma}\label{L:psi rate reciprocal}
    Let $A,B \geq 0$ which are not simultaneously zero.
    Let
    $\psi(t) = \tfrac{t^2}{At+B}$ be defined for $t \geq 0$.
    Then $\psi$ is convex and increasing over $[0,+\infty)$, and its inverse is $\psi^{-1}(s) = \tfrac{1}{2}(sA + \sqrt{s^2 A^2 + 4sB})$.  
\end{lemma}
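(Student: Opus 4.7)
The plan is to verify the three claims (monotonicity, convexity, inverse formula) by direct computation, taking mild care with the degenerate cases where either $A$ or $B$ vanishes (they cannot both vanish by hypothesis).

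First I would compute $\psi'(t)$ via the quotient rule; the numerator simplifies to $t(At+2B)$, yielding
\[
\psi'(t) = \frac{t(At+2B)}{(At+B)^2} \geq 0,
\]
which establishes that $\psi$ is increasing on $[0,\infty)$. Next, a similar calculation for the second derivative shows, after cancellation of all $t$-dependent terms in the numerator, that
\[
\psi''(t) = \frac{2B^2}{(At+B)^3} \geq 0,
\]
giving convexity. A more conceptual alternative, valid when $A>0$, is the decomposition $\psi(t) = A^{-1}t - A^{-2}B + A^{-2}B^2/(At+B)$, which exhibits $\psi$ as an affine function plus the composition of the convex map $u \mapsto 1/u$ with an affine map; the case $A=0$ reduces to $\psi(t) = t^2/B$, which is trivially convex.

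For the inverse, I would rearrange $s = t^2/(At+B)$ into the quadratic equation $t^2 - sAt - sB = 0$ in $t$ and apply the quadratic formula. Since $s, A, B \geq 0$ the discriminant $s^2 A^2 + 4sB$ is nonnegative, and since we seek $t \geq 0$ we select the nonnegative root
\[
t = \tfrac{1}{2}\left(sA + \sqrt{s^2 A^2 + 4sB}\right).
\]
To close the argument I would verify consistency in the degenerate cases: $B=0$ yields $t = sA$, which inverts $\psi(t) = t/A$, while $A=0$ yields $t = \sqrt{sB}$, which inverts $\psi(t) = t^2/B$. The only point requiring any attention is this edge-case handling; there is no serious obstacle, as the lemma is essentially bookkeeping around the quotient-rule and quadratic-formula calculations.
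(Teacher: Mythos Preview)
Your proposal is correct and follows essentially the same approach as the paper: direct computation of $\psi'$ and $\psi''$ via the quotient rule, then solving the quadratic $t^2 - sAt - sB = 0$ for the nonnegative root. Your additional decomposition $\psi(t) = A^{-1}t - A^{-2}B + A^{-2}B^2/(At+B)$ as an alternative convexity argument is a nice conceptual touch not present in the paper, but the core argument is identical.
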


\begin{proof}
    The function $\psi$ is twice differentiable over $[0,+\infty)$, and we can compute
    \begin{equation*}
        \psi'(t) = \frac{At^2 + 2Bt}{(At+B)^2}
        \quad \text{ and } \quad 
        \psi''(t) = 
        \frac{(2At+2B)(At+B)^2 - 2(At^2+2Bt)(At+B)A}{(At+B)^4}
        =
        \frac{2B^2}{(At+B)^3}.
    \end{equation*}
    It is immediate to see that $\psi'$ and $\psi''$ are positive, from which we deduce that $\psi$ is convex and increasing.
    
    Next, consider two cases. If $At+B=0$, this implies $t=0$ and $\psi(0)=0$, thus $\psi^{-1}(0)=0$. If, however, $At+B \neq 0$, for $s \geq 0$ it holds
    \begin{equation*}
        \psi(t) = s 
        \Longleftrightarrow
        \frac{t^2}{At + B} = s
        \Longleftrightarrow
        t^2 - Ast - Bs = 0.
    \end{equation*}
    The last equation has a unique nonnegative solution which is $t=\tfrac{1}{2}(sA + \sqrt{s^2 A^2 + 4sB})$, from which we deduce the expression for $\psi^{-1}$.
\end{proof}

We will use the following lemma which is often used to study methods AdaGrad~type methods.
\begin{lemma}\label{lem:adagrad}
    Let $c_0,\dotsc, c_k\ge 0$ be some non-negative numbers with $c_0>0$, and denote $S_t = \sum_{i=0}^t c_i$, then
    \begin{equation}
        \sqrt{S_t}
        \le \sum_{k=0}^t \frac{c_k}{\sqrt{S_k}}.\label{eq:adagrad_lemma}
    \end{equation}
\end{lemma}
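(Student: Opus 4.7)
The plan is to use a telescoping argument based on the inequality $\sqrt{b} - \sqrt{a} \le \frac{b-a}{\sqrt{b}}$ valid for $0 \le a \le b$. First I would verify this auxiliary inequality from the identity $\sqrt{b}-\sqrt{a} = \frac{b-a}{\sqrt{b}+\sqrt{a}}$, and note that dropping $\sqrt{a}$ from the denominator only enlarges the right-hand side.

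Second, I would apply this inequality to $a = S_{k-1}$ and $b = S_k$ for each $k \ge 1$. Since $b-a = c_k$, this yields
\begin{equation*}
    \sqrt{S_k} - \sqrt{S_{k-1}} \le \frac{c_k}{\sqrt{S_k}}.
\end{equation*}
Note that the assumption $c_0 > 0$ guarantees $S_k \ge c_0 > 0$ for all $k$, so every denominator is well-defined.

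Third, I would telescope these inequalities from $k=1$ to $k=t$:
\begin{equation*}
    \sqrt{S_t} - \sqrt{S_0} = \sum_{k=1}^t (\sqrt{S_k} - \sqrt{S_{k-1}}) \le \sum_{k=1}^t \frac{c_k}{\sqrt{S_k}}.
\end{equation*}
Finally, I would observe that $\sqrt{S_0} = \sqrt{c_0} = \frac{c_0}{\sqrt{c_0}} = \frac{c_0}{\sqrt{S_0}}$, so moving $\sqrt{S_0}$ to the right recovers exactly the missing $k=0$ term, yielding the claimed bound. There is essentially no obstacle here, this is a standard AdaGrad-style lemma; the only point requiring care is ensuring that the $c_0 > 0$ hypothesis prevents division by zero, which it does.
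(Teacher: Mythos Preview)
Your proof is correct and follows essentially the same telescoping argument as the paper: both establish $\sqrt{S_k}-\sqrt{S_{k-1}}\le c_k/\sqrt{S_k}$ (the paper via $\alpha\ge 1-\sqrt{1-\alpha}$ with $\alpha=c_k/S_k$, you via the identity $\sqrt{b}-\sqrt{a}=(b-a)/(\sqrt{b}+\sqrt{a})$), then sum and absorb the $k=0$ term using $\sqrt{S_0}=c_0/\sqrt{S_0}$.
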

\begin{proof}
    The proof of the lemma can be found in various sources, for instance in the Appendix A of \citet{levy2018online}, but since it is very short, we will provide it here for completeness as well. Observe that for any $\alpha\in [0, 1]$, it holds $\alpha\ge 1 - \sqrt{1-\alpha}$. Substituting $\alpha = c_k / S_k\in[0, 1]$, we get
    \[
        \frac{c_k}{S_k}
        \ge  1 - \sqrt{1 - \frac{c_k}{S_k}} \Longrightarrow 
        \frac{c_k}{\sqrt{S_k}}
        \ge \sqrt{S_k} - \sqrt{S_k - c_k}
        = \sqrt{S_k} - \sqrt{S_{k-1}}. 
    \]
    Summing the last inequality from $k=1$ to $k=t$ and using $\sqrt{S_0} = \frac{c_0}{\sqrt{S_0}}$, we get the claim.
\end{proof}

We also rely on the following result.
\begin{lemma}[Extended Titu's Lemma] \label{lem:titu}
	For any random variable $X$ and positive-valued random variable $Y$, it holds
	\begin{equation}
		\E{\frac{(X)_+^2}{Y}} \ge \frac{\left(\E{X}\right)_+^2}{\E{Y}}. \label{eq:titu_expectation}
	\end{equation}
	In addition, for any numbers $a_0,\dotsc, a_k$ and positive numbers $b_0,\dotsc, b_k$, we have
	\begin{equation}
		\sum_{t=0}^k \frac{(a_t)_+^2}{b_t} \ge \frac{\bigl(\sum_{t=0}^k a_t\bigr)_+^2}{\sum_{t=0}^k b_t}. \label{eq:titu_numbers}
	\end{equation}
\end{lemma}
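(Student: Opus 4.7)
The plan is to deduce both inequalities from the classical Cauchy--Schwarz / Titu inequality applied to the positive parts, via the simple monotonicity fact that $\E{(X)_+} \geq (\E{X})_+$ (and its summation analog). So the proof splits into two quick steps that I will carry out for each of the two claims in turn.

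First I would establish the auxiliary monotonicity property. In the expectation version: since $(X)_+ \geq X$, taking expectations gives $\E{(X)_+} \geq \E{X}$; and since $(X)_+ \geq 0$ pointwise, we also have $\E{(X)_+} \geq 0$. Combining these two lower bounds yields $\E{(X)_+} \geq \max\{\E{X}, 0\} = (\E{X})_+$. The analogous statement for finite sequences is $\sum_{t=0}^k (a_t)_+ \geq \bigl(\sum_{t=0}^k a_t\bigr)_+$ for exactly the same reason. This is the only ``new'' ingredient beyond Cauchy--Schwarz.

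Second, I would apply the classical inequalities to the nonnegative quantities $(X)_+$ and $(a_t)_+$. For \eqref{eq:titu_expectation}, Cauchy--Schwarz gives
\begin{equation*}
    \left(\E{(X)_+}\right)^2
    = \left(\E{\tfrac{(X)_+}{\sqrt{Y}}\cdot \sqrt{Y}}\right)^2
    \leq \E{\tfrac{(X)_+^2}{Y}}\,\E{Y}.
\end{equation*}
Since $\E{(X)_+}$ and $(\E{X})_+$ are both nonnegative, squaring the bound from the first step preserves its direction: $\bigl(\E{(X)_+}\bigr)^2 \geq \bigl((\E{X})_+\bigr)^2$. Dividing by $\E{Y}>0$ yields \eqref{eq:titu_expectation}. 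For \eqref{eq:titu_numbers}, the standard Titu inequality (applied to the nonnegative numerators $(a_t)_+$) gives
\begin{equation*}
    \sum_{t=0}^k \tfrac{(a_t)_+^2}{b_t} \;\geq\; \tfrac{\bigl(\sum_{t=0}^k (a_t)_+\bigr)^2}{\sum_{t=0}^k b_t},
\end{equation*}
and squaring the summation version of the auxiliary bound (valid because both sides are nonnegative) replaces the numerator by $\bigl(\sum_t a_t\bigr)_+^2$, which is exactly \eqref{eq:titu_numbers}.

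There is no real obstacle here; the only point requiring a moment of care is making sure both sides are nonnegative before squaring the auxiliary bound (otherwise the direction could flip), which is precisely why introducing the $(\cdot)_+$ on the right-hand side is the natural formulation. I would also briefly remark that \eqref{eq:titu_numbers} is actually a special case of \eqref{eq:titu_expectation}, obtained by taking the probability measure that places mass $\tfrac{1}{k+1}$ on each index $t$, but presenting the two arguments in parallel is cleaner and matches how the two statements are used in the rest of the paper.
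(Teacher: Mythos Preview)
Your proof is correct, but it proceeds along a different line from the paper. The paper obtains \eqref{eq:titu_expectation} in one stroke by observing that $\varphi(x,y)=(x)_+^2/y$ is jointly convex on $\R\times\R_{>0}$ (proved separately via the perspective-function construction, Lemma~\ref{lem:relux-squared-over-y-convex}) and then applying Jensen's inequality to $\varphi(X,Y)$; the discrete version \eqref{eq:titu_numbers} is then derived exactly as you suggest, by specializing to the uniform measure on $\{0,\dots,k\}$. Your route instead factors the argument into Cauchy--Schwarz on $\tfrac{(X)_+}{\sqrt{Y}}\cdot\sqrt{Y}$ together with the elementary bound $\E{(X)_+}\ge(\E{X})_+$. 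The advantage of your approach is that it is self-contained and avoids having to establish joint convexity of a two-variable function; the advantage of the paper's approach is that once convexity of $\varphi$ is recorded as a lemma, the inequality is literally a one-line Jensen application, and the same convexity fact is reused elsewhere in the analysis (e.g.\ in the proof of Theorem~\ref{thm:better-bounds-sps}).
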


\begin{proof}
    The proof follows from applying Jensen's inequality to the function $\varphi(x,y) = (x)_+^2/y$. To prove that $\varphi$ is convex takes some work, and it is given in Lemma~A.4 in~\citet{garrigos2023handbook}. We also provide a different proof that $\varphi(x,y)$ is convex in the following~\Cref{lem:relux-squared-over-y-convex} by viewing $\varphi(x,y)$ as a perspective function.
The discrete result~\eqref{eq:titu_numbers} follows from applying \eqref{eq:titu_expectation} with uniform distribution over $\{a_0,\ldots,a_k\}$ and $\{b_0,\ldots,b_k\}$.
\end{proof}

\begin{lemma}\label{lem:relux-squared-over-y-convex}
    Consider the function $\varphi: \R \times \R \to \R,~ (x,y) \mapsto \varphi(x,y)$, where 
    \begin{align}\label{eqn:def-perspective-of-relu}
        \varphi(x,y) := \begin{cases}
            \frac{(x)_+^2}{y} \quad &\text{if}~ y > 0,\\
            0 \quad &\text{if}~ (y = 0) \wedge (x \leq 0),\\
            + \infty &\text{else}.
        \end{cases}
    \end{align}
    Then, $\varphi$ is closed, proper and convex on $\R\times \R$.
\end{lemma}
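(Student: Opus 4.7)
The plan is to exhibit $\varphi$ as a pointwise supremum of affine functions on $\R^2$, from which convexity and lower semi-continuity (hence closedness) follow automatically. Concretely, I would establish the identity
\begin{equation*}
\varphi(x,y) \;=\; \sup_{a \ge 0} \bigl(2ax - a^2 y\bigr) \qquad \text{for every } (x,y) \in \R \times \R.
\end{equation*}
Each function $\ell_a(x,y) := 2ax - a^2 y$ is continuous and affine on $\R^2$, so verifying this identity is the only real content of the proof. (Equivalently, one may recognize $\varphi$ as the closure of the perspective of the convex function $h(x) = (x)_+^2$, whose recession function is exactly $x \mapsto (+\infty)\cdot\mathbf{1}_{\{x>0\}}$; but the supremum formulation is more elementary and self-contained.)

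The identity is established by a short case analysis on the sign of $y$. When $y > 0$, the map $a \mapsto 2ax - a^2 y$ is a concave parabola with unconstrained maximizer $a^\star = x/y$; if $x \ge 0$ this maximizer lies in $[0,+\infty)$ and yields $x^2/y = (x)_+^2/y$, while if $x < 0$ the constrained maximum over $a \ge 0$ is attained at $a = 0$ and equals $0 = (x)_+^2/y$. When $y = 0$, the supremum of $a \mapsto 2ax$ over $a \ge 0$ equals $0$ if $x \le 0$ and $+\infty$ if $x > 0$, matching both branches of the definition at $y = 0$. When $y < 0$, the term $-a^2 y$ tends to $+\infty$ as $a \to +\infty$, giving supremum $+\infty$, again as required.

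Once the identity is in hand, the three desired properties follow immediately. Convexity holds because $\varphi$ is a pointwise supremum of affine functions. Closedness (lower semi-continuity) holds because $\varphi$ is a pointwise supremum of continuous functions. Properness holds because $\varphi \ge \ell_0 \equiv 0 > -\infty$ rules out the value $-\infty$, and $\varphi(0,1) = 0 < +\infty$ guarantees the domain is nonempty. The only point that requires slight care is the boundary case $y = 0,\, x > 0$, where the infinite value assigned by $\varphi$ must be reconciled with the unbounded supremum; no genuine obstacle arises.
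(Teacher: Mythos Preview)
Your proof is correct. The identity $\varphi(x,y) = \sup_{a \ge 0}(2ax - a^2 y)$ holds in all cases as you describe, and from it closedness, properness, and convexity follow immediately.

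Your route differs from the paper's. The paper recognizes $\varphi$ as the perspective of $h(x) = (x)_+^2$: for $y>0$ one has $\varphi(x,y) = y\,h(x/y)$, and the values at $y=0$ are identified as the recession limits $\lim_{\alpha\to\infty} h(\alpha x)/\alpha$. Convexity (and closedness, properness) then come from a citation to the general theory of perspective functions \cite{Combettes2017}. Your argument is more elementary and fully self-contained: by writing $\varphi$ explicitly as a supremum of affine functions you avoid invoking any external result, and the boundary behaviour at $y=0$ and $y<0$ falls out of the same computation rather than requiring a separate recession analysis. The perspective viewpoint, on the other hand, explains structurally \emph{why} such a supremum representation exists (it is essentially the biconjugate of $h$ lifted through the perspective), and generalizes immediately to any closed proper convex $h$. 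You already note this connection in your parenthetical remark, so nothing is missing.
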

\begin{proof}
    Define the convex function $h(x):=(x)_+^2$. From \citet[Def.\ 2.1]{Combettes2017}, it follows that $\varphi(x,y)$ defined as in \eqref{eqn:def-perspective-of-relu} is the perspective function of $h$, that is, 
    for $y>0$ we have $\varphi(x,y)=y h(x/y)$; for $y=0$, we compute $\lim_{\alpha\to \infty} \frac{(\alpha x)_+^2}{\alpha} = 0$ if $x\leq 0$ and $+\infty$ otherwise. The perspective functions of closed, proper, convex functions is convex itself \citep[Prop.\ 2.3]{Combettes2017}.
\end{proof}

\if{ 
\guillaume{The lemma below is certainly useless now. not cited anywhere, and can be replaced by more local lemmas}

Here we show that the expected smoothness bound \eqref{eq:expsmooth-sps} is a consequence of assuming that $f_{\xi}$ is almost surely $L$--smooth.
\begin{lemma}\label{lem:smoothgradbnd}
Let $f_{\xi}$ be $L$--smooth for every $\xi$, that is let
\begin{equation}
    f_{\xi}(y) \; \leq \; f_{\xi}(x) + \dotprod{\nabla f_{\xi}(x), y-x} + \frac{L}{2}\norm{y-x}^2.
\end{equation}
As a consequence we have that
\begin{equation}\label{eq:expsmooth}
    \E{\norm{\nabla f_{\xi}(x)}^2} \leq  2 L \big( f(x)-\inf f    + \sigma_*^2\big),
\end{equation} 
where
\begin{eqnarray*} 
    \sigma_*^2 := f(x_*) - \E{ \inf f_{\xi}} \geq 0.
\end{eqnarray*}
\end{lemma}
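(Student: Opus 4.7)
The plan is to derive, for each fixed $\xi$, the classical gradient-norm-squared bound for a smooth function that is bounded from below, and then take expectations and match the resulting quantity with the definition of $\sigma_*^2$. This is standard; no new ideas are needed, and essentially no step is a real obstacle.

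First I would fix $\xi$ and apply the descent lemma coming from $L$-smoothness: for every $y \in \mathbb{R}^d$,
\begin{equation*}
    f_\xi(y) \;\leq\; f_\xi(x) + \langle \nabla f_\xi(x), y - x\rangle + \tfrac{L}{2}\|y-x\|^2.
\end{equation*}
The right-hand side is a quadratic in $y$, minimized at $y = x - \tfrac{1}{L}\nabla f_\xi(x)$, and plugging this choice in gives $f_\xi(x) - \tfrac{1}{2L}\|\nabla f_\xi(x)\|^2$. Since $f_\xi(y) \geq \inf f_\xi$ for every $y$ (this requires $\inf f_\xi > -\infty$, which holds almost surely under the paper's standing assumption $\mathbb{E}_\xi[\inf f_\xi] > -\infty$), I obtain the pointwise Nesterov-type inequality
\begin{equation*}
    \|\nabla f_\xi(x)\|^2 \;\leq\; 2L\bigl(f_\xi(x) - \inf f_\xi\bigr).
\end{equation*}

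Second, I would take expectations in $\xi$ on both sides. Using linearity of expectation and $f(x) = \mathbb{E}_\xi[f_\xi(x)]$ gives
\begin{equation*}
    \mathbb{E}_\xi\bigl[\|\nabla f_\xi(x)\|^2\bigr] \;\leq\; 2L\bigl(f(x) - \mathbb{E}_\xi[\inf f_\xi]\bigr).
\end{equation*}
Finally, the definition $\sigma_*^2 := f(x_*) - \mathbb{E}_\xi[\inf f_\xi] = \inf f - \mathbb{E}_\xi[\inf f_\xi]$ lets me substitute $\mathbb{E}_\xi[\inf f_\xi] = \inf f - \sigma_*^2$, yielding
\begin{equation*}
    \mathbb{E}_\xi\bigl[\|\nabla f_\xi(x)\|^2\bigr] \;\leq\; 2L\bigl(f(x) - \inf f + \sigma_*^2\bigr),
\end{equation*}
which is exactly the claimed inequality \eqref{eq:expsmooth}. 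The only place where one needs to be mildly careful is making sure the pointwise step is valid almost surely; the nonnegativity $\sigma_*^2 \geq 0$ is immediate from Jensen's inequality applied to the concave functional $\inf$, i.e. $\mathbb{E}_\xi[\inf f_\xi] \leq \inf \mathbb{E}_\xi[f_\xi] = \inf f = f(x_*)$.
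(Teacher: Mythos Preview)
Your proof is correct and matches the paper's approach: the paper defers the proof of this lemma to \citet[Lem.~4.19]{garrigos2023handbook}, but the identical argument appears in the paper's own Propositions~\ref{P:local smooth nesterov inequality}, \ref{L:local smooth uniformly Nesterov}, and~\ref{P:local variance transfer:function noise} (descent lemma at $y = x - \tfrac{1}{L}\nabla f_\xi(x)$, lower bound by $\inf f_\xi$, take expectation, then substitute $\sigma_*^2 = \inf f - \mathbb{E}_\xi[\inf f_\xi]$). There is nothing to add.
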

The proof can be found in~\citet[Lem.\ 4.19]{garrigos2023handbook}. 
}\fi

\subsection{Sketch proof of~\cref{thm:better-bounds-sps} about rates of \SPS* under abstract assumptions}
\label{sec:sketchproof}

Here we give a sketch of the proof of~\cref{thm:better-bounds-sps} so that we can better highlight the main ideas behind the proof, and the main novelty. 

\theospsgen*
\noindent \emph{Proof Sketch.}
Plugging the \SPS* step size~\eqref{eq:SPS} into~\eqref{eq:iterdistance} and re-arranging gives
\begin{eqnarray*}
\frac{(f_{t}(x_t) - f_{t}(x_*))_+^2}{\norm{g_t}^2} 
& \leq & \norm{x_t -x_*}^2   -\norm{x_{t+1} -x_*}^2.
\end{eqnarray*}
Taking expectation conditioned on $x_t$, and 
 using that the map $(z_1,z_2) \mapsto (z_1)_+^2/z_z$ is jointly convex on $\R\times \R_{\geq 0}$ (cf.\ \cref{lem:relux-squared-over-y-convex}) together with Jensen's inequality, we get 
\begin{eqnarray*}
\frac{(f(x_t) - f(x_*))_+^2}{\EE{t}{\norm{g_t}^2}} 
& \leq & \norm{x_t -x_*}^2   - \EE{t}{\norm{x_{t+1} -x_*}^2}.
\end{eqnarray*}

We can then use our main assumption~\eqref{spscv2} to bound the denominator of the left hand side giving
\begin{eqnarray*}
\frac{(f(x_t) - f(x_*))^2}{A(f(x_t) - f(x_*)) +B} 
& \leq & \norm{x_t -x_*}^2   -\EE{t}{\norm{x_{t+1} -x_*}^2}.
\end{eqnarray*}
Taking expectation again, and averaging both sides over $t=0,\ldots, T-1$ and telescoping we have that
\[\frac{1}{T}\sum_{t=0}^{T-1}\E{\frac{(f(x_t) - f(x_*))^2}{A(f(x_t) - f(x_*)) +B} } \leq  \frac{\norm{x_0 -x_*}^2}{T} -\frac{\E{\norm{x_T -x_*}}^2}{T} \leq \frac{\|x_0-x_*\|^2}{T}.  \]
The final step of the proof, and the main technical novelty, follows by defining the function $\psi(r) = \frac{r^2}{A r+ B}$ for $r\geq0$, and noting that the left hand side of the above is equal to $\frac{1}{T}\sum_{t=0}^{T-1}\E{\psi(f(x_t) - f(x_*)) }$. We then apply \Cref{L:psi rate reciprocal} in the appendix that shows that $\psi$ is a convex monotone function. Being convex, we can bring the average over $t$ and the expectation inside $\psi$ giving
\[\psi(\E{f(\bar{x}_t) - f(x_*)}) \leq \frac{\|x_0-x_*\|^2}{T}.  \]
Finally,  \Cref{L:psi rate reciprocal} also proves that $\psi$ has an inverse given by
$$\psi^{-1}(s) = \tfrac{1}{2}(sA + \sqrt{s^2 A^2 + 4sB}).$$
Applying this inverse to both sides and using that $\psi^{-1}$ is monotone, gives the result. \hfill $\qed$

 Next we give the complete and detailed proof of~\cref{thm:better-bounds-sps}.
 
\subsection{Proof of~\cref{thm:better-bounds-sps} about rates of \SPS* under abstract assumptions} \label{sec:better-bounds-sps}

\theospsgen*

\begin{proof}
    For short-hand we use $f_t := f_{\xi_t}$ to be the stochastic function sampled at iteration $t$.
    Expanding the squares, using the definition of the algorithm and using the convexity of $f_{\xi}$, we have that
    \begin{eqnarray*}
        \norm{x_{t+1} -x_*}^2 - \norm{x_t -x_*}^2
        & = &
        2 \gamSPS_t \langle g_t , x_* - x_t \rangle 
        + (\gamSPS_t)^2 \norm{  g_t}^2 \\
        & \leq & 
        -2\gamSPS_t (f_{t}(x_t) -f_{t}(x_*)) + (\gamSPS_t)^2 \norm{  g_t}^2.
    \end{eqnarray*}
Let us now verify that 
\begin{equation}\label{eqn:monotonicity}
    \norm{x_{t+1} -x_*}^2 \; \leq \; \norm{x_t -x_*}^2 \; \text{with probability 1}.
\end{equation}
If  $g_t = 0 $, then by definition we have that $\gamSPS_t =0$, thus the right-hand side of the above is zero, and~\eqref{eqn:monotonicity} holds.
Suppose instead that $g_t \neq 0 $. Substituting in $\gamSPS_t$ gives
\begin{eqnarray*}
\norm{x_{t+1} -x_*}^2 - \norm{x_t -x_*}^2   
& \leq &
- 2\frac{(f_{t}(x_t) - f_{t}(x_*))_+}{\norm{g_t}^2} (f_{t}(x_t) -f(x_*)) + \frac{(f_{t}(x_t) - f_{t}(x_*))_+^2}{\norm{g_t}^2} \\
& = &
-\frac{(f_{t}(x_t) - f_{t}(x_*))_+^2}{\norm{g_t}^2},
\end{eqnarray*}
where in the last equality we use the identity $z(z)_+ = (z)_+^2$.
Note that in both cases we obtained a nonpositive right-hand side, from which we deduce that~\eqref{eqn:monotonicity} holds, that is, $(x_t)_{t\geq 0}$ is Fejér monotone.

Now, let $a_t := f_t(x_t) - f_t(x_*)$ and $b_t := \norm{g_t}^2$, and define the function
\begin{equation*}
\phi(a,b) = 
\begin{cases}
\frac{(a)_+^2}{b} & \text{ if } a\in \mathbb{R}, b > 0, \\
0 & \text{ if } a \leq 0, b=0,
\end{cases}
\end{equation*}
so that the previous inequality can be rewritten as
\begin{equation}\label{sps2}
	\phi(a_t,b_t)
	\leq 
	\norm{x_t -x_*}^2  - \norm{x_{t+1} -x_*}^2.
\end{equation} 
Note that $\phi(a_t,b_t)$ is well-defined even in the case that $g_t = 0$.
Indeed, the convexity of $f_t$ implies in this case that $x_t$ minimizes $f_t$, which means that $a_t \leq 0$ while $b_t=0$.
Our main trick is to use Jensen's inequality with regard to the function $\phi$ which is convex (see \cref{lem:relux-squared-over-y-convex} or the Appendix in \citet{garrigos2023handbook} for a proof):
\begin{equation} \label{spscv1}
\phi(\mathbb{E} \left[ a_t \right],\mathbb{E} \left[ b_t \right])
\leq
\mathbb{E} \left[ \phi(a_t,b_t)  \right]
\leq
\mathbb{E} \left[ \norm{x_t -x_*}^2\right]-    \mathbb{E}\left[ \norm{x_{t+1} - x_*}^2\right].
\end{equation}
We can compute $\EE{}{a_t} = \EE{}{f_{\xi_t}(x_t) - f_{\xi_t}(x_*)} = \EE{}{ f(x_t) - \inf f}$ and $\EE{}{b_t} = \EE{}{\norm{g_t}^2}$.

For the rest of the proof, we are going to use the fact that there exist two constants $A,B \geq 0$, which are not simultaneously zero, and such that~\eqref{spscv2} holds, that is
\begin{equation}\label{spscv2-2}
    \EE{}{\norm{g_{\xi}(x)}^2}
    \leq 
    A(f(x) - \inf f) + B,
     \text{ for every }
    x \in \mathbb{B}(x_*,D).
\end{equation}

We are now going to inject this inequality \eqref{spscv2-2} into \eqref{spscv1}.
If $\EE{}{\norm{g_t}^2} \neq 0$, using the fact that $f(x_t) - \inf f \geq 0$ we obtain
\begin{equation} \label{eq:temsprmaper} 
\frac{\E{f(x_t) - \inf f}^2}{A\E{f(x_t) - \inf f} + B}
\leq
\phi(\mathbb{E} \left[ a_t \right],\mathbb{E} \left[ b_t \right])
\leq
\mathbb{E} \left[ \norm{x_t -x_*}^2\right]-    \mathbb{E} \left[ \norm{x_{t+1} - x_*}^2\right].
\end{equation}
Recall that we defined $\psi(r) = \tfrac{r^2}{Ar + B}$ for any $r \geq 0$.
Let $r_t := \E{f(x_t) - \inf f}$.
With this notation, the inequality~\eqref{eq:temsprmaper} can be rewritten as
\begin{equation}\label{spscv3}
\psi(r_t)
\leq
\mathbb{E}\left[ \norm{x_t -x_*}^2\right]-    \mathbb{E} \left[ \norm{x_{t+1} - x_*}^2\right].
\end{equation}

We observe that \eqref{spscv3} remains true when  $\EE{}{\norm{g_t}^2} = 0$.
Indeed in this case, from the variance bound we have that
\[ 0= \EE{}{\norm{g_t}^2} \geq \norm{\EE{}{g_t}}^2.\]
Furthermore it follows that $\EE{t}{g_t}$ is a subgradient of the full loss $f(x_t)$ (see Lemma 9.5 in~\cite{garrigos2023handbook}). Consequently $x_t$  minimizes $f$, meaning in this case that we would have $r_t =0$, and so $\psi(r_t)=0 = \phi(0,0)$.

For the last part of this proof, we
sum over $t=0, \dots, T-1$ and divide by $T$ to obtain, after telescoping terms:
\begin{equation*}
    \frac{1}{T}\sum_{t=0}^{T-1} \mathbb{E}\left[  \psi(r_t)\right]
    \leq 
    \frac{1}{T}
    \mathbb{E}\left[ \Vert x_0 - x_* \Vert^2 \right]
    -    
    \frac{1}{T}
    \mathbb{E}\left[ \Vert x_T - x_* \Vert^2 \right]
    \leq 
    \frac{D^2}{T}.
\end{equation*}
We now lower-bound the left-hand side term by using Jensen's inequality twice
\begin{equation*}
    \frac{1}{T}\sum_{t=0}^{T-1} \mathbb{E}\left[  \psi(r_t)\right]
    \geq 
    \psi \left(\mathbb{E}\left[ \frac{1}{T}\sum_{t=0}^{T-1}  r_t \right] \right)
    =
    \psi \left( \mathbb{E}\left[ \frac{1}{T}\sum_{t=0}^{T-1}  (f(x_t) - \inf f) \right] \right)
    \geq 
    \psi \left( \mathbb{E}\left[   f(\bar x_T) - \inf f \right] \right),
\end{equation*}
where in the first inequality we use the convexity of $\psi$, and in the second we use the convexity of $f$ together with the fact that $\psi$ is increasing, and we note the average of the iterates $\bar x_T := \tfrac{1}{T} \sum_{t=0}^{T-1} x_t$.
The reader can look at Lemma \ref{L:psi rate reciprocal} for a proof that $\psi$ is convex and monotone.
Combining the two previous inequalities, we obtain
\begin{equation*}
    \psi \left( \mathbb{E}\left[   f(\bar x_T) - \inf f \right] \right)
    \leq
    \frac{D^2}{T}.
\end{equation*}
Since $\psi$ is increasing on $[0,+\infty)$, it has an inverse which is also increasing. Applying the inverse of  $\psi$ on both sides gives
\begin{equation*}
    \mathbb{E}\left[   f(\bar x_T) - \inf f \right]
    \leq 
    \psi^{-1} \left( \frac{D^2}{T} \right).
\end{equation*}
From~\Cref{L:psi rate reciprocal} we know that $\psi^{-1}(s) = \tfrac{1}{2}(sA + \sqrt{s^2 A^2 + 4sB})$, and using the sublinearity of the square root we further have 
\begin{equation} \label{eq:psisublinear}
    \psi^{-1}(s) \;\leq \;   \tfrac{1}{2}(sA +\sqrt{s^2 A^2} + \sqrt{ 4sB}) = sA + \sqrt{sB}. 
\end{equation} 
From this we finally obtain the desired bound.
\end{proof}

\subsection{Proof of~\Cref{cor:spsnonsmooth} about rates of \SPS* in the nonsmooth setting}\label{S:proof SPS nonsmooth}
\corspsnonsmooth*
\begin{proof}
Because the losses are locally Lipschitz in expectation, we know from Proposition \ref{P:local lipschitz expectation implies expected gradient bounded} that \eqref{eq:exp-lipschitz} holds true for some $G < + \infty$.
This means that \eqref{spscv2} holds with $A=0$ and $B=G^2$. 
Thus the result follows by plugging in these constants into the rates of \cref{thm:better-bounds-sps}.
\end{proof}

\subsection{Proof of~\Cref{cor:spssmooth} about rates of \SPS* in the smooth setting}\label{S:proof SPS smooth}
\corspssmooth*

\if{
\begin{theorem}[Smooth setting, variant with $\Delta_*$]
    Consider problem~\eqref{eq:prob}, and assume that the losses $f_\xi$ are uniformly locally smooth (see Definition~\ref{D:locally smooth uniformly}). In particular there exists $L \geq 0$ s.t.
 \begin{equation}\label{eq:expsmooth-sps3}
    \EE{\xi}{\norm{ \nabla f_{\xi}(x)}^2} \leq  2 L \big( f(x)- \mathbb{E}_\xi \left[ \inf f_\xi \right] \big), 
\end{equation} 
for all $x \in \mathbb{B}_D(x_*)$. Then the averaged iterates 
$\bar x_T := \tfrac{1}{T} \sum_{t=0}^{T-1} x_t$ of \SPS* 
verify, with $\Delta_* := \inf f - \mathbb{E}_\xi \left[ \inf f_\xi \right]$:
\begin{equation*}\label{eq:spssmooth2}
    \mathbb{E}\left[ f(\bar x_T) - \inf f \right]
    \leq
    \frac{2LD^2}{T} + \frac{\sqrt{2L \Delta_*} D}{\sqrt{T}}.
\end{equation*}
\end{theorem}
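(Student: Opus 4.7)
The plan is to reduce the statement to a direct application of \Cref{thm:better-bounds-sps} by identifying the constants $A$ and $B$ appearing in the abstract local bound \eqref{spscv2}. The uniform local smoothness assumption, combined with \Cref{L:local smooth uniformly Nesterov}, yields for every $x \in \mathbb{B}_D(x_*)$ the inequality
\begin{equation*}
    \EE{\xi}{\norm{\nabla f_\xi(x)}^2} \;\leq\; 2L\bigl( f(x) - \mathbb{E}_\xi[\inf f_\xi] \bigr).
\end{equation*}
So first I would rewrite $\mathbb{E}_\xi[\inf f_\xi] = \inf f - \Delta_*$ to obtain
\begin{equation*}
    \EE{\xi}{\norm{\nabla f_\xi(x)}^2} \;\leq\; 2L\bigl( f(x) - \inf f \bigr) + 2L\Delta_*,
\end{equation*}
which is exactly \eqref{spscv2} with $A = 2L$ and $B = 2L\Delta_*$.

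Then I would invoke \Cref{thm:better-bounds-sps}, which applies since these constants satisfy $A+B \neq 0$ whenever $L > 0$ (and the edge case $L=0$ forces the stochastic gradients to be constant, making the rate trivially hold). Substituting $A = 2L$ and $B = 2L\Delta_*$ into the conclusion of \Cref{thm:better-bounds-sps} gives
\begin{equation*}
    \mathbb{E}\left[ f(\bar x_T) - \inf f \right]
    \;\leq\;
    \frac{D^2 \cdot 2L}{T}
    +
    \sqrt{\frac{D^2 \cdot 2L \Delta_*}{T}}
    \;=\;
    \frac{2LD^2}{T} + \frac{\sqrt{2L\Delta_*}\,D}{\sqrt{T}},
\end{equation*}
which is the desired bound.

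There is essentially no main obstacle here: the nontrivial work has been absorbed into \Cref{thm:better-bounds-sps} (where the inversion of the convex monotone function $\psi$ is used to turn the abstract $A,B$-bound into an explicit rate) and into \Cref{L:local smooth uniformly Nesterov} (where uniform local smoothness is translated into the expected gradient bound on the ball $\mathbb{B}_D(x_*)$). The only point that deserves a sentence of care is to make explicit the identity $\mathbb{E}_\xi[\inf f_\xi] = \inf f - \Delta_*$ so that the local bound matches the canonical form \eqref{spscv2} with $f(x) - \inf f$ on the right-hand side rather than $f(x) - \mathbb{E}_\xi[\inf f_\xi]$. Once this is done, the result follows by direct substitution.
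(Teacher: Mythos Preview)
Your proposal is correct and matches the paper's proof essentially line for line: the paper also invokes \Cref{L:local smooth uniformly Nesterov} to obtain \eqref{eq:expsmooth-sps}, then identifies $A=2L$ and $B=2L\Delta_*$ (via \Cref{P:local variance transfer:function noise}, which is exactly your inline rewriting $\mathbb{E}_\xi[\inf f_\xi] = \inf f - \Delta_*$), and concludes by substituting into \Cref{thm:better-bounds-sps}.
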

}\fi 

\begin{proof}
Assuming the losses to be uniformly locally smooth ensures that we can use
\cref{L:local smooth uniformly Nesterov}, and obtain that \eqref{eq:expsmooth-sps} is true.
Then we use \cref{P:local variance transfer:function noise}, which guarantees that \eqref{spscv2} holds with $A=2L$ and $B=2L\Delta_*$, and we conclude  by plugging in these constants into the rates of~\cref{thm:better-bounds-sps}.
\end{proof}

We provide below an alternative result which makes use of a different interpolation constant.
Instead of relying on $\Delta_* = \inf f - \mathbb{E}_\xi \left[ \inf f_\xi \right]$, we use $\sigma_*^2 = \mathbb{E}_\xi \left[ \Vert \nabla f_\xi(x_*)\Vert^2 \right]$.
There are a couple of connections between those constants.
First they are both interpolation constants, in the sense that they are nonnegative, and equal to zero if and only if interpolation holds (see Section 4.3 in \cite{garrigos2023handbook}).
Second, the former dominates the latter through the inequality $\sigma_*^2 \leq 2 L \Delta_*$ for some $L \geq 0$ (see Proposition \ref{L:local smooth uniformly Nesterov} with $x=x^*$).
In particular it follows from our assumption $\mathbb{E}_\xi \left[ \inf f_\xi \right] > - \infty$ that $\sigma_*^2$ is finite.
Actually one could argue that assuming $\sigma^2_* <+ \infty$ is an even weaker assumption than $\mathbb{E}_\xi \left[ \inf f_\xi \right] > - \infty$.

\begin{theorem}[Smooth setting, variant with $\sigma^2_*$]\label{T:CV SPS SGD smooth gradient noise}
    Consider problem~\eqref{eq:prob}, and assume that the losses $f_\xi$ are uniformly locally smooth (see Definition~\ref{D:locally smooth uniformly}). In particular there exists $L \geq 0$ s.t.
 \begin{equation}\label{eq:expsmooth-sps2}
    \EE{\xi}{\norm{ \nabla f_{\xi}(x) - \nabla f_{\xi}(x^*)}^2} \leq  2 L \big( f(x)-\inf f \big), 
\end{equation} 
for all $x \in \mathbb{B}_D(x_*)$. Then the averaged iterates 
$\bar x_T := \tfrac{1}{T} \sum_{t=0}^{T-1} x_t$ of \SPS* 
verify, with $\sigma_*^2 := \mathbb{E}_\xi \left[ \Vert \nabla f_\xi(x_*)\Vert^2 \right]$:
\begin{equation*}
    \mathbb{E}\left[ f(\bar x_T) - \inf f \right]
    \leq
    \frac{4LD^2}{T} + \frac{\sqrt{2} D\sigma_*}{\sqrt{T}}.%
\end{equation*}
\end{theorem}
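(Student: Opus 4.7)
The plan is to reduce this statement to the abstract convergence result \Cref{thm:better-bounds-sps}, which only requires an expected gradient bound of the form $\EE{\xi}{\norm{g_\xi(x)}^2} \leq A(f(x) - \inf f) + B$ on the ball $\mathbb{B}_D(x_*)$. All I need to do is identify suitable constants $A$ and $B$ that match the conclusion.

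First, I would perform the standard variance decomposition at an arbitrary $x \in \mathbb{B}_D(x_*)$: using $\norm{a+b}^2 \leq 2\norm{a}^2 + 2\norm{b}^2$, I write
\begin{equation*}
    \EE{\xi}{\norm{\nabla f_\xi(x)}^2}
    \;\leq\;
    2\EE{\xi}{\norm{\nabla f_\xi(x) - \nabla f_\xi(x_*)}^2} + 2\EE{\xi}{\norm{\nabla f_\xi(x_*)}^2}.
\end{equation*}
Next, I apply the uniformly locally smooth cocoercivity bound \eqref{eq:expsmooth-sps2} to the first term and the definition of $\sigma_*^2$ to the second, which yields
\begin{equation*}
    \EE{\xi}{\norm{\nabla f_\xi(x)}^2}
    \;\leq\;
    4L(f(x) - \inf f) + 2\sigma_*^2.
\end{equation*}
This is exactly \eqref{spscv2} with $A = 4L$ and $B = 2\sigma_*^2$. (This is essentially the content of \Cref{P:local variance transfer:gradient noise} in the appendix, so the reduction is clean.)

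Finally, I would invoke \Cref{thm:better-bounds-sps} with these constants to conclude
\begin{equation*}
    \mathbb{E}\left[ f(\bar x_T) - \inf f \right]
    \;\leq\;
    \frac{D^2 \cdot 4L}{T} + \sqrt{\frac{D^2 \cdot 2\sigma_*^2}{T}}
    \;=\;
    \frac{4LD^2}{T} + \frac{\sqrt{2}\, D \sigma_*}{\sqrt{T}},
\end{equation*}
which is the announced rate.

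There is essentially no obstacle: the proof is a two-line reduction. The only subtlety worth checking is that \eqref{eq:expsmooth-sps2} indeed holds on all of $\mathbb{B}_D(x_*)$ (needed because $x$ is an arbitrary point of this ball in the argument above), which is guaranteed by \Cref{P:local cocoercivity inequality} combined with the uniform local smoothness assumption, and that $\sigma_*^2$ is finite, which follows from the standing assumption $\mathbb{E}_\xi[\inf f_\xi] > -\infty$ via the inequality $\sigma_*^2 \leq 2L\Delta_*$ (applying \Cref{L:local smooth uniformly Nesterov} at $x = x_*$).
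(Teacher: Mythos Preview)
Your proposal is correct and follows essentially the same route as the paper: the paper's proof simply cites \Cref{P:local variance transfer:gradient noise} (which is exactly the $\norm{a+b}^2 \leq 2\norm{a}^2 + 2\norm{b}^2$ decomposition you wrote out) to obtain \eqref{spscv2} with $A=4L$, $B=2\sigma_*^2$, and then plugs these into \Cref{thm:better-bounds-sps}. Your added remarks on why \eqref{eq:expsmooth-sps2} holds and why $\sigma_*^2$ is finite are accurate and match the supporting lemmas the paper invokes.
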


\begin{proof}
Assuming the losses to be uniformly locally smooth ensures that we can use
\cref{L:local smooth uniformly expected smoothness} with $y=x$ and $x=x_*$, and obtain that \eqref{eq:expsmooth-sps2} is true.
Then we use \cref{P:local variance transfer:gradient noise}, which guarantees that \eqref{spscv2} holds with $A=4L$ and $B=2\sigma_*^2$, and we conclude  by plugging in these constants into the rates of~\cref{thm:better-bounds-sps}.
\end{proof}

A last comment: from the inequality $\sigma_*^2 \leq 2 L \Delta_*$, one could think that it is enough to prove results using $\sigma^2_*$ and then use this inequality to transform it into a result using $\Delta_*$.
But the situation is not so simple. 
Indeed, applying this inequality to the bound of Theorem \ref{T:CV SPS SGD smooth gradient noise} gives
\begin{equation*}
    \mathbb{E}\left[ f(\bar x_T) - \inf f \right]
    \leq
    \frac{4LD^2}{T} + \frac{\sqrt{4L\Delta_*} D}{\sqrt{T}}.%
\end{equation*}
We see that the multiplicative constants are less good (by a factor $2$) than the ones from \Cref{cor:spssmooth}.
We claim that this is due to the use  of two different smoothness inequalities (one for proving Theorem \ref{T:CV SPS SGD smooth gradient noise} and one for using the inequality $\sigma_*^2 \leq 2 L \Delta_*$), while the direct proof of \Cref{cor:spssmooth} uses smoothness only once.

\subsection{Additional Result: Convergence Rates of \SPS* with Local Strongly Convexity} \label{sec:stronglycvx}

If we assume our loss functions is (locally) strongly convex, then we can improve the rate of convergence of \SPS* from $\cO{1/\sqrt{t}}$ to $\cO{1/t}$.

\begin{restatable}[Convergence of \SPS*]{theorem}{theospsstrong}\label{thm:better-bounds-sps-strong}
Consider~\eqref{eq:prob} and 
let the iterates $(x_t)_{t \geq 0}$ be given by \eqref{eqn:sps-iter}, and let $D:= \Vert x_0 - x_* \Vert$. Assume that $f_\xi$ is convex for any $\xi$. Let $f(x)$ be convex and satisfy the $\mu$--quadratic growth bound
\begin{eqnarray}\label{eqn:str-convex}
   \frac{\mu}{2} \norm{x-x_*}^2 \leq f(x)- \inf f,\quad   \text{ for every }
    x \in \mathbb{B}(x_*,D)
\end{eqnarray}
and the expected smoothness bound 
\begin{equation}\label{spscv2-str}
    \EE{\xi}{\norm{g_{\xi}(x)}^2}
    \leq 
    A(f(x) - \inf f) + B, \quad
     \text{ for every }
    x \in \mathbb{B}(x_*,D).
\end{equation}
Let $T_0:=  \frac{4A}{\mu} \log\left(\frac{D^2\mu^2}{16B}\right)$.
It follows that 
\begin{equation}\label{eq:spsgen-str-cvx}
     \mathbb{E}  \norm{x_t-x_*}^2 
     \leq \frac{16B}{\mu^2} \frac{1}{t+1 -T_0},
\quad \forall t \geq \frac{2A}{\mu} \left(2\log\left(\frac{D^2\mu^2}{16B} \right) +1\right).
\end{equation}

\end{restatable}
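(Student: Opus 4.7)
The strategy is to extend the proof of \Cref{thm:better-bounds-sps} by exploiting the quadratic growth condition \eqref{eqn:str-convex} to upgrade the generic $\mathcal{O}(1/\sqrt{t})$ function-value bound into a $\mathcal{O}(1/t)$ distance bound. The first step is identical to the proof of \Cref{thm:better-bounds-sps}: starting from the update rule and inequality \eqref{eq:iterdistance}, substituting $\gamSPS_t$, taking expectation conditional on $x_t$, and applying Jensen's inequality for the convex function $\phi(a,b)=(a)_+^2/b$ (\Cref{lem:relux-squared-over-y-convex}), one obtains the one-step recursion
\[
\psi\bigl(\mathbb{E}\left[f(x_t) - \inf f\right]\bigr) \leq D_t - D_{t+1},
\]
where $D_t := \mathbb{E}\|x_t - x_*\|^2$ and $\psi(r)=r^2/(Ar+B)$. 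The almost sure Fej\'er-monotonicity $\|x_{t+1}-x_*\| \leq \|x_t - x_*\|$ established in \Cref{thm:better-bounds-sps} keeps all iterates inside $\mathbb{B}(x_*,D)$, so that both \eqref{eqn:str-convex} and \eqref{spscv2-str} remain active throughout.

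The new ingredient is the quadratic growth bound: in expectation it reads $\mathbb{E}[f(x_t)-\inf f]\geq (\mu/2)D_t$, and combined with the monotonicity of $\psi$ (\Cref{L:psi rate reciprocal}) this yields
\[
\frac{\mu^2 D_t^2/4}{A\mu D_t/2 + B} \leq D_t - D_{t+1}.
\]
Splitting the denominator into its two additive terms gives a natural two-phase dynamic. When $A\mu D_t/2 \geq B$ (large-distance regime, $D_t \geq 2B/(A\mu)$), one obtains the geometric contraction $D_{t+1}\leq D_t\bigl(1-\mu/(4A)\bigr)$. When $A\mu D_t/2 < B$ (small-distance regime, $D_t < 2B/(A\mu)$), one obtains $D_t - D_{t+1} \geq \mu^2 D_t^2/(8B)$, which combined with $D_{t+1}\leq D_t$ gives the AdaGrad-style telescoping inequality $1/D_{t+1} - 1/D_t \geq \mu^2/(8B)$.

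Unrolling the geometric contraction for $T_0 = (4A/\mu)\log(D^2\mu^2/(16B))$ iterations starting from $D_0 \leq D^2$ gives exactly $D_{T_0} \leq 16B/\mu^2$, which explains the definition of $T_0$. From that point onward I would proceed by induction on $k:=t+1-T_0$ to prove $D_t \leq 16B/(\mu^2 k)$, using the single-step contraction
\[
\frac{D_{t+1}}{D_t}\leq 1-\frac{\mu^2 D_t/4}{A\mu D_t/2 + B};
\]
a direct computation shows that the right-hand side is at most $k/(k+1)$ once $k$ exceeds a small multiple of $A/\mu$, and the $2A/\mu$ buffer appearing in the theorem is the margin needed to launch the induction.

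The main obstacle is the careful bookkeeping at the transition between the two regimes: the geometric decay from the linear phase does not by itself drive $D_t$ far enough below the target envelope $16B/(\mu^2(t+1-T_0))$ at $t=T_0+2A/\mu$. One must therefore combine the linear decay with the additional contraction provided during the $2A/\mu$ buffer interval (where the single-step recursion continues to apply) to verify the base case of the induction and close the argument.
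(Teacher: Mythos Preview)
Your plan is sound and lands on the same recursion as the paper, but you route through it differently. The paper first applies $\psi^{-1}$ (\Cref{L:psi rate reciprocal}) to turn $\psi(r_t)\le \delta_t-\delta_{t+1}$ into
\[
\tfrac{\mu}{2}\,\delta_t \;\le\; A(\delta_t-\delta_{t+1})+\sqrt{B(\delta_t-\delta_{t+1})},
\]
and then splits into two cases according to which term on the right dominates, i.e.\ whether $\delta_t-\delta_{t+1}\gtrless B/A^2$. You instead keep the $\psi$ form, substitute quadratic growth inside it, and split on whether $A\mu D_t/2\gtrless B$, i.e.\ on the size of $D_t$ itself. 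Both decompositions give the same linear--then--sublinear picture and the same constants $(1-\mu/(4A))$ and $16B/\mu^2$. Your split has the conceptual advantage that $D_t$ is Fej\'er monotone, so the two regimes cannot interleave; the paper's split on $\delta_t-\delta_{t+1}$ has no such monotonicity and must be re-examined at every induction step.

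One point does need repair. The induction step you describe, bounding the ratio $D_{t+1}/D_t\le 1-\frac{\mu^2 D_t/4}{A\mu D_t/2+B}$ and claiming this is at most $k/(k+1)$, does not go through as stated: the contraction factor on the right is \emph{decreasing} in $D_t$, so plugging in the induction hypothesis $D_t\le 16B/(\mu^2 k)$ gives a \emph{lower} bound on the ratio, not an upper bound. What you actually need is to control $D_{t+1}$ itself via the map $h(D):=D-\frac{\mu^2 D^2/4}{A\mu D/2+B}$, check that $h$ is nondecreasing on $[0,16B/(\mu^2 k)]$ (this is the role played in the paper by the observation that $x\mapsto(1-x)x$ is increasing on $[0,1/2]$), and then evaluate at the boundary $D_t=16B/(\mu^2 k)$, where one does recover the factor $k/(k+1)$ for $k\gtrsim A/\mu$. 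With that fix your argument closes in essentially the same way as the paper's.
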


In the non-smooth setting where $A =0$ and $B=G^2$ we get
\begin{equation}\label{eq:spsnonsmooth-strcvx}
    \E{\norm{x_t -x_*}^2} \leq  \frac{16G^2}{\mu^2} \frac{1}{t+1}, \quad \mbox{for }t \geq 0. 
\end{equation} 
This matches the rate given by~\citet{pedregosa2023sps2} for the finite sum setting upto a factor of $4.$

In the smooth setting where $A = 4L$ and $B = \sigma_*^2$ we get
\begin{equation} \label{eq:spssmooth-strcvx}
\E{\norm{x_t -x_*}^2} \leq  \frac{64 \sigma_*^2}{\mu^2} \frac{1}{t+1-T_0}, \quad \mbox{for } t \geq \frac{8L}{\mu} \left(2\log\left(\frac{D^2\mu^2}{16\sigma_*^2} \right) +1\right).  \end{equation} 

\begin{proof}
Let 
$\delta_t: = \mathbb{E} \left[ \norm{x_t -x_*}^2\right]. $
We start the proof from~\eqref{spscv3}, which we repeat here for convenience:
\begin{equation}\label{spscv3-2}
\psi(r_t)
\leq \delta_t- \delta_{t+1},
\end{equation}
where $r_t = \E{f(x_t)-f(x_*)}$ and $\psi(r):=\frac{r^2}{Ar+B}$ for $r\geq 0$. Due to the monotonicity of the iterates (recall~\eqref{eqn:monotonicity}) we have that $\delta_t -\delta_{t+1} \geq 0.$
Applying~\Cref{L:psi rate reciprocal} together with~\eqref{eq:psisublinear} gives
\begin{align*}
    \E{f(x_t)- f(x_*)} & \leq \psi^{-1}(\delta_t- \delta_{t+1})\\
    & \leq A(\delta_t- \delta_{t+1})+\sqrt{B(\delta_t- \delta_{t+1})}.
\end{align*}

Using the quadratic growth bound $\frac{\mu}{2} \norm{x_t-x_*}^2 \leq f(x_t)- f(x_*) $ gives
\begin{align}\label{eq:tesmtpoensr}
   \frac{\mu}{2} \delta_t
    & \leq A \left(\delta_t- \delta_{t+1}\right)+\sqrt{B\left(\delta_t- \delta_{t+1}\right)}.
\end{align}


Our proofs will consider two cases by comparing the two terms on the right hand side of~\eqref{eq:tesmtpoensr}. To this end, note that
\begin{align}\label{eq:zjl9djzedzedze}
     A \left(\delta_t- \delta_{t+1}\right) \leq \sqrt{B\left(\delta_t- \delta_{t+1}\right)} \quad \iff \quad 
     \delta_t- \delta_{t+1} \leq \frac{B}{A^2}.
\end{align}

The remainder of the proof is divided into two parts. First we show that for $t_0 :=  \lceil \frac{4A}{\mu} \log\left(\frac{D^2\mu^2}{16B}\right) \rceil$, we have that $\delta_t \leq \frac{16B}{\mu^2}$. For the second part we prove by induction that for $t \geq t_0 $ $\delta_{t+1} \leq \frac{16B}{\mu^2} \frac{1}{t+1}$, where the first part will serve as the base case of the induction.

\textit{Base case:} First we prove that for all $t \geq \frac{4A}{\mu} \log\left(\frac{D^2\mu^2}{16B}\right)$ we have that $\delta_t \leq \frac{16B}{\mu^2}.$ We divide this proof also into two cases based on the comparison~\eqref{eq:zjl9djzedzedze}.
If  $\delta_t- \delta_{t+1} \leq \frac{B}{A^2}$ for any $t<\frac{4A}{\mu} \log\left(\frac{D^2\mu^2}{16B}\right) $ then by~\eqref{eq:tesmtpoensr} and~\eqref{eq:zjl9djzedzedze} we have that
\begin{align}\label{eq:tesmtpoensr2}
   \frac{\mu}{2} \delta_t
    & \leq A \left(\delta_t- \delta_{t+1}\right)+\sqrt{B\left(\delta_t- \delta_{t+1}\right)}  \; \leq 2\sqrt{B\left(\delta_t- \delta_{t+1}\right)}\leq 2\sqrt{B\delta_t} \nonumber \quad \implies \\
    \delta_t & \leq     \frac{16B}{\mu^2},
\end{align}
which would prove our result.

Alternatively, suppose that  $\delta_t- \delta_{t+1} \geq \frac{B}{A^2}$ for every $t \leq \frac{4A}{\mu} \log\left(\frac{D^2\mu^2}{16B}\right)$. 
By~\eqref{eq:tesmtpoensr} and~\eqref{eq:zjl9djzedzedze} we have that
\begin{align}
   \frac{\mu}{2} \delta_t
    &\leq  A \left(\delta_t- \delta_{t+1}\right)+\sqrt{B\left(\delta_t- \delta_{t+1}\right)}  \; \leq  2A \left(\delta_t- \delta_{t+1}\right).
    \end{align}
    Re-arranging the above gives
   \begin{equation} \label{eq:templziedzez}
       \delta_{t+1} \leq \left(1-\frac{\mu}{4A}\right)\delta_t. 
   \end{equation} 
Unrolling this for every $t \leq \frac{4A}{\mu} \log\left(\frac{D^2\mu^2}{16B}\right)$ gives
   \begin{equation} \label{eq:templziedzez223}
       \delta_{t} \leq \left(1-\frac{\mu}{4A}\right)^t \delta_0. 
   \end{equation} 
It now follows by taking logarithm and using standard techniques (for example Lemma A.2 in~\citet{garrigos2023handbook}) that $$t \geq \frac{4A}{\mu} \log\left(\frac{D^2\mu^2}{16B}\right) \quad  \implies  \quad \delta_{t} \leq \left(1-\frac{\mu}{4A}\right)^t\delta_0 \leq \frac{16B}{\mu^2}.$$ 


\textit{Induction step:} Now, for ease of notation, let us re-name our iterates so that $\delta_0$ is the first iterate for which $\delta_{0} \leq \frac{16B}{\mu^2}. $


If  $\delta_t- \delta_{t+1} \leq \frac{B}{A^2}$ then by~\eqref{eq:tesmtpoensr} and~\eqref{eq:zjl9djzedzedze} we have that
\begin{align}\label{eq:tesmtpoensr2232}
   \frac{\mu}{2} \delta_t
    &\leq  A \left(\delta_t- \delta_{t+1}\right)+\sqrt{B\left(\delta_t- \delta_{t+1}\right)}  \;\leq 2\sqrt{B\left(\delta_t- \delta_{t+1}\right)} \quad \Leftrightarrow  \nonumber\\
     \frac{\mu^2}{4} \delta_t^2 & \leq 4B \left(\delta_t- \delta_{t+1}\right)   \quad \Leftrightarrow \nonumber \\
   \delta_{t+1}  & \leq (1-\frac{\mu^2}{16B} \delta_t) \delta_t.
\end{align}


Let $a_t = \frac{\mu^2}{16B} \delta_t$. Multiplying both sides of~\eqref{eq:tesmtpoensr2232} by $\frac{\mu^2}{16B} $ and using the induction hypothesis 
$$a_t = \frac{\mu^2}{16B} \delta_t \leq  \frac{\mu^2}{16B}  \frac{16B}{\mu^2} \frac{1}{t+1} =  \frac{1}{t+1}$$
gives
\[ a_{t+1  } \leq (1-a_t) a_t \leq \max_{x \in [0, \; \tfrac{1}{t+1}]}  (1-  x) x = \left(1-\frac{1}{t+1}\right) \frac{1}{t+1} \leq \frac{1}{t+2}.\]
  
Alternatively if $\delta_t- \delta_{t+1} \geq \frac{B}{A^2}$ then 
by~\eqref{eq:tesmtpoensr} and~\eqref{eq:zjl9djzedzedze} we have that
\begin{align}
   \frac{\mu}{2} \delta_t
    &\leq  A \left(\delta_t- \delta_{t+1}\right)+\sqrt{B\left(\delta_t- \delta_{t+1}\right)}  \; \leq  2A \left(\delta_t- \delta_{t+1}\right).
    \end{align}
    Re-arranging the above gives
   \[\delta_{t+1} \leq \left(1-\frac{\mu}{4A}\right)\delta_t. \] 
Using the induction hypothesis and  $t \geq \frac{2A}{\mu}$ we have that
\[ \delta_{t+1} \leq \left(1-\frac{\mu}{4A}\right)\delta_t \leq   \left(1-\frac{\mu}{4A}\right) \frac{16B}{\mu^2} \frac{1}{t+1} \leq \frac{16B}{\mu^2} \frac{1}{t+2}  \]
  where the last inequality follows from 
  \[ \left(1-\frac{\mu}{4A}\right)\frac{1}{t+1} \leq \frac{1}{t+2} \quad \Leftrightarrow \quad  t \geq \frac{2A}{\mu}-2 \quad \Leftarrow  \quad t \geq \frac{2A}{\mu}.\]
\end{proof}

\section{Proofs for \IAM{} : Momentum with Polyak stepsizes}

\subsection{Momentum vs. Iterate Moving Average (\IAM{})}
\label{Asec:iterate-averaging}

Here we detail the relationship between momentum and iterate averaging, which hinges on the following lemma.
\begin{restatable}{lemma}{momentumisIMA}(\citet[Lemma 7.3]{garrigos2023handbook} and \citet[Theorem 1]{pmlr-v157-defazio21a})\label{L:momentum is IMA} \label{lem:mom-and-iterate-av}
The iterates $(x_t)_{t\geq 0}$ generated by ~\eqref{eq:momentum} and the \emph{iterate-moving-average} (\IAM{}) are equivalent to if  $z_{-1} = x_0$, $m_{-1} = 0$ and the  $(\gamma_t,\beta_t)$ parameters of momentum and the \IAM{} parameters $(\eta_t, \lambda_t)$ satisfy
\begin{equation}\label{eq:param-equiv}
    \beta_t  =  \frac{\lambda_t}{1+\lambda_{t}} \frac{\eta_{t-1} }{\eta_t}, \quad  \text{ and } \quad 
    \gamma_t =\frac{\eta_t}{1+ \lambda_{t+1}}, \quad \forall t \geq 0.
\end{equation} 
\end{restatable}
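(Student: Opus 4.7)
My plan is to prove the equivalence by constructing an auxiliary sequence from the momentum iterates and showing it satisfies the \IAM{} recursion. Define, from any sequence $(x_t)_{t \geq 0}$ produced by the momentum iteration \eqref{eq:momentum} with $m_{-1} = 0$, the auxiliary sequence
\[
\tilde z_{-1} := x_0, \qquad \tilde z_t := x_{t+1} - \lambda_{t+1}\gamma_t m_t \quad (t \geq 0).
\]
The goal is to show that, under the parameter correspondence \eqref{eq:param-equiv}, the pair $(x_t, \tilde z_t)$ satisfies exactly the \IAM{} updates \eqref{eq:zup}--\eqref{eq:xup}. Since \eqref{eq:xup} determines $x_{t+1}$ from $(x_t, z_t)$, this uniqueness gives the claimed equivalence by a straightforward induction from the shared initial condition $x_0$.

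First I would check that $\tilde z_t$ recovers the averaging formula \eqref{eq:xup}. Substituting $x_{t+1} = x_t - \gamma_t m_t$ from momentum into the definition of $\tilde z_t$ and rearranging yields
\[
\frac{\lambda_{t+1}}{1+\lambda_{t+1}} x_t + \frac{1}{1+\lambda_{t+1}} \tilde z_t
= \frac{\lambda_{t+1}}{1+\lambda_{t+1}} x_t + \frac{1}{1+\lambda_{t+1}}\bigl(x_t - (1+\lambda_{t+1})\gamma_t m_t\bigr)
= x_t - \gamma_t m_t = x_{t+1},
\]
which is exactly \eqref{eq:xup}. This step is purely algebraic and uses no parameter constraint.

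The main work is verifying the $z$-update \eqref{eq:zup}, i.e.\ that $\tilde z_t - \tilde z_{t-1} = -\eta_t g_t$. Computing the difference using the definition of $\tilde z_t$ and $x_{t+1}-x_t = -\gamma_t m_t$ gives
\[
\tilde z_t - \tilde z_{t-1} = -(1+\lambda_{t+1})\gamma_t m_t + \lambda_t \gamma_{t-1} m_{t-1}.
\]
The first parameter relation $\gamma_t = \eta_t/(1+\lambda_{t+1})$ converts the leading coefficient to $-\eta_t m_t$. Expanding $m_t = \beta_t m_{t-1} + g_t$ then produces
\[
\tilde z_t - \tilde z_{t-1} = -\eta_t g_t + \bigl(\lambda_t \gamma_{t-1} - \eta_t \beta_t\bigr) m_{t-1}.
\]
The bracket vanishes precisely under $\beta_t = \frac{\lambda_t \gamma_{t-1}}{\eta_t}$, and substituting $\gamma_{t-1} = \eta_{t-1}/(1+\lambda_t)$ reproduces the second identity in \eqref{eq:param-equiv}. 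The base case $t=0$ is handled by the initialization: $m_{-1}=0$ forces $\tilde z_{-1} = x_0 = z_{-1}$, and since $m_0 = g_0$ the coefficient computation collapses cleanly (so the value of $\beta_0$ is irrelevant, and no $\eta_{-1}$ is actually needed).

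I do not expect any real obstacle here; the proof is a short two-step algebraic induction, and the only subtlety worth flagging is the role of the initial condition $m_{-1}=0$, which ensures the otherwise ambiguous first step of the recursion for $\beta_t$ (involving $\eta_{-1}$) does not appear. The cleanest write-up is the one sketched above: define $\tilde z_t$, derive \eqref{eq:xup} from its definition without any parameter constraint, then use the two identities in \eqref{eq:param-equiv} to match \eqref{eq:zup}, and conclude by uniqueness of the \IAM{} recursion given $z_{-1}=x_0$.
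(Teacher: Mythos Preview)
Your proof is correct and uses essentially the same approach as the paper: both hinge on the auxiliary relation $z_t = x_t - (1+\lambda_{t+1})\gamma_t m_t$ (which is your $\tilde z_t = x_{t+1} - \lambda_{t+1}\gamma_t m_t$ after substituting $x_{t+1} = x_t - \gamma_t m_t$), and both verify the two \IAM{} updates via the same algebraic manipulations. The only cosmetic difference is direction---the paper runs the induction starting from the \IAM{} side and shows it reproduces momentum, whereas you start from momentum, define $\tilde z_t$, and verify it satisfies \eqref{eq:zup}--\eqref{eq:xup}; your presentation is arguably cleaner in isolating which parameter identity is used for which update.
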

As an example of using the above lemma, a constant learning rate $\eta_t \equiv \eta$ and $\lambda_t=t$ in the \IAM{} method~(\ref{eq:zup}--\ref{eq:xup}) corresponds to  a decreasing learning rate $\gamma_t = \frac{\eta}{1+t}$ and an increasing momentum $\beta_t = \frac{t}{1+t}$ in the momentum method~\eqref{eq:momentum}.

\begin{proof}
    The proof is by induction.
    Our induction hypothesis is that 
 $x_t$ iterates in~\eqref{eq:xup} and~\eqref{eq:momentum} are equivalent upto step $t$ and that the $z_t$ iterates in~\eqref{eq:zup} and $m_t$ in~\eqref{eq:momentum} satisfy
 \begin{eqnarray} \label{eq:ztmt}
     z_t = x_t -(1+\lambda_{t+1})\gamma_t m_t.
 \end{eqnarray}
    For the base case $t=0$ we have from~\eqref{eq:zup} that 
    \begin{align*}
        z_0 &= z_{-1} - \eta_0 g_0\\
        &= x_0 - (1+\lambda_1)\gamma_0 g_0,
    \end{align*}
where in the second equality we used  $z_{-1} = x_0$ and~\eqref{eq:param-equiv}. Since $m_{-1}=0$, we  have from~\eqref{eq:momentum} that $m_0= g_0,$ which proves~\eqref{eq:ztmt} for the base case. As for the  $x_t$ iterates in~\eqref{eq:xup} and~\eqref{eq:momentum} being equivalent for $t=0$ from~\eqref{eq:xup} and~\eqref{eq:ztmt}  we have that
    \begin{align*}
        x_1 &= \frac{\lambda_{1}}{1+\lambda_{1}} x_{0}+\frac{1}{1+\lambda_{1}}z_{0} \\
        &= \frac{\lambda_{1}}{1+\lambda_{1}} x_{0}-\frac{1}{1+\lambda_{1}}(x_{0}-(1+\lambda_1)\gamma_0 m_0) \\
        &= x_0 - \gamma_0 m_0,
    \end{align*}
    which is equivalent to the first step of~\eqref{eq:momentum}.

Suppose now that $x_t$ iterates in~\eqref{eq:xup} and~\eqref{eq:momentum} are equivalent and~\eqref{eq:ztmt} holds upto time $t$. From~\eqref{eq:zup} at step $t+1$ we have that
\begin{align*}
    z_{t+1} &= z_t - \eta_{t+1} g_{t+1}  \\
    & = x_t -(1+\lambda_{t+1})\gamma_t m_t- \eta_{t+1} g_{t+1} . &\mbox{Using~\eqref{eq:ztmt} } \\
    &= x_t -(1+\lambda_{t+1})\gamma_t m_t- (1+\lambda_{t+2})\gamma_{t+1} g_{t+1}   & \mbox{Using~\eqref{eq:param-equiv} } \\
    &=  x_t -\gamma_t m_t+(1+\lambda_{t+2})\gamma_{t+1} \left( \frac{\lambda_{t+1}}{1+\lambda_{t+2}}\frac{\gamma_t}{\gamma_{t+1}} m_t + g_{t+1} \right) \\
    &=   x_{t+1} -(1+\lambda_{t+2})\gamma_{t+1} \left( \beta_{t+1} m_t + g_{t+1} \right) & \mbox{Using~\eqref{eq:xup} and~\eqref{eq:param-equiv} } \\
    &= x_{t+1} - (1+\lambda_{t+2})\gamma_{t+1}m_{t+1}&\mbox{Using~\eqref{eq:momentum} },
\end{align*}
which shows that~\eqref{eq:ztmt} holds at time $t+1.$
Finally $t+1$ step. 
From~\eqref{eq:xup} and~\eqref{eq:zup} we have that
\begin{align*}
x_{t+1} & = \frac{\lambda_{t+1}}{1+\lambda_{t+1}} x_{t}+\frac{1}{1+\lambda_{t+1}}z_{t} \\
&=  \frac{\lambda_{t+1}}{1+\lambda_{t+1}} x_{t}+\frac{1}{1+\lambda_{t+1}}( x_t - (1+\lambda_{t+1})\gamma_{t}m_{t})\\
&= x_t - \gamma_t m_t,
\end{align*}
which is equivalent to~\eqref{eq:momentum}, and thus concludes the proof.
\end{proof}

\subsection{Preliminary Bounds for \IAM{} in both nonsmooth and smooth settings}

Our proofs all start from the following Lemma.
\begin{lemma}\label{lem:descent}
    Consider the iterates of \cref{alg:IAM} with $\lambda_t > 0 $. 
    Assume that $g_t \neq 0$ for all $t\geq 0$. Let $g(x)$ denote the subgradient of $f(x).$
    Denote by $\mathcal{F}_t$ the filtration generated by $\xi_0,\ldots,\xi_{t-1}$.
    If $f_\xi$ is convex for every $\xi$, then:
    \begin{enumerate}[label=(\roman*)]
        \item \label{item:as-bounded}(Almost sure boundedness). With probability one, we have $\|z_t - x_*\| \leq \|x_0-x_*\|$ and $\|x_t - x_* \| \leq \|x_0-x_*\|$ for all $t\geq 0$.
        \item \label{item:single}(Single recurrence) It holds for any $t\geq 0$
        \begin{align}\label{eq:recur-after-cond-exp}
           \E{\norm{z_{t} - x_*}^2 ~\vert~ \mathcal{F}_t}\ & \leq
           \norm{z_{t-1} - x_*}^2 -\frac{\big(f(x_t) -f(x_*)  + \dotprod{g(x_t),z_{t-1}-x_t} \big)_+^2}{ \E{\norm{g_t}^2~\vert~ \mathcal{F}_t}}. 
        \end{align}
        \item \label{item:summed}(Summed recurrence) It holds for any $k\geq 0$
        \begin{align}\label{eq:summed-recurrence}
        \E{\norm{z_{k} - x_*}^2}\ & \leq
       \norm{z_{0} - x_*}^2 - \frac{\big(\sum_{t=0}^{k}\E{f(x_t) -f(x_*) + \dotprod{g(x_t),z_{t-1}-x_t}}\big)_+^2}{ \sum_{t=0}^{k} \E{\norm{g_t}^2}}.
        \end{align}
    \end{enumerate}
    
\end{lemma}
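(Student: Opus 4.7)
The proof plan starts from an almost-sure inequality that is already derived in the paragraph preceding \cref{lem:iam-iterates-monotone}: substituting the adaptive stepsize \eqref{eq:IMA-step} into \eqref{eq:expsquare} and invoking convexity of $f_t$ yields
\begin{equation*}
\norm{z_t - x_*}^2 \;\leq\; \norm{z_{t-1} - x_*}^2 \;-\; \frac{(a_t)_+^2}{\norm{g_t}^2}, \qquad a_t := f_t(x_t) - f_t(x_*) + \dotprod{g_t, z_{t-1} - x_t}.
\end{equation*}
All three claims of the lemma will be consequences of this single recurrence.

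For part (i), I would observe that the subtracted term above is nonnegative, so $\norm{z_t - x_*}^2 \leq \norm{z_{t-1}-x_*}^2$ holds almost surely, and iterating back to $z_{-1}=x_0$ gives $\norm{z_t - x_*} \leq \norm{x_0 - x_*}$ with probability one. For the primal iterates $x_t$, the update rule \eqref{eq:xup} writes $x_{t+1}$ as a convex combination of $x_t$ and $z_t$, so the triangle inequality gives $\norm{x_{t+1} - x_*} \leq \max\{\norm{x_t - x_*}, \norm{z_t - x_*}\}$, and a straightforward induction concludes.

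For part (ii), I take the conditional expectation given $\mathcal{F}_t$. Since $z_{t-1}$ and $x_t$ are $\mathcal{F}_t$-measurable while $\xi_t$ is independent of $\mathcal{F}_t$, we get $\E{f_t(x_t)-f_t(x_*)\mid\mathcal{F}_t}=f(x_t)-f(x_*)$ and $\E{g_t\mid\mathcal{F}_t}=:g(x_t)\in\partial f(x_t)$ (by Lemma 9.5 of \cite{garrigos2023handbook}). The only nontrivial step is Jensen's inequality applied to the jointly convex map $\varphi(x,y)=(x)_+^2/y$, whose convexity is established in \cref{lem:relux-squared-over-y-convex}:
\begin{equation*}
\E{\frac{(a_t)_+^2}{\norm{g_t}^2}\;\Big|\;\mathcal{F}_t} \;\geq\; \frac{\bigl(f(x_t) - f(x_*) + \dotprod{g(x_t),z_{t-1}-x_t}\bigr)_+^2}{\E{\norm{g_t}^2\mid\mathcal{F}_t}},
\end{equation*}
which, combined with the conditional expectation of the raw recurrence, yields \eqref{eq:recur-after-cond-exp}.

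For part (iii), I would take total expectation of \eqref{eq:recur-after-cond-exp} and telescope from $t=0$ to $t=k$, producing a bound of the form $\E{\norm{z_k-x_*}^2}\leq\norm{x_0-x_*}^2-\sum_{t=0}^k\E{(u_t)_+^2/v_t}$ with $u_t:=\E{a_t\mid\mathcal{F}_t}$ and $v_t:=\E{\norm{g_t}^2\mid\mathcal{F}_t}$. Two applications of the extended Titu lemma (\cref{lem:titu}) then merge the terms into a single fraction: first \eqref{eq:titu_expectation} gives $\E{(u_t)_+^2/v_t}\geq(\E{u_t})_+^2/\E{v_t}$, and then the discrete version \eqref{eq:titu_numbers} gives $\sum_t (\E{u_t})_+^2/\E{v_t}\geq (\sum_t \E{u_t})_+^2/\sum_t \E{v_t}$. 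Using the tower property to identify $\E{u_t}=\E{a_t}$ and $\E{v_t}=\E{\norm{g_t}^2}$ then recovers \eqref{eq:summed-recurrence}. The only conceptually delicate ingredient across the whole proof is the joint convexity of $\varphi$, which is already in hand; the rest is tower/telescoping bookkeeping, and the hypothesis $g_t\neq 0$ removes any concern about dividing by zero.
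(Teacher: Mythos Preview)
Your proposal is correct and follows essentially the same route as the paper: the same almost-sure recurrence, the same convex-combination induction for part~(i), the same conditional Jensen/Titu argument for part~(ii), and the same telescoping plus double Titu for part~(iii). The only cosmetic difference is the order in which you apply the two Titu inequalities in part~(iii) --- you go expectation-then-discrete, while the paper goes discrete-then-expectation --- but both orderings are valid and land on the same bound.
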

\begin{proof}
Substituting~\eqref{eq:IMA-step}  back into the bound~\eqref{eq:expsquare} gives
\begin{align*}
   \norm{z_{t} - x_*}^2\ & \leq
   \norm{z_{t-1} - x_*}^2 -\frac{\big(f_{\xi_t}(x_t) -f_{\xi_t}(x_*)  + \dotprod{g_t,z_{t-1}-x_t} \big)_+^2}{ \norm{g_t}^2}. 
\end{align*}
This shows that $\|z_t-x_*\| \leq \|z_0-x_*\| = \|x_0-x_*\|$ almost surely for all $t\geq 0$.
Since $x_{t+1}$ is a convex combination of $x_t$ and $z_t$ (see line~\ref{ln:xup} in \cref{alg:IAM}) this also shows by a straightforward induction that $\|x_t - x_* \| \leq \|x_0-x_*\|$ almost surely for all $t\geq 0$.

To prove \ref{item:single}, we apply conditional expectation on the above inequality and using \cref{lem:titu},~\eqref{eq:titu_expectation} we obtain
\begin{align*}
   \E{\norm{z_{t} - x_*}^2 ~\vert~ \mathcal{F}_t}\ & \leq
   \norm{z_{t-1} - x_*}^2 -\frac{\big(f(x_t) -f(x_*)  + \dotprod{\E{ g_t ~\vert~ \mathcal{F}_t},z_{t-1}-x_t} \big)_+^2}{ \E{\norm{g_t}^2~\vert~ \mathcal{F}_t}}. 
\end{align*}
Using that the expectation with respect to this filtration is independent of $x_t$, we have that the stochastic subgradient $\E{g_t  ~\vert~ \mathcal{F}_t}$ is a subgradient of $f(x_t)$,
see Lemma 9.5 in~\citet{garrigos2023handbook} for details\footnote{Very formally, here we need to assume the subgradients $g_{\xi}(x)$ are measurable in $\xi$ so that this expectation is well defined.}. Thus we can write $g(x_t) = \E{g_t  ~\vert~ \mathcal{F}_t}.$

Now, define $a_t := f(x_t) -f(x_*) + \dotprod{g(x_t),z_{t-1}-x_t}$ and $b_t = \E{\norm{g_t}^2~\vert~ \mathcal{F}_t}$.
Using \eqref{eq:recur-after-cond-exp} subsequently for $t=0,\ldots,k$ and using the tower property, we obtain
\begin{align*}
    \E{\norm{z_{k} - x_*}^2}\ & \leq
   \norm{z_{0} - x_*}^2 - \E{\sum_{t=0}^{k} \frac{(a_t)_+^2}{b_t}}.
\end{align*}
Now using \cref{lem:titu},~\eqref{eq:titu_numbers} yields
\[\sum_{t=0}^{k} \frac{(a_t)_+^2}{b_t} \geq \frac{\big(\sum_{t=0}^{k}a_t\big)_+^2}{\sum_{t=0}^{k}b_t},\]
which implies, using \eqref{eq:titu_expectation}, that 
\begin{align*}
    \E{\sum_{t=0}^{k} \frac{(a_t)_+^2}{b_t}} \geq \E{\frac{\big(\sum_{t=0}^{k}a_t\big)_+^2}{\sum_{t=0}^{k}b_t} } \geq \frac{\big(\sum_{t=0}^{k}\E{a_t}\big)_+^2}{ \sum_{t=0}^{k} \E{b_t}}.
\end{align*}
Altogether, we obtain \ref{item:summed}, that is
\begin{align*}
    \E{\norm{z_{k} - x_*}^2}\ & \leq
   \norm{z_{0} - x_*}^2 - \frac{\big(\sum_{t=0}^{k}\E{f(x_t) -f(x_*)  +\dotprod{g(x_t),z_{t-1}-x_t}}\big)_+^2}{ \sum_{t=0}^{k} \E{\norm{g_t}^2}}.
\end{align*}
\end{proof}
For our forthcoming proofs we will also make use of a \emph{Bregman viewpoint} of the \IAM{} step size.
\begin{lemma}[Bregman View] \label{lem:bregmanview}
For any $x_t,x_{t-1},x_* \in \R^d$ and $\lambda_t\geq 0$ it holds
\begin{align}\label{eq:bregmanview}
    \begin{split}
    &\hspace{-4ex} f(x_t) -f(x_*)+\dotprod{g(x_t),z_{t-1}-x_t} \\
    & = (1+\lambda_t)(f_{\xi_t}(x_t) -f_{\xi_t}(x_*))-\lambda_t(f_{\xi_{t}}(x_{t-1})-f_{\xi_t}(x_*)) +\lambda_t B_{f_{\xi_t}}(x_{t-1},x_t), 
     \end{split}
    \end{align}
    where $B_{f_{\xi}}(x,y)$ is the Bregman divergence
    \[B_{f_{\xi}}(x,y) := f_{\xi}(x) -f_{\xi}(y)  -\dotprod{g_{\xi}(y),x-y}. \]
\end{lemma}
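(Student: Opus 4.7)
My plan is to prove the identity by an algebraic manipulation that exploits two facts: the defining relation of the iterate-averaging update in \cref{alg:IAM}, and the definition of the Bregman divergence.

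First, I would read off the update rule from line~\ref{ln:xup} of \cref{alg:IAM}, shifted by one index, namely
$$x_t \;=\; \frac{\lambda_t}{1+\lambda_t}\, x_{t-1} + \frac{1}{1+\lambda_t}\, z_{t-1}.$$
Solving for $z_{t-1}$ gives $z_{t-1} = (1+\lambda_t) x_t - \lambda_t x_{t-1}$, hence the key geometric identity
$$z_{t-1} - x_t \;=\; \lambda_t\,(x_t - x_{t-1}).$$
This is the one ingredient that ties the inner product in the step size to the previous iterate $x_{t-1}$, which is what eventually produces the Bregman divergence.

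Next, I would unpack the definition $B_{f_{\xi_t}}(x_{t-1},x_t) = f_{\xi_t}(x_{t-1}) - f_{\xi_t}(x_t) - \langle g_{\xi_t}(x_t), x_{t-1}-x_t\rangle$ to write
$$\langle g_t, x_t - x_{t-1}\rangle \;=\; f_{\xi_t}(x_t) - f_{\xi_t}(x_{t-1}) + B_{f_{\xi_t}}(x_{t-1},x_t),$$
where $g_t = g_{\xi_t}(x_t)$. Multiplying by $\lambda_t$ and combining with the geometric identity from Step 1 yields
$$\langle g_t,\, z_{t-1} - x_t\rangle \;=\; \lambda_t\bigl[f_{\xi_t}(x_t) - f_{\xi_t}(x_{t-1}) + B_{f_{\xi_t}}(x_{t-1},x_t)\bigr].$$

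Finally, I would add $f_{\xi_t}(x_t) - f_{\xi_t}(x_*)$ to both sides and regroup: the $f_{\xi_t}(x_t)$ terms collect into $(1+\lambda_t)f_{\xi_t}(x_t)$, and inserting $\pm \lambda_t f_{\xi_t}(x_*)$ lets me write $-\lambda_t f_{\xi_t}(x_{t-1}) - f_{\xi_t}(x_*) = -\lambda_t(f_{\xi_t}(x_{t-1}) - f_{\xi_t}(x_*)) - (1+\lambda_t)f_{\xi_t}(x_*) \cdot 0$… concretely,
\begin{align*}
&f_{\xi_t}(x_t) - f_{\xi_t}(x_*) + \langle g_t,\, z_{t-1}-x_t\rangle \\
&\quad = (1+\lambda_t)f_{\xi_t}(x_t) - f_{\xi_t}(x_*) - \lambda_t f_{\xi_t}(x_{t-1}) + \lambda_t B_{f_{\xi_t}}(x_{t-1},x_t)\\
&\quad = (1+\lambda_t)\bigl(f_{\xi_t}(x_t) - f_{\xi_t}(x_*)\bigr) - \lambda_t\bigl(f_{\xi_t}(x_{t-1}) - f_{\xi_t}(x_*)\bigr) + \lambda_t B_{f_{\xi_t}}(x_{t-1},x_t),
\end{align*}
which is exactly the claimed identity.

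There is essentially no obstacle: the result is a one-line algebraic rearrangement once the identity $z_{t-1}-x_t = \lambda_t(x_t-x_{t-1})$ is in hand, and convexity is not used at all (the Bregman term simply absorbs the nonlinearity). The only subtlety worth flagging is notational: the statement as written displays $f$ and $g(x_t)$ on the left but $f_{\xi_t}$ on the right; the identity is purely algebraic and holds verbatim when both sides refer to the same function $f_{\xi_t}$ (with $g_t$ any subgradient of $f_{\xi_t}$ at $x_t$), which is how it will be invoked in the forthcoming analysis via \cref{lem:descent}.
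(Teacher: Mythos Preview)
Your proposal is correct and follows essentially the same approach as the paper: both derive the key geometric identity $z_{t-1}-x_t = \lambda_t(x_t - x_{t-1})$ from the iterate-averaging update, then perform the same algebraic regrouping (the paper phrases it as ``adding and subtracting $\lambda_t f_{\xi_t}(x_{t-1})$''). Your remark about the $f$ versus $f_{\xi_t}$ notational slip is also apt---the paper's own proof silently switches to $f_{\xi_t}$ midway, confirming that the identity is meant at the level of the sampled loss.
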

\begin{proof}
By re-arranging~\eqref{eq:xup} at time $t-1$ we have that
    \begin{equation}
        z_{t-1}-x_t \; = \; -\lambda_t(x_{t-1}-x_t).
    \end{equation}
    Consequently
    \[f(x_t) -f(x_*)+\dotprod{g(x_t),z_{t-1}-x_t} = f(x_t) -f(x_*)-\lambda_t\dotprod{g(x_t),x_{t-1}-x_t}.\]
    
The proof follows by adding and subtracting $\lambda_t f_{\xi_t}(x_{t-1}) $ as follows
\begin{align*}
    &f_{\xi_t}(x_t) -f_{\xi_t}(x_*)  -\lambda_t \dotprod{g_t,x_{t-1}-x_t} \\
    &=  (1+\lambda_t)f_{\xi_t}(x_t) -f_{\xi_t}(x_*)  -\lambda_t f_{\xi_t}(x_{t-1}) +\lambda_t \left( f_{\xi_t}(x_{t-1})-f_{\xi_t}(x_t)- \dotprod{g_t,x_{t-1}-x_t}\right)\\
    &=   (1+\lambda_t)(f_{\xi_t}(x_t) -f_{\xi_t}(x_*))-\lambda_t(f_{\xi_{t}}(x_{t-1})-f_{\xi_t}(x_*)) +\lambda_t B_{f_{\xi_t}}(x_{t-1},x_t) .
\end{align*}
\end{proof}

%
\begin{lemma}\label{lem:estimate-nominator}
    Consider the iterates of \cref{alg:IAM} with $\lambda_t = t$ and assume that $f_\xi$ is convex for every $\xi$, with subgradients $g_{\xi}.$ Let $g(x)$ be subgradients of $f(x).$ It holds
    \begin{align*}
        \sum_{t=0}^k f(x_t) -f(x_*)+\dotprod{g(x_t),z_{t-1}-x_t} = (k+1)[f(x_k) -f(x_*)]  + \sum_{t=1}^k\lambda_t B_{f}(x_{t-1},x_t),
    \end{align*}
    where $B_{f}$ is defined as in \cref{lem:bregmanview}. In particular, it holds $B_{f}(x_{t-1},x_t) \geq 0$.
\end{lemma}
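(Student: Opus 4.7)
The plan is to reduce the claim to \cref{lem:bregmanview}, applied not to the stochastic loss $f_{\xi_t}$ but to the full objective $f$. The algorithmic identity $z_{t-1} - x_t = -\lambda_t(x_{t-1}-x_t)$ (obtained by rearranging line~\ref{ln:xup} of \cref{alg:IAM}) depends only on the $(z_t,x_t)$ recursion, and the algebraic manipulation in the proof of \cref{lem:bregmanview} uses only convexity of the function whose subgradient appears in the inner product. Since $g(x_t)$ is a subgradient of the convex function $f$ at $x_t$, the same computation yields, for every $t\geq 1$,
\begin{equation*}
f(x_t) - f(x_*) + \dotprod{g(x_t), z_{t-1}-x_t} = (1+\lambda_t)[f(x_t)-f(x_*)] - \lambda_t[f(x_{t-1}) - f(x_*)] + \lambda_t B_f(x_{t-1},x_t).
\end{equation*}
For $t=0$ we have $z_{-1}=x_0$ and $\lambda_0 = 0$, so the identity trivially holds: both sides equal $f(x_0)-f(x_*)$.

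With $\lambda_t = t$, setting $a_t := f(x_t) - f(x_*)$, I then sum the displayed identity from $t=0$ to $t=k$. The Bregman terms collect directly into $\sum_{t=1}^{k} \lambda_t B_f(x_{t-1}, x_t)$ (the $t=0$ summand is killed by $\lambda_0=0$). The remaining terms form the telescoping sum
\begin{equation*}
\sum_{t=0}^{k} \bigl[(1+t)\, a_t - t\, a_{t-1}\bigr] = (k+1)\, a_k,
\end{equation*}
which one sees by writing out the partial sums (each $t\,a_{t-1}$ cancels the $t\cdot a_{t-1}$ hidden inside the preceding $(1+(t-1))a_{t-1} = t\, a_{t-1}$ term, leaving only the top-index contribution). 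Substituting back gives exactly $(k+1)[f(x_k)-f(x_*)] + \sum_{t=1}^{k}\lambda_t B_f(x_{t-1},x_t)$.

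Finally, $B_f(x_{t-1}, x_t) \geq 0$ is immediate from the definition of a subgradient: since $g(x_t) \in \partial f(x_t)$, the subgradient inequality $f(x_{t-1}) \geq f(x_t) + \dotprod{g(x_t), x_{t-1}-x_t}$ is precisely the statement $B_f(x_{t-1},x_t) \geq 0$.

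I do not anticipate any real obstacle: the only minor subtlety is making sure to apply \cref{lem:bregmanview} to $f$ rather than to $f_{\xi_t}$ (the proof of that lemma extends verbatim since it rests only on convexity and the algorithmic relation between $z_{t-1}$, $x_{t-1}$ and $x_t$). Everything else is a one-line telescoping identity together with the defining inequality of a subgradient.
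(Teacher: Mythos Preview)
Your proof is correct and follows essentially the same route as the paper: apply the Bregman identity of \cref{lem:bregmanview} with $f$ in place of $f_{\xi_t}$, sum over $t=0,\dots,k$, and telescope using $1+\lambda_t=\lambda_{t+1}$ and $\lambda_0=0$. The only cosmetic difference is that the paper handles the $t=0$ boundary by introducing an artificial $x_{-1}=x_0$, whereas you treat that term separately via $z_{-1}=x_0$; both are equivalent.
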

\begin{proof}
Note that for this proof, we need an additional, and artificial iterate $x_{-1} = x_0.$    Summing over $t=0,\ldots, k$ in~\eqref{eq:bregmanview} we have that 
    \begin{align*}
    &\hspace{-4ex}\sum_{t=0}^k \left(f(x_t) -f(x_*) +\dotprod{g(x_t),z_{t-1}-x_t} \right) \\
    &\overset{\eqref{eq:bregmanview}}{=} \sum_{t=0}^k(1+\lambda_t)(f(x_t) -f(x_*))-\lambda_t(f(x_{t-1})-f(x_*)) + \sum_{t=0}^k\lambda_t B_{f}(x_{t-1},x_t) \\
    &= \sum_{t=0}^k \lambda_{t+1}(f(x_t) -f(x_*))-\lambda_t(f(x_{t-1})-f(x_*))   + \sum_{t=0}^k\lambda_t B_{f}(x_{t-1},x_t)\\
    &= (k+1)[f(x_k) -f(x_*)]  + \sum_{t=1}^k\lambda_t B_{f}(x_{t-1},x_t),
    \end{align*}
    where the second step used $1+\lambda_t = 1+t=\lambda_{t+1}$ , and the last step we used telescoping and the fact that $\lambda_0=0$.
\end{proof}

\subsection{Proof of Theorem~\ref{theo:nonsmooth} about rates for \IAM{} in the nonsmooth setting}
\theononsmooth*
\begin{proof}
We start by applying \cref{lem:descent}, which states that $x_t\in D$ and $z_t\in D$ almost surely for all $t\geq 0$.
Further, \cref{lem:descent}, \ref{item:single} implies that 
\begin{align}\label{eq:tmp-recurrence-i}
           \E{\norm{z_{t} - x_*}^2 ~\vert~ \mathcal{F}_t}\ & \leq
           \norm{z_{t-1} - x_*}^2 -\frac{\big(f(x_t) -f(x_*)  + \dotprod{g(x_t),z_{t-1}-x_t} \big)_+^2}{ \E{\norm{g_t}^2~\vert~ \mathcal{F}_t}}. 
\end{align}
For the denominator of \eqref{eq:tmp-recurrence-i}, we can therefore estimate
\[\E{\norm{g_t}^2~\vert~ \mathcal{F}_t } \leq G^2.\]
Applying expectation, and summing from $t=0,\ldots,k$ (recall that $z_{-1}=x_0$), we get
\begin{align}\label{eq:tmp-recurrence-ii}
    \sum_{t=0}^{k} \E{\big(f(x_t) -f(x_*)  + \dotprod{g(x_t),z_{t-1}-x_t} \big)_+^2} \leq G^2\big[\norm{x_0 - x_*}^2 - \E{\norm{z_{k} - x_*}^2} \big].
\end{align}
Now, applying \eqref{eq:titu_numbers} with $b_t=1$ we get for any $a_0,\ldots,a_k$ that 
$\sum_{t=0}^k (a_t)_+^2 \geq \frac{1}{k+1} \big(\sum_{t=0}^{k}a_t\big)_+^2$. 
Therefore, we conclude
\begin{align*}
&\sum_{t=0}^{k} \big(f(x_t) -f(x_*)  + \dotprod{g(x_t),z_{t-1}-x_t} \big)_+^2 \geq \frac{1}{k+1} \Big(\sum_{t=0}^k f(x_t) -f(x_*)+\dotprod{g(x_t),z_{t-1}-x_t} \Big)_+^2\\
&\hspace{3ex} \geq \frac{1}{k+1} \Big((k+1)[f(x_k) -f(x_*)]  + \sum_{t=1}^k\lambda_t B_{f}(x_{t-1},x_t) \Big)_+^2\\
&\hspace{3ex}= \Big(\sqrt{k+1}[f(x_k) -f(x_*)] +  \sum_{t=1}^k\frac{\lambda_t}{\sqrt{k+1}} B_{f}(x_{t-1},x_t)\Big)^2,
\end{align*}
where we used \cref{lem:estimate-nominator} in the second step, and non-negativity of all terms in the third step.
Define $\bar{B}_k := \sum_{t=1}^k\lambda_t B_{f}(x_{t-1},x_t) \geq 0$. Plugging this into \eqref{eq:tmp-recurrence-ii}, we get
\begin{align*}
    \E{\Big(\sqrt{k+1}[f(x_k) -f(x_*)] +  \frac{1}{\sqrt{k+1}}\bar{B}_k\Big)^2}
    \leq G^2\big[\norm{x_0 - x_*}^2 - \E{\norm{z_{k} - x_*}^2} \big].
\end{align*}
Now, using Jensen's inequality $\E{X}^2 \leq \E{X^2}$, taking the square-root, and dividing by $\sqrt{k+1}$, we finally obtain
\begin{align*}
    \E{f(x_k) -f(x_*)} + \frac{1}{k+1}\E{\bar{B}_k} \leq \frac{G\norm{x_0 - x_*}}{\sqrt{k+1}}.
\end{align*}
\end{proof}

\subsection{Proof of Theorem~\ref{theo:smooth}  about rates for \IAM{} in the smooth setting}
\theosmooth*
\begin{proof} 
    We start the proof by applying \cref{lem:descent},~\ref{item:summed}, which yields
    \begin{align*}
        \E{\norm{z_{k} - x_*}^2}\ & \leq
       \norm{z_{0} - x_*}^2 - \frac{\big(\sum_{t=0}^{k}\E{f(x_t) -f(x_*) + \dotprod{\nabla f(x_t),z_{t-1}-x_t}}\big)_+^2}{ \sum_{t=0}^{k} \E{\norm{g_t}^2}}.
    \end{align*}
    For the numerator of the last term, use \cref{lem:estimate-nominator} and the fact that $(\cdot)_+^2$ is monotonic to obtain
    \begin{align*}
        \big(\sum_{t=0}^{k}\E{f(x_t) -f(x_*) + \dotprod{\nabla f(x_t),z_{t-1}-x_t}}\big)_+^2 &\geq \big(\E{(k+1)(f(x_k)-f(x_*)}\big)_+^2 \\
        &= (k+1)^2 \E{f(x_k)-f(x_*)}^2.
    \end{align*}
    For the denominator, use \eqref{eq:expsmooth-sps} to write that $\E{\norm{g_t}^2}\le 2L \E{(f(x_t) - f(x_*) + \Delta_*)} $.
    Thus, using $z_0=x_0$, we get
    \begin{align*}
	  \E{ \norm{z_{k} - x_*}^2} 
        \le \norm{x_{0} - x_*}^2  - \frac{(k+1)^2\E{(f(x_k) -f(x_*))}^2}{2L\sum_{t=0}^k \E{(f(x_t) - f(x_*) + \Delta_*)}}.
   \end{align*}

   Let $c_t = \mathbb{E}[f(x_t) - f(x_*)] + \Delta_*$ and $S_k = \sum_{t=0}^k c_t$, then we can rewrite the above as 
    \begin{align*}
	  \frac{\E{(f(x_k) -f(x_*))}^2}{S_k}
        \le \frac{2L}{(k+1)^2}(\norm{x_{0} - x_*}^2  - \E{ \norm{z_{k} - x_*}^2} )
        \le \frac{2L \norm{x_{0} - x_*}^2}{(k+1)^2}.
   \end{align*}
   Taking the square-root yields
   \begin{align}\label{eq:tempolznerze}
       \frac{\E{f(x_k) -f(x_*)}}{\sqrt{S_k}}
       \le \frac{\sqrt{2L} \norm{x_{0} - x_*}}{k+1}.
   \end{align}
   Finally, notice that $\E{f(x_k) -f(x_*)} = c_k - \Delta_*$, so we arrive at the inequality
   \begin{align*}
       \frac{c_t}{\sqrt{S_t}}
       \le \frac{\sqrt{2L} \norm{x_{0} - x_*}}{t+1} + \frac{\Delta_*}{\sqrt{S_t}}.
   \end{align*}
   Summing this from $t=0$ to $k$ and then applying $\sum_{t=0}^k \frac{1}{t+1}\le \log (k+1) + 1$ and $S_t\ge (t+1)\Delta_*$ gives
   \begin{align*}
        \sqrt{S_k}
        &\overset{\eqref{eq:adagrad_lemma}}{\le} \sum_{t=0}^k\frac{c_t}{\sqrt{S_t}}
        \le \sum_{t=0}^k\frac{\sqrt{2L} \norm{x_{0} - x_*}}{t+1} + \sum_{t=0}^k\frac{\Delta_*}{\sqrt{S_t}} \\
        &\le \sqrt{2L} \norm{x_{0} - x_*}(\log(k+1)+1) + \sum_{t=0}^k\frac{\sqrt{\Delta_*}}{\sqrt{t+1}}.
   \end{align*}
   Furthermore,  it holds $\sum_{t=0}^k\frac{1}{\sqrt{t+1}} \le 2\sqrt{k+1}$, so we finally get
   \begin{align*}
       \sqrt{S_k}
       \le \sqrt{2L} \norm{x_{0} - x_*}(\log(k+1)+1) + \sqrt{\Delta_*}\sqrt{k+1}.
   \end{align*}
Using the above inequalities in~\eqref{eq:tempolznerze} gives
   \begin{align*}
        \E{f(x_k) -f(x_*)}
       \le \frac{\sqrt{2L} \norm{x_{0} - x_*}\sqrt{S_k}}{k+1}
       \le \frac{2L\norm{x_{0} - x_*}^2 (\log(k+1)+1)}{k+1} + \frac{\sqrt{2L\Delta_*}\norm{x_{0} - x_*}}{\sqrt{k+1}}.
   \end{align*}
\end{proof}

\subsection{Additional Result: Convergence for \IAM{} in the Nonsmooth Setting with Decreasing $\lambda_t$}

\begin{theorem}
    \label{thm:iaam-relax-nonsmooth}
  Consider the setting of~\Cref{theo:nonsmooth}, except that $\lambda_0 =0$ and $(\lambda_t)_{t=1}^k$ is any decreasing sequence of nonnegative reals. 
   It follows that
    \begin{align*}
        \mathbb{E}[f(\overline{x}_k)-f(x_*)]+\frac{1}{k+1}\sum_{t=0}^k\lambda_t
        \mathbb{E}[B_{f}(x_{t-1},x_t)]\leq\frac{G\|x_0-x_*\|}{\sqrt{k+1}}+\frac{\lambda_1}{k+1}\mathbb{E}[f(x_0)-f(x_*)].
    \end{align*}    
\end{theorem}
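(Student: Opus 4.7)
The plan is to follow the outline of the proof of \Cref{theo:nonsmooth} but to replace the tight telescoping identity of \Cref{lem:estimate-nominator}, which crucially relied on $\lambda_t=t$ (so that $1+\lambda_t=\lambda_{t+1}$), by an Abel summation argument that only exploits the monotonicity of $(\lambda_t)$.

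First I would invoke \Cref{lem:descent}. Part~\ref{item:as-bounded} yields that $x_t,z_t\in\mathbb{B}_D(x_*)$ almost surely, so the local bound \eqref{eq:exp-lipschitz} gives $\mathbb{E}[\|g_t\|^2]\le G^2$ at every iterate, and part~\ref{item:summed} then yields, setting $D=\|x_0-x_*\|$,
\begin{align*}
\Bigl(\sum_{t=0}^{k}\E{f(x_t)-f(x_*)+\dotprod{g(x_t),z_{t-1}-x_t}}\Bigr)_+^2 \;\le\; (k+1)\,G^2 D^2.
\end{align*}
The remaining task is to lower bound the sum inside $(\cdot)_+$ by $(k+1)\E{f(\bar x_k)-f(x_*)}+\sum_{t=0}^{k}\lambda_t\E{B_f(x_{t-1},x_t)}-\lambda_1\E{f(x_0)-f(x_*)}$; then taking square roots (the case where the positive part vanishes is trivial) and dividing by $k+1$ gives the stated bound.

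For that lower bound, write $a_t:=f(x_t)-f(x_*)\ge 0$ and use \Cref{lem:bregmanview} termwise to obtain the identity
\begin{align*}
\sum_{t=0}^{k}\bigl[f(x_t)-f(x_*)+\dotprod{g(x_t),z_{t-1}-x_t}\bigr]
\;=\; \sum_{t=0}^{k}\bigl[(1+\lambda_t)a_t-\lambda_t a_{t-1}\bigr]
\;+\; \sum_{t=0}^{k}\lambda_t B_f(x_{t-1},x_t),
\end{align*}
where the $t=0$ contribution $-\lambda_0 a_{-1}$ vanishes since $\lambda_0=0$. An Abel summation rearranges the first sum on the right as
\begin{align*}
\sum_{t=0}^{k}\bigl[(1+\lambda_t)a_t-\lambda_t a_{t-1}\bigr]
\;=\; \sum_{t=0}^{k} a_t \;+\; \lambda_k a_k \;-\; \lambda_1 a_0 \;+\; \sum_{t=1}^{k-1}(\lambda_t-\lambda_{t+1})\,a_t.
\end{align*}
By the hypothesis that $(\lambda_t)$ is decreasing and nonnegative, every term on the right except $-\lambda_1 a_0$ is nonnegative, so it can be dropped to get a lower bound. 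Taking expectations and applying Jensen's inequality $\sum_{t=0}^{k}\E{a_t}\ge (k+1)\E{f(\bar x_k)-f(x_*)}$ delivers the claimed lower bound.

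The main obstacle I anticipate is the bookkeeping in the Abel summation: one needs to check that the conventions $\lambda_0=0$ and $(\lambda_t)$ decreasing are exactly what makes the three correction terms $\lambda_k a_k$, $(\lambda_t-\lambda_{t+1})a_t$ sign-correct, leaving $-\lambda_1 a_0$ as the only negative residue. This term is harmless: it produces the additive $\tfrac{\lambda_1}{k+1}\E{f(x_0)-f(x_*)}$ correction on the right-hand side of the statement. Everything else is routine and parallels the proof of \Cref{theo:nonsmooth}; note in particular that this argument only bounds the average iterate $\bar x_k$, not the last iterate, which is why the statement is weaker in that respect than \Cref{theo:nonsmooth}.
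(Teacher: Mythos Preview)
Your proposal is correct and follows essentially the same route as the paper's own proof. Both arguments rewrite each summand via the Bregman identity of \Cref{lem:bregmanview}, perform the same Abel summation using $\lambda_0=0$ and the monotonicity of $(\lambda_t)$ to isolate $-\lambda_1 a_0$ as the sole negative residue, bound $\mathbb{E}\|g_t\|^2\le G^2$ from \ref{item:as-bounded}, and finish with Jensen; the only cosmetic difference is that you invoke the already-summed recurrence \ref{item:summed} of \Cref{lem:descent}, whereas the paper starts from the single-step recurrence and averages afterwards.
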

The advantage of this result is that it holds for any constant $\lambda_t=\lambda$.
Translating this to the momentum method~\eqref{eq:momentum}, this  allows for other parameter setting of $(\gamma_t, \beta_t)$. 
In particular the setting $\lambda_t=\lambda=0$ which corresponds to no momentum. In this setting we retrieve the exact same rate as the \SPS* method in~\Cref{cor:spsnonsmooth}. However, as mentioned earlier the price we need to pay for this, is that this result holds for the Cesaro average and not of the last iterate. 

\begin{proof}
  Starting  from \cref{lem:descent}:
\begin{align}\label{eq:tmp-recurrence-iii}
           \norm{z_{t} - x_*}^2 \ & \leq
           \norm{z_{t-1} - x_*}^2 -\frac{\big(f(x_t) -f(x_*)  + \dotprod{g_t,z_{t-1}-x_t} \big)_+^2}{ \norm{g_t}^2}. 
\end{align}
    Taking expectation, and using our extended Titu's ~\Cref{lem:titu} and Bregman viewpoint~\Cref{lem:bregmanview}  to get 
    \begin{align*}
        \mathbb{E}\|z_t-x_*\|^2
        &\leq\mathbb{E}\|z_{t-1}-x_*\|^2-\frac{\mathbb{E}[f(x_t)-f(x_*)+\langle g(x_t),z_{t-1}-x_t\rangle]_+^2}{\mathbb{E}\|g_t\|^2}\\
        &\leq\mathbb{E}\|z_{t-1}-x_*\|^2-\frac{\mathbb{E}[(1+\lambda_t)[f(x_t)-f(x_*)]-
        \lambda_t[f(x_{t-1})-f(x_*)]+\lambda_tB_{f}(x_{t-1},x_t)]_+^2}{\mathbb{E}\|g_t\|^2}\\
        &\leq\mathbb{E}\|z_{t-1}-x_*\|^2-\frac{\mathbb{E}[(1+\lambda_t)[f(x_t)-f(x_*)]-
        \lambda_t[f(x_{t-1})-f(x_*)]+\lambda_tB_{f}(x_{t-1},x_t)]_+^2}{G^2}.
    \end{align*}
    Multiplying through by $G^2$ gives
    \begin{align*}
        \mathbb{E}[(1+\lambda_t)[f(x_t)-f(x_*)]-\lambda_t[f(x_{t-1})-f(x_*)]+\lambda_t
        B_{f}(x_{t-1},x_t)]_+^2\leq G^2\mathbb{E}\|z_{t-1}-x_*\|^2-G^2\mathbb{E}\|z_t-x_*\|^2.
    \end{align*}
    Now let $\Delta_t=(1+\lambda_t)[f(x_t)-f(x_*)]-\lambda_t[f(x_{t-1})-f(x_*)]+
    \lambda_tB_{f}(x_{t-1},x_t)$. Averaging both sides of the above over $t=0,\dots,k$, telescoping terms, and using Jensen's inequality with respect to the convex function $x \mapsto (x_+)^2$ gives 
    \begin{align*}
        \frac{G^2\|x_0-x_*\|^2}{k+1}
        &\geq\frac{G^2}{k+1}\left(\mathbb{E}\|x_0-x_*\|^2-\mathbb{E}\|z_{k+1}-x_*\|^2\right)\\
        &\geq\frac{1}{k+1}\sum_{t=0}^k\mathbb{E}[\Delta_t]_+^2\\
        &\geq\left(\frac{1}{k+1}\sum_{t=0}^k\mathbb{E}[\Delta_t]\right)_+^2.
    \end{align*}
Taking the square root gives
    \begin{align}\label{eq:tempieoh88zec}
        \left(\frac{1}{k+1}\sum_{t=0}^k\mathbb{E}[\Delta_t]\right)_+\leq\frac{G\|x_0-x_*\|}
        {\sqrt{k+1}}.
    \end{align}
    Now since $(\lambda_t)$ is decreasing and using Jensen's inequality with respect to $x \mapsto f(x)$ we have that
    \begin{align*}
        \sum_{t=0}^k\mathbb{E}[\Delta_t]
        &=\sum_{t=0}^k(1+\lambda_t)\mathbb{E}[f(x_t)-f(x_*)]-\lambda_t\mathbb{E}[f(x_{t-1})-
        f(x_*)]+\lambda_t\mathbb{E}[B_{f}(x_{t-1},x_t)]\\
        &=\sum_{t=0}^k\lambda_t\mathbb{E}[B_{f}(x_{t-1},x_t)]+\sum_{t=0}^k\mathbb{E}[f(x_t)-
        f(x_*)]+\sum_{t=0}^k\lambda_t\mathbb{E}[f(x_t)-
        f(x_*)]-\sum_{t=0}^k\lambda_t\mathbb{E}[f(x_{t-1})-f(x_*)]\\
        &=\sum_{t=0}^k\lambda_t\mathbb{E}[B_{f}(x_{t-1},x_t)]+\sum_{t=0}^k\mathbb{E}[f(x_t)-
        f(x_*)]+\sum_{t=1}^k\lambda_t\mathbb{E}[f(x_t)-
        f(x_*)]-\sum_{t=1}^k\lambda_t\mathbb{E}[f(x_{t-1})-f(x_*)]\\
        &=\sum_{t=0}^k\lambda_t\mathbb{E}[B_{f}(x_{t-1},x_t)]+\sum_{t=0}^k\mathbb{E}[f(x_t)-
        f(x_*)]+\sum_{t=1}^{k-1}(\lambda_t-\lambda_{t+1})\mathbb{E}[f(x_t)-f(x_*)]\\
        &\quad+\lambda_k\mathbb{E}[f(x_k)-f(x_*)]-\lambda_1\mathbb{E}[f(x_0)-f(x_*)]\\
        &\geq\sum_{t=0}^k\lambda_t\mathbb{E}[B_{f}(x_{t-1},x_t)]+\sum_{t=0}^k\mathbb{E}[f(x_t)-
        f(x_*)]-\lambda_1\mathbb{E}[f(x_0)-f(x_*)]\\
        &\geq\sum_{t=0}^k\lambda_t\mathbb{E}[B_{f}(x_{t-1},x_t)]+(k+1)\mathbb{E}[f(\overline{x}_k)-f(x_*)]-\lambda_1\mathbb{E}[f(x_0)-f(x_*)],
    \end{align*}
    where we used that $\lambda_0 =0$ and $\lambda_t-\lambda_{t+1} \geq 0$ since $(\lambda_t)$ is a decreasing sequence. Dividing through by $(k+1)$ gives
\begin{align*}
        \frac{1}{k+1}\sum_{t=0}^k\mathbb{E}[\Delta_t]
        &\geq \frac{1}{k+1}\sum_{t=0}^k\lambda_t\mathbb{E}[B_{f}(x_{t-1},x_t)]+\mathbb{E}[f(\overline{x}_k) \
        -f(x_*)] -\frac{\lambda_1}{k+1}\mathbb{E}[f(x_0)-f(x_*)].
    \end{align*}
Using the above and~\eqref{eq:tempieoh88zec} gives
    \begin{align*}
        \left(\frac{1}{k+1}\sum_{t=0}^k\lambda_t\mathbb{E}[B_{f}(x_{t-1},x_t)]+
        \mathbb{E}[f(\overline{x}_k)-f(x_*)] -\frac{\lambda_1}{k+1}\mathbb{E}[f(x_0)-f(x_*)]\right)
        &\leq\left(\frac{1}{k+1}\sum_{t=0}^k\mathbb{E}[\Delta_t]\right)_+\\
        &\leq\frac{G\|x_0-x_*\|}{\sqrt{k+1}}.
    \end{align*}
Re-arranging gives the result.
\end{proof}

\subsection{Additional Result: Deriving an Expression for \texttt{Adam} with Polyak Stepsizes} \label{asec:iam-adam}

Following an analogous reasoning used in~\Cref{sec:iams}, we can derive variants of \IAM{} that use preconditioning. This is particularily important for models such as Transformers, where using an \texttt{Adam} preconditioner is required to achieve a reasonable performance. 

   To arrive at a preconditioned version of \IAM{}, let $\mD_t \in \R^{d\times d}$ be our positive definite symmetric preconditioner, 
 and let  $ \norm{z }^2_{\mD_t} := \dotprod{\mD_t z, z}$ be the norm induced by this preconditioner. Now consider the iterative averaging method with this preconditioner:
 \begin{eqnarray}
z_{t} & = & z_{t-1}-\eta_{t}\mD_t^{-1}g_t,\label{eq:zup-precon}\\
x_{t+1} & = &\frac{\lambda_{t+1}}{1+\lambda_{t+1}} x_{t}+\frac{1}{1+\lambda_{t+1}}z_{t}. \label{eq:xup-precon}
\end{eqnarray}
Now we upper bound the distance between $z_t$ and a solution $x_*$ under the preconditioned norm via
\begin{align}
   \norm{z_{t} - x_*}^2_{\mD_t} \
    & = 
    \norm{z_{t-1} - x_*}^2_{\mD_t}   -2\eta_t \dotprod{\mD_t^{-1}g_t, z_{t-1}-x_*}_{\mD_t}  + \eta_t^2 \norm{g_t}_{\mD_t^{-1}} ^2  \nonumber \\
     & = 
    \norm{z_{t-1} - x_*}^2_{\mD_t}   -2\eta_t \dotprod{g_t, z_{t-1}-x_*}  + \eta_t^2 \norm{g_t}_{\mD_t^{-1}} ^2 \nonumber \\
    & \leq  \norm{z_{t-1} - x_*}^2_{\mD_t}   -2\eta_t \big( f_{\xi_t}(x_t) -f_{\xi_t}(x_*)  + \dotprod{g_t,z_{t-1}-x_t} \big)+ \eta_t^2 \norm{g_t}_{\mD_t^{-1}}^2,  \nonumber 
\end{align}
where in the inequality we used 
that $f_{\xi_t}$ is convex.
Minimizing the right-hand side with respect to $\eta_t$ now gives the step size given in line~\ref{ln:etaP} in Algorithm~\ref{alg:IMAP}. 
\begin{algorithm}[h!]
\begin{algorithmic}[1]
    \caption{\texttt{IAM-Adam}}
    \label{alg:IMAP}
    \STATE \textbf{Input:} $z_{-1}=x_0 \in \R^d$, $\lambda_t > 0$
    \FOR{$t=0$ to $T-1$}
    	\STATE $ \eta_t \;= \; \displaystyle \frac{\big[ f_{\xi_t}(x_t) -\ell_{\xi_t}^*+\dotprod{g_t,z_{t-1}-x_t}\big]_+}{\norm{g_t}_{\mD_t^{-1}}^2},$\label{ln:etaP}
        \STATE $ z_{t} \; = \; z_{t-1}-\eta_{t}\mD_t^{-1}g_t  $ \label{ln:zupP}
        \STATE $x_{t+1}  \;= \; \displaystyle \frac{\lambda_{t+1}}{1+\lambda_{t+1}} x_{t}+\frac{1}{1+\lambda_{t+1}}z_{t} $ \label{ln:xupP}
    \ENDFOR
    \STATE \textbf{Return:} $x_T$
\end{algorithmic}
\end{algorithm}
To arrive at our \texttt{IAM-Adam} method, we simply set $\mD_t$ to be the preconditioner used by \texttt{Adam}, that is $\mD_t= \texttt{diag}(\sqrt{v_t} + \epsilon)$  where 
\[v_{t+1} =  \beta_2 v_{t} + (1-\beta_2) g_t \odot g_t. \]

\section{Discussion}\label{S:discussion}

\subsection{Anytime Convergence Rates vs. Finite Horizon Complexity Rates}

Here we take the time to properly define what we mean by (anytime) convergence rates and (finite horizon) complexity rates.
For the sake of the discussion, we consider a certain quantity of interest $(q_t)_{t \in \mathbb{N}} \subset (0, +\infty)$ which we want to be small when $t$ grows.
Typically $q_t$ will measure an optimality gap for some algorithm, such as the function value gap $\mathbb{E}\left[f(x_t) - \inf f  \right]$.
What is the algorithm here does not really matter. What does matter is that this algorithm depends on hyperparameters, which once fixed generate a sequence which in turn define the quantities of interest $(q_t)_{t \geq 0}$.

\begin{enumerate}
    \item We say that we have an anytime \emph{convergence rate} for $q_t$ if, for every choice of hyperparameter, there exists a rate function $r : [0, +\infty) \to [0, +\infty)$ such that $q_t \leq r(t)$ holds true for every $t \in \mathbb{N}$, and $r(t) \to 0$ when $t \to + \infty$.
    \item We say that we have a finite horizon \emph{complexity rate} for $q_t$ if, for every tolerance $\varepsilon > 0$, there exists a choice of hyperparameters and $T \in \mathbb{N}$ such that $q_T \leq \varepsilon$.
\end{enumerate}

A typical example of convergence rate is the function value gap for the Gradient Descent algorithm with constant stepsize: for every choice of stepsize $\gamma \in (0, \tfrac{1}{L})$, for every starting point $x_0$,  we know that
\begin{equation*}
    f(x_t) - \inf f \leq \frac{\Vert x_0 - x_* \Vert^2 }{2 \gamma t}.
\end{equation*}
An other example is SGD with a vanishing stepsize.
It should be clear that convergence rates entail complexity rate: 
given any tolerance $\varepsilon>0$, because $r(t) \to 0$ we can find a $T$ large enough so that $r(T) \leq \varepsilon$.
The reverse is in general not true: $q_t$ can have a complexity rate without having a convergence rate.
A typical example is SGD with a constant stepsize: for every choice of stepsize $\gamma \in (0, \tfrac{1}{4L})$, for every starting point $x_0$,  we know that
\begin{equation*}
    \mathbb{E}\left[ f(\bar x_t) - \inf f \right] \leq \frac{\Vert x_0 - x_* \Vert^2 }{2\gamma t} + 2 \gamma \sigma^2_*.
\end{equation*}
This means that, for every fixed tolerance $\varepsilon > 0$, we can chose a stepsize $\gamma = O(\varepsilon)$ small enough, and take $T = O(\tfrac{1}{\varepsilon^2})$ large enough so that 
\begin{equation*}
    \mathbb{E}\left[ f(\bar x_t) - \inf f \right] \leq \varepsilon.
\end{equation*}
The key difference bewteen convergence rates and complexity rates is that for convergence rates the user does not need to fix the hyperparameters as a function of the desired tolerance.

\subsection{Complexity of \SGD{} with Respect to Interpolation}

\begin{theorem}[Complexity of \SGD{}]\label{theo:SGDadaptsigma}
    Let $f = \tfrac{1}{n} \sum_{i=1}^n f_i$ where each $f_i : \mathbb{R}^d \to \mathbb{R}$ is convex and $L$-smooth, and assume that $f$ admits a minimizer, noted $x_*$.
    Let $x_0 \in \mathbb{R}^d$, and note $D := \Vert x_0 - x_* \Vert$ and $\sigma_*^2 = \mathbb{E}\left[ \Vert \nabla f_i(x_*)\Vert^2 \right]$.
    Let $T \geq 1$ be fixed, and let $\gamma = \tfrac{ \gamma_0}{\sqrt{v T + 1}}$ where $\gamma_0 \leq \frac{1}{4L}$, and $v \geq 0$ is a variance upper estimate of $\sigma_*^2$ satisfying $\sigma_*^2 \leq C v$ for some $C>0$.
    Let $(x_t)_{t=0}^T$ be the sequence generated by the \SGD{} algorithm with constant stepsize $\gamma$.
    Then
    \begin{equation*}
        \mathbb{E}\left[ f(\bar x_T) - \inf f \right]
        \leq 
        \frac{D^2}{\gamma_0 T} + \frac{\sqrt{v}}{\sqrt{T}} \left( \frac{D^2}{\gamma_0} + 2 \gamma_0 C \right),
    \end{equation*}
    where $\bar x_T = \tfrac{1}{T}\sum_{t=0}^{T-1} x_t$.
    In particular the choice $v = \sigma_*^2$ leads to $\gamma = \tfrac{ \gamma_0}{\sqrt{\sigma_*^2 T + 1}}$ and gives the rate
    \begin{equation*}
        \mathbb{E}\left[ f(\bar x_T) - \inf f \right]
        \leq 
        \frac{D^2}{\gamma_0 T} + \frac{\sigma_*}{\sqrt{T}} \left( \frac{D^2}{\gamma_0} + 2 \gamma_0 \right).
    \end{equation*}
\end{theorem}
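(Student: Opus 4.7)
The plan is to run the standard \SGD{} descent argument for smooth convex finite sums, and then tune the bookkeeping at the very end to handle the specific step size $\gamma = \gamma_0/\sqrt{vT+1}$ together with the hypothesis $\sigma_*^2 \leq C v$. Nothing here is conceptually new compared to existing \SGD{} theory; the only slightly delicate piece is that the final bound must be expressed in $\sqrt{v}$ (not $\sigma_*$) while losing at most a constant factor.

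First I would establish the expected-smoothness bound $\mathbb{E}_i\left[\|\nabla f_i(x)\|^2\right] \leq 4L(f(x)-\inf f) + 2\sigma_*^2$ for every $x\in\R^d$. This follows by decomposing $\nabla f_i(x) = (\nabla f_i(x)-\nabla f_i(x_*)) + \nabla f_i(x_*)$, using $\|a+b\|^2 \leq 2\|a\|^2+2\|b\|^2$, and then applying the cocoercivity inequality for each convex $L$-smooth $f_i$. This is exactly the argument of \Cref{P:local variance transfer:gradient noise}, now used with the global smoothness constant $L$ in place of $L_{\mathcal{B}}$.

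Next I would write the standard one-step recursion for \SGD{} with constant step size $\gamma$:
\begin{equation*}
\mathbb{E}\bigl[\|x_{t+1}-x_*\|^2 \mid x_t\bigr] \leq \|x_t-x_*\|^2 - 2\gamma\bigl(f(x_t)-\inf f\bigr) + \gamma^2\, \mathbb{E}_i\bigl[\|\nabla f_i(x_t)\|^2\bigr].
\end{equation*}
Plugging in the expected-smoothness bound and using $\gamma \leq 1/(4L)$, the coefficient $2\gamma(1-2L\gamma)$ on the function-value gap is at least $\gamma$, so
\begin{equation*}
\gamma\,\mathbb{E}\bigl[f(x_t)-\inf f\bigr] \leq \mathbb{E}\|x_t-x_*\|^2 - \mathbb{E}\|x_{t+1}-x_*\|^2 + 2\gamma^2 \sigma_*^2.
\end{equation*}
Summing from $t=0$ to $T-1$, telescoping, dividing by $\gamma T$, and applying Jensen's inequality to the convex function $f$ yields the intermediate bound $\mathbb{E}[f(\bar x_T)-\inf f] \leq D^2/(\gamma T) + 2\gamma\sigma_*^2$.

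Finally, I would substitute $\gamma = \gamma_0/\sqrt{vT+1}$ and simplify. Using $\sqrt{vT+1} \leq 1 + \sqrt{vT}$ on the first term gives $D^2/(\gamma T) \leq D^2/(\gamma_0 T) + D^2\sqrt{v}/(\gamma_0\sqrt{T})$. For the second term, using $1/\sqrt{vT+1} \leq 1/\sqrt{vT}$ together with $\sigma_*^2 \leq Cv$ gives $2\gamma\sigma_*^2 \leq 2\gamma_0 C\sqrt{v}/\sqrt{T}$ when $v>0$; the corner case $v=0$ forces $\sigma_*=0$ by hypothesis and is immediate. Combining the two bounds recovers the claimed inequality, and the specialization $v=\sigma_*^2$ (with $C=1$) produces the second displayed inequality. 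The main ``obstacle'' is entirely cosmetic: one must remember to split $\sqrt{vT+1}$ additively rather than keep it as-is, otherwise the $D^2/(\gamma_0 T)$ term gets buried inside a $\sqrt{vT+1}/T$ expression that no longer matches the target form.
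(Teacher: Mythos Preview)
Your proposal is correct and follows essentially the same route as the paper: derive the intermediate bound $\mathbb{E}[f(\bar x_T)-\inf f]\le D^2/(\gamma T)+2\gamma\sigma_*^2$ for constant-stepsize \SGD{} with $\gamma\le 1/(4L)$, then substitute $\gamma=\gamma_0/\sqrt{vT+1}$ and split $\sqrt{vT+1}\le 1+\sqrt{v}\sqrt{T}$ together with $\sigma_*^2\le Cv$. The only cosmetic difference is that the paper invokes this intermediate bound by citing an external reference (Theorem~5 in \cite{garrigos2023handbook}), whereas you re-derive it from the expected-smoothness inequality and the one-step recursion.
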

The above theorem shows that \SGD{} enjoys a $\cO{LD^2/T + \sigma_*LD^2/\sqrt{T}}$ complexity, which is similar to the result of \SPS* in~\Cref{cor:spssmooth}. But there are some important differences. First, \SPS* has an anytime convergence rate valid for every $T$, while \SGD{} has a complexity rate: it is a "finite horizon" rate where the horizon must be known before setting the stepsize.  
The other difference is that for \SGD{} to achieve this complexity, we need access to the smoothness constant $L$. 
In contrast, \SPS* adapts to both smoothness and non-smoothness.
Of course the main counterpart for \SPS* to achieve this is that it requires access to $f_\xi(x_*)$.
Note that we discuss in Section \ref{sec:approx} on how to implement \SPS* in practice.

The last important point is that \SGD{} needs access to the gradient variance constant $\sigma_*^2$, or at least a {faithful} upper estimate $v$ of $\sigma_*^2$. 
By \emph{faithful}, we mean here a constant $v$ which is zero whenever interpolation holds, so that the complexity reduces to $\cO{\tfrac{1}{T}}$ when interpolation holds.
An example of such faithful upper estimate is the function noise constant $\Delta_* := \inf f - \mathbb{E}_\xi \left[ \inf f_\xi \right]$, since it satisfies $\sigma_*^2 \leq 2 L \Delta_*$ and is zero if and only if interpolation holds (see Lemma 4.15 \& 4.18 in \cite{garrigos2023handbook}).
Observe that if we assume  being able to access $f_\xi(x_*)$, then we can reasonably compute $\Delta_*$. 
In that case \SGD{} can benefit from it by setting $\gamma = \tfrac{ \gamma_0}{\sqrt{\Delta_* T + 1}}$, leading to a complexity of the form
    \begin{equation*}
        \mathbb{E}\left[ f(\bar x_T) - \inf f \right]
        \leq 
        \frac{D^2}{\gamma_0 T} + \frac{\sqrt{\Delta_*}}{\sqrt{T}} \left( \frac{D^2}{\gamma_0} + 4 \gamma_0 L \right),
    \end{equation*}
which is (up to constants) as good as the one of \SPS*, except for the fact that this is a \emph{finite horizon complexity}.



\begin{proof}[Proof of \Cref{theo:SGDadaptsigma}]
    We are using the complexity rate of \SGD{} for a sum of convex smooth functions from Theorem 5 in \cite{garrigos2023handbook}. 
    The result states that, provided $\gamma \leq \tfrac{1}{4L}$, we can guarantee
    \begin{equation*}
        \mathbb{E}\left[ f(\bar x_T) - \inf f \right]
        \leq 
        \frac{D^2}{\gamma T} + {2 \gamma \sigma_*^2}.
    \end{equation*}
    With our choice of stepsize $\gamma = \tfrac{ \gamma_0}{\sqrt{v T + 1}}$, one sees that the hypothesis $\gamma \leq \tfrac{1}{4L}$ is guaranteed as long as $\gamma_0 \leq \frac{1}{4L}$.
    The desired bound then follows, after  cleaning some constants.
    More precisely,  write 
    \begin{equation*}
        \frac{1}{\gamma}
        =
        \frac{\sqrt{1 + v T}}{\gamma_0}
        \leq
        \frac{1 + v\sqrt{  T}}{\gamma_0}
        \quad \text{ and } \quad         
        2\gamma \sigma_*^2
        =
        \frac{2\gamma_0 \sigma_*^2}{\sqrt{1 + v T}} 
        \leq 
        \frac{2\gamma_0 Cv}{\sqrt{1 + v T}} 
        \leq 
        \frac{2\gamma_0 C \sqrt{v}}{\sqrt{ T}},
    \end{equation*}
    and combine all the above inequalities to conclude that
    \begin{equation*}
        \mathbb{E}\left[ f(\bar x_T) - \inf f \right]
        \leq 
        \frac{D^2}{\gamma T}
        + 2\gamma \sigma_*^2
        \leq
        \frac{D^2(1+v \sqrt{T})}{\gamma_0 T}
        + 2\frac{\gamma_0 \sigma_*^2}{\sqrt{ T}}
        =
        \frac{D^2}{\gamma_0 T} 
        +
        \frac{D^2\sigma_*^2 }{\gamma_0 \sqrt{T}}
        +
        2\frac{\gamma_0 \sigma_*^2}{\sqrt{ T}}.
    \end{equation*}
\end{proof}

\subsection{Comparison of Rates with a Stochastic Polyak Stepsize, in the Smooth Case}
\label{sec:sps-smooth-details}


Let us start with \SGD{}.  
Previously, Theorem 2.3.2 in \cite{garrigos2023function} showed a $\cO{1/{T}}$ anytime convergence rate for \SPS* under an interpolation assumption, but with no guarantees without interpolation.
Theorem\ 3.4 in \cite{SPS} provided the following general anytime bound for $\SPS{_{\max}}$, independently on interpolation holding true or not:
\begin{equation*}
    \E{f(\bar{x}_T) -f(x_*)} \leq \max \left( 1, 2 \gamma_b L \right)   \left( \frac{\Vert x_0 - x_* \Vert^2}{\gamma_b T} + 2\Delta_*  \right),
\end{equation*}
where $\gamma_b$ is the parameter appearing in the definition of $\SPS{_{\max}}$, recall eq. \eqref{eq:SPSmax-intro} ; and $\Delta_* = \inf f - \mathbb{E} \inf f_\xi$ is a constant which is zero if and only if interpolation holds (see Section 5.3 in \cite{garrigos2023handbook} for a proof).
Unfortunately, this bound is not a convergence rate.
Even worse, because the  variance term $\Delta_*$ cannot be controlled say, by a multiplicative free parameter), the above bound cannot be converted into a complexity result, even if we had access to the value of $L$.
More recently, a variant of stochastic Polyak stepsizes called NGN was proposed~\cite{orvieto2024NGN}, and provably enjoys the following complexity bound (see Theorem 4.5 in \cite{orvieto2024NGN}):
\begin{equation*}
    \E{f(\bar{x}_T) -f(x_*)} \leq  
    \cO{\frac{\Vert x_0 - x_* \Vert^2}{\gamma' T} + \gamma' \Delta_* + \gamma'(2\gamma' L - 1)_+ \mathbb{E}\left[ \inf f_\xi \right]},
\end{equation*}
where $\gamma'> 0$ is a free parameter of the NGN method.
One can see that this rate can be converted into a $\mathcal{O}(1/\sqrt{T})$ complexity, provided we take a small enough parameter $\gamma'$. 

In contrast, our smooth result in~\Cref{cor:spssmooth} is, as far as we know, the first $\cO{1/\sqrt{T}}$ \emph{anytime convergence rate} which is adaptive to interpolation for a stochastic variant of the Polyak stepsize, let alone for vanilla \SGD{}. 
For \SGD{}, it is not possible to have at the same time an anytime $\cO{1/\sqrt{T}}$ convergence rate together with adaptivity to interpolation, at least up to our knowledge.
For instance, taking vanishing stepsizes $\gamma_t \propto \tfrac{1}{\sqrt{t}}$ guarantees (see Theorem 5.7 in \cite{garrigos2023handbook}) an anytime convergence rate 
\begin{equation*}
    \E{f(\bar{x}_T) -f(x_*)} \leq 
    \cO{\frac{1 + \sigma_*^2 \log(T+1)}{\sqrt{T}}}
\end{equation*}
which is an anytime $\cO{1/\sqrt{T}}$ convergence rate, but does not revert to a $\cO{1/{T}}$ rate when interpolation holds.
This is an instance of a rate which does not benefit from interpolation.
Another choice is to fix a finite horizon $T \geq 1$ and to take $\gamma \propto \tfrac{1}{\sqrt{T}}$ which guaranteees (see Theorem 5.5 in \cite{garrigos2023handbook}) a bound 
\begin{equation*}
    \E{f(\bar{x}_T) -f(x_*)} \leq   
    \cO{\frac{1 + \sigma_*^2 }{\sqrt{T}}}
\end{equation*}
which has the same issues as the above mentioned result, on top of not being an anytime convergence rate.
Finally, if one can estimate $\sigma_*^2$, then it is possible to set the stepsize $\gamma$ as in Theorem~\ref{theo:SGDadaptsigma} to obtain a rate
\begin{equation*}
    \E{f(\bar{x}_T) -f(x_*)} \leq   
    \cO{\frac{1 }{{T}} + \frac{\sigma_*^2}{\sqrt{T}}}
\end{equation*}
which is a finite horizon (complexity) rate.

Let us now discuss and compare our result in  \cref{theo:smooth} about rates for Momentum with Polyak stepsizes, an algorithm that we call  \IAM{} (see \cref{alg:IAM}).
It is known that Momentum (as rewritten in \eqref{eq:zup}-\eqref{eq:xup}) with a learning rate $\eta \leq \tfrac{1}{4L}$ enjoys a finite iterate complexity rate (see Theorem 7.4 in \cite{garrigos2023handbook}):
\begin{equation*}
    \E{f({x}_T) -f(x_*)} \leq   
    \frac{\Vert x_0 - x_* \Vert^2}{\eta (T+1)} + 2 \eta \sigma_*^2.
\end{equation*}
This is finite horizon complexity rate, which is not adaptive with respect to $L$.
Instead, we show that using a Polyak stepsize as defined in \cref{alg:IAM} allows for a better anytime convergence rate, which is adaptive to $L$.
\if{
For \SGD{} with a learning rate of $\eta_t = \eta/\sqrt{t+1}$ where $\eta \leq \frac{1}{2L}$, the average iterate converges according to~\citet[Thm.\ 5.5]{garrigos2023handbook}: 
\begin{equation}\label{eq:SGDconv}
        \E{f(\bar{x}_T) -f(x_*)}
       \le \frac{\norm{x_0 - x_*}^2}{2\eta \sqrt{T+1}}  +  \frac{\eta\log(T+1)}{ \sqrt{T+1}}\sigma_*^2.
\end{equation}  
where
$\bar{x}_T \; := \; \sum_{t=0}^{T-1} p_{T,t} x_t$, with
	$p_{T,t} := \frac{\eta_T(1 - 2\eta_T L)}{\sum_{i=0}^{t-1}\eta_i(1 - 2\eta_i L)}$. In comparison to~\eqref{eq:SGDconv} the analysis in~\eqref{eq:theosmooth} of the
\IAM{}  method has two advantages: First, there is no additional $\log(T+1)$ term multiplying the dominating $\mathcal{O}(1/\sqrt{T+1})$ term, and it is adaptive to $L$. That is, \IAM{} does not need access to $L$ to achieve this same anytime convergence. Furthermore~\eqref{eq:SGDconv} is not adaptive to interpolation, in that when $\sigma_*^2=0$, the resulting rate of convergence is $\mathcal{O}(1/\sqrt{T+1})$, as opposed to the $\mathcal{O}(1/(T+1))$ rate that can be achieved under interpolation~\cite{pmlr-v80-ma18a,SGDstruct}.
}\fi 

\subsection{Comparison of Adaptive Methods}
In 
\Cref{tab:compare_with_others}
we make a qualitative comparison between our methods \SPS* and \IAM{} and other adaptive methods. \Cref{tab:compare_with_others} highlights how \SPS*,  to the best of our knowledge, is the only stochastic adaptive stepsize  that has favorable convergence rates in the smooth and  non-smooth problems, without having to modify the method, admits an unbounded domain and can increase the stepsize (not monotonic). The \IAM{} method shares the same benefits, but also has a fast last iterate convergence, as opposed to the average iterate for \SPS*.

\newcommand{\mycolspace}{1.5pt}
\setlength{\tabcolsep}{\mycolspace}

\begin{table}[t]
    \centering
\setlength\tabcolsep{3.5pt} 
  \begin{threeparttable}[b]
    {
	\renewcommand\arraystretch{1.8}
	\caption{A summary of related work and conceptual differences to our approach and the work in  AcceleGrad \cite{levy2018online},
            UniXGrad \cite{kavis2019unixgrad},
            AC-FGM \cite{li2023simple}, Prodigy \cite{mishchenko2024prodigy}, and
            USFGM \cite{rodomanov2024universal}.}
  \label{tab:compare_with_others}
	\centering 
        {\small 
	\begin{tabular}{c@{\hspace{\mycolspace}}c@{\hspace{\mycolspace}}c@{\hspace{\mycolspace}}c@{\hspace{\mycolspace}}c@{\hspace{\mycolspace}}c@{\hspace{\mycolspace}}c}\toprule[.1em]
            \textbf{Algorithm} & \makecell{\textbf{Last}\\ \textbf{iterate}} & \makecell{\textbf{Smooth}\\ \textbf{problems}} & \makecell{\textbf{Non-smooth}\\ \textbf{problems}}  & \makecell{\textbf{Unbounded}\\ \textbf{domain}} & \makecell{\textbf{Stoch.}\\ \textbf{gradients}} & \makecell{\textbf{Can increase}\\ \textbf{step size}} \\
            \midrule
            AcceleGrad 
             & \xmark & \xmark \tnote{\color{red}(1)} & \cmark & \xmark & \cmark & \xmark \\
            UniXGrad & \xmark & \cmark & \cmark & \xmark & \cmark & \xmark  \\
            AC-FGM & \xmark & \cmark & \cmark & \cmark & \xmark & \cmark \\
            Prodigy  & \xmark & \cmark & \cmark & \cmark & \xmark & \cmark \\
            USFGM  & \cmark & \cmark & \cmark & \xmark & \cmark & \xmark \\
            \hline 
            \SPS* (our result) & \xmark & \cmark & \cmark & \cmark & \cmark & \cmark    \\
            \IAM{} (ours) & \cmark  & \cmark & \cmark & \cmark & \cmark & \cmark \\
		\bottomrule[.1em]
    \end{tabular}
    }
    }
    \begin{tablenotes}
      {\footnotesize
        \item [\color{red}(1)] AcceleGrad's smooth analysis is for deterministic problems.
      }
    \end{tablenotes}
  \end{threeparttable}
\end{table}

\subsection{\SPS* in practice: Approximating $f_\xi(x_*)$ and Safe-guards}
\label{sec:approx}


In practice, outside of the interpolation regime, it is unlikely that we would have access to $f_{\xi}(x_*)$. To derive a practical method that is more generally applicable, we would need to estimate  $f_{\xi}(x_*)$. Let us call this estimate $\ell_{\xi}^*$, and consider the step size 
\begin{equation}\label{eq:SPSlower}
     \gamma^{\SPS}_t := 
        \frac{(f_{\xi}(x_t) - \ell_{\xi}^*)_+}{\|g_t\|^2}.
\end{equation}
The estimates $\ell_{\xi}^*$
would have to be \emph{underestimates}, otherwise the resulting method would stop early. Indeed, as $x_t \rightarrow x_*$ we have that $f_{\xi}(x_t) \rightarrow f_{\xi}(x_*)$, and the step size~\eqref{eq:SPSlower} would be zero before reaching convergence.

There are two natural underestimates for $f_{\xi}(x_*)$. The first is to use  $\inf f_{\xi}$. This is the approach used in \SPS{$\max$}~\cite{SPS}. The advantage of using $\inf f_{\xi}$ is that it often can be computed, indeed if no weight decay is being used (no L2 regularization), then often $\inf f_{\xi}=0.$ Which brings us to the second approach, which is to simply use $0$ as an underestimate, which holds for the ubiquitous case of having a positive loss~\cite{ALI-G,orvieto2024NGN}.

An issue with using an underestimate is that the step size~\eqref{eq:SPSlower} can become too large,  potentially even being unbounded if $\norm{g_t} \rightarrow 0$, which could lead to divergence.

  To safeguard against taking exceeding large step sizes, we can use clipping~\cite{SPS}, dampening~\cite{orvieto2024NGN}, or a combination of both~\cite{ALI-G}.
  By clipping, we mean to take the minimum between the stepsize in~\eqref{eq:SPSlower} and a hyperparameter $\gamma_b>0$ as is done in~\citet{SPS} in the $\SPS_{\max}$ method\footnote{Though $\SPS_{\max}$ has an additional constant $c$ in $\frac{(f_{\xi}(x_t) - \ell_{\xi}^*)_+}{c\|g_t\|^2}$. }
  \begin{equation}\label{eq:SPSmax}
     \gamma^{\SPS_{\max}}_t := 
       \min \left\{ \frac{(f_{\xi}(x_t) - \ell_{\xi}^*)_+}{\|g_t\|^2}, \gamma_b \right\}.
\end{equation}
We refer to dampening by adding an additional constant $\epsilon$ to the denominator, as is done in~\citet{orvieto2024NGN}, ~\citet{slackpolyak} and~\citet{ALI-G}:
  \begin{equation}\label{eq:SPSdamp}
     \gamma^{\SPS{dam}}_t := 
       \frac{(f_{\xi}(x_t) - \ell_{\xi}^*)_+}{\|g_t\|^2+\epsilon}.
\end{equation}
In particular in~\citet{orvieto2024NGN}, this dampening parameter depends in the iteration and is proportional to $f_{\xi}(x_t).$

Thus we can view several practical variants of $\SPS{}$ as approximations of \SPS*, where $f_{\xi}(x_*)$ is replaced by an underestimate, and a further safeguard is included to avoid large step sizes. These safeguards can also be motivated through a variational viewpoint based on solving relaxations of the interpolation condition~\cite{slackpolyak}.


\section{Experiments}

\subsection{Non-Lipschitz Non-smooth Convex Problem}
\label{sec:exp-convex-only}
To model discrete events with a Poisson regression,  we need to solve
\begin{equation}\label{eq:GLM}
    \min_{w \in \R^d} \frac{1}{n}\sum_{i=1}\left( \ell(w^\top x_i) - y_i \log\big(\ell(w^\top x_i)\big)\right),
\end{equation} 
where $\ell: \R \mapsto \R$ is called the link function. One of the most commonly used link functions is the exponential function $\ell(z) = \exp{z}.$ With this link function~\eqref{eq:GLM} becomes
\begin{equation}\label{eq:GLM-poisson}
    \min_{w \in \R^d} \frac{1}{n}\sum_{i=1}\left(\exp(w^\top x_i) - y_i  w^\top x_i \right).
\end{equation} 

We fit two different data sets. The first data set is on diabetes patients sourced from~\cite{efron2004least}, 
which is a medical dataset containing information on 442 patients ($n$), each described by 10 physiological and lifestyle features ($d$). 
The second data set is a bike sharing  records~\cite{fanaee2014bike}   in Washington, D.C., over a two-year period (2011-2012). It includes a total of 17,379 data points, and 12 features such as weather conditions, seasonal information, and temporal data. The target variable is the count of total bike rentals on an hourly basis.

As a baseline, we ran \texttt{L-BFGS}~\cite{liu1989limited} in full batch mode, and \SGD{} with constant learning rate tuned across
$$ \gamma \in 0.001 \cdot \{0.01, 0.1, 0.5, 1.0, 2.0, 5.0, 20, 50\}. $$
Each method was given the same budget in terms of epochs.  To highlight how important the choice of the learning rate is,
in Figure~\ref{fig:poisson_reg} we plot the resulting  loss ($y$-axis) of the last iterate of each method for different learning rates ($x$-axis).   We find that the \IAM{} method converges to a loss that is comparable to \texttt{LBFGS} and \SGD{} with the best possible learning rate. Furthermore, \IAM{} is the only method guaranteed to converge on this non-smooth and non-Lipschitz objective.

\begin{figure}[h!]
    \centering
    \begin{minipage}[t]{0.48\textwidth}
        \centering
        \includegraphics[width=\textwidth]{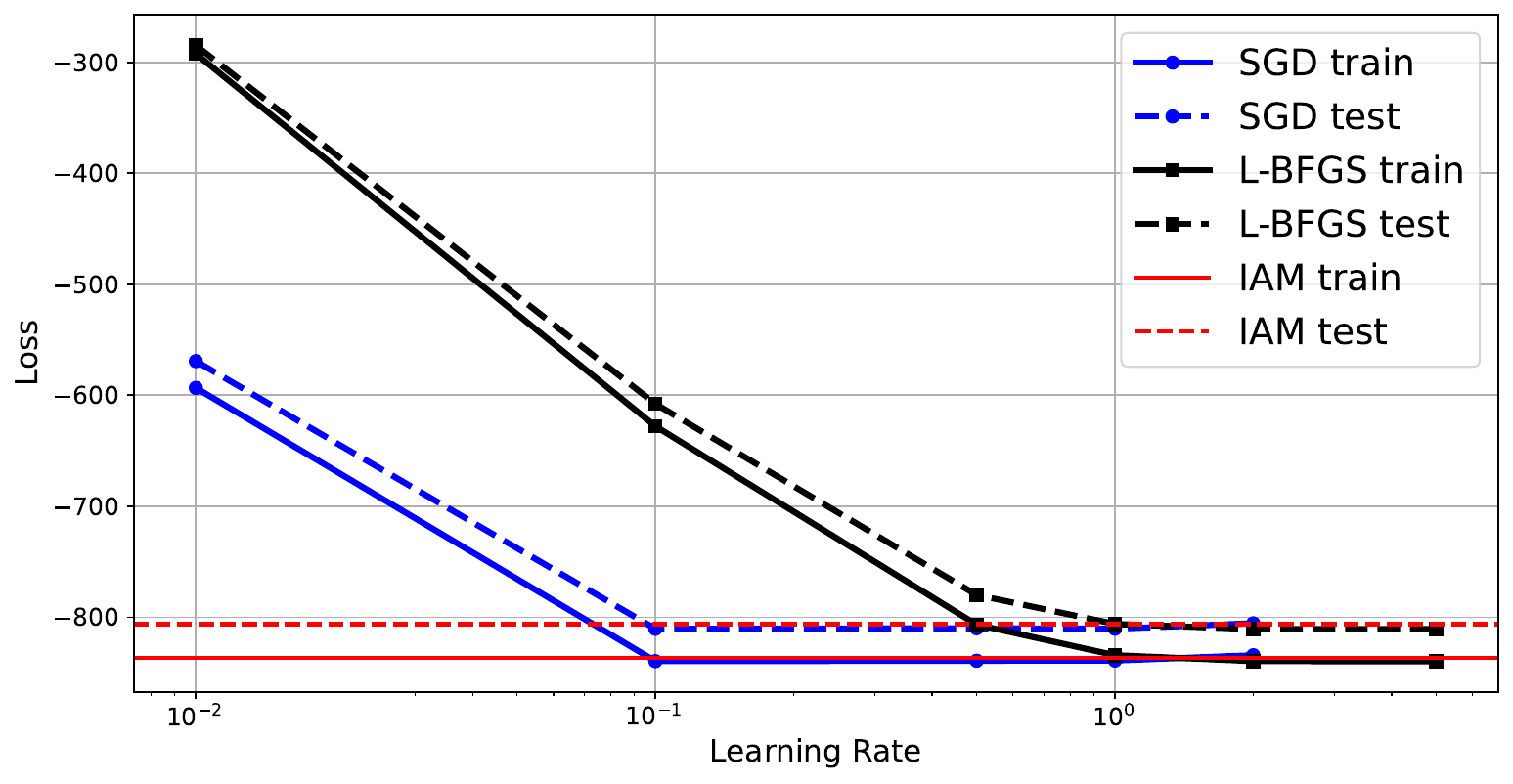}
        \caption*{Bike Sharing Data, 7 epochs}
        \label{fig:bike_sharing}
    \end{minipage}
    \hfill
    \begin{minipage}[t]{0.48\textwidth}
        \centering
        \includegraphics[width=\textwidth]{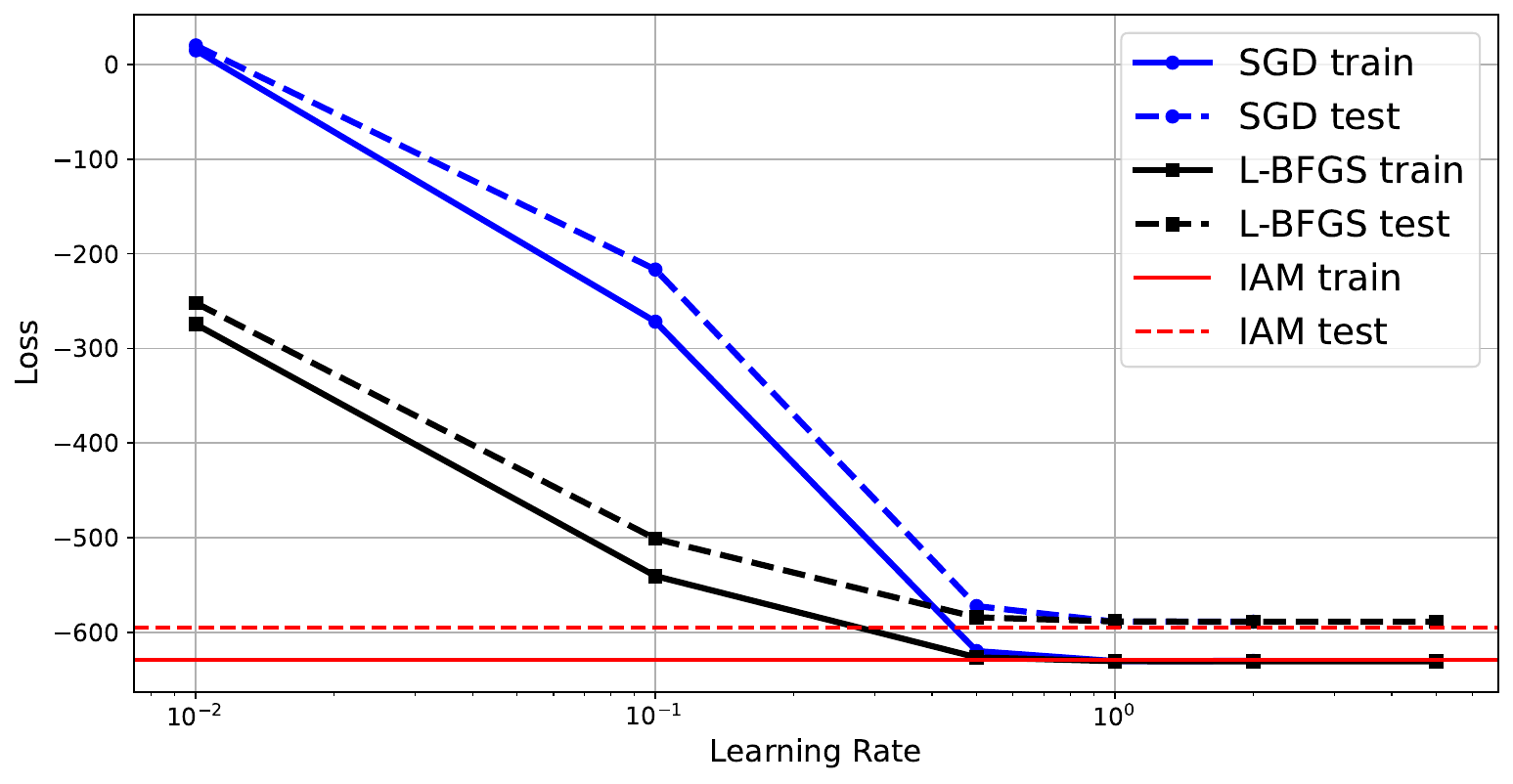}
        \caption{Diabetes Data, 15 epochs}
        \label{fig:diabetes}
    \end{minipage}
    \caption{Sensitivity to learning rate for each method. Larger learning rates diverged.}
    \label{fig:poisson_reg}
\end{figure}

\subsection{Misspecification of $f_\xi(x_*)$}
\label{sec:exp-misspecification}
In numerous machine learning applications a lower bound of $f_{\xi}(x_*)$ is known a priori, because loss functions are typically non-negative. 
We study the following three versions of \texttt{IAM}:
\begin{itemize}
    \item \textit{theoretical} version where we specify correctly $f_{\xi_t}(x_*)$ in every iteration $t$, computed from the oracle values $f_i^*,~i\in[n]$,
    \item \textit{averaged} version, where we specify $f_{\xi_t}(x_*)$ with $f(x_*)$ in every iteration,
    \item \textit{lower-bound} version, where we specify $f_{\xi_t}(x_*)$ with zero.
\end{itemize}

\paragraph{Description of experimental setup.}
Consider the following problem setup, which is adopted from \cite{Orvieto2022}: solve
\begin{align*}
    \min_{x\in \R^d} \frac{1}{n}\sum_{i=1}^n f_i(x), \quad f_i(x) := (x-x^i_*)^TH_i(x-x^i_*) + f_i^*,
\end{align*}
where $H_i \in \R^{d\times d}$ are symmetric positive definite matrices and $x_*^i\in \R^d$.
This is clearly an instance of \eqref{eq:prob}, where $\mathcal{D}$ is the uniform distribution over $[n]$ and $f_\xi(x) = f_i(x)$, and it holds $f(x) = \frac{1}{n}\sum_{i=1}^n f_i(x)$.

We consider two cases, (i) the interpolated case with $x_*^i=\bar{x}$ for all $i\in [n]$, and (ii) $x_*^i=\bar{x} + 0.05\varepsilon_i$, where $\varepsilon_i \in \R^d$ is standard normal. 
Following \cite{Orvieto2022}, we generate $H_i=A_i^TA_i / (3d)$ where the entries of $A_i \in \R^{3d\times d}$ are standard normal. We generate $f_i^*$ from a uniform distribution with mean $0.5$ and standard deviation $\nu$, followed by truncation at zero to make sure all $f_i^*$ are non-negative.

Note that in case (i) $\bar{x}$ is the minimizer of $f$ and of each $f_i$. Further, $f_{i}(x_*) = \inf_{x\in \R^d} f_i(x) = f_i^*$, and $f(x_*) = \frac{1}{n}\sum_{i=1}^n f_i^* = \inf_{x\in \R^d} f(x)$.
In the other case (ii), we compute the solution $x_*$ by solving a linear system, and then compute $f_i(x_*)$. We always compute $f_{\xi}(x_*)$ by averaging $f_i(x_*)$ over the corresponding mini-batch.

We vary the standard deviation $\nu\in \{0.01,0.1\}$ and the batch size $b\in\{4,16\}$.

We run all versions of \IAM{} with $\lambda_t=9$ for all $t\geq 0$, as suggested by our convergence Theorems~\ref{theo:smooth} and~\ref{theo:nonsmooth}. As a baseline, we compare to \texttt{SGD-M} with constant learning rate and momentum $\beta=0.9$. We set the learning rate to the theoretical value $\frac{1}{4L_{\max}}$ (cf.\ \citet{pmlr-v134-sebbouh21a}), where $L_{\max}:=\max_{i=1,\dots,n}L_i$ and $L_i:=2\lambda_{\max}(H_i)$ denotes the smoothness constant of $f_i$ (here $\lambda_{\max}$ denotes the largest eigenvalue).\footnote{Note that in \texttt{Pytorch} this requires setting \texttt{dampening}=0.9.} We further compare to \texttt{MoMo} that has access to $f_\xi(x_*)$, cf.\ \citep[Eq.\ 17]{Schaipp2024a}.

\paragraph{Discussion.}
In the interpolated case, see \cref{fig:lb-ablation-true}, the theoretical version of \IAM{} matches the rate of \texttt{SGD-M} \emph{without any tuning}. However, if $f_\xi(x_*)$ is mis-specified, the convergence stales. This effect is more pronounced if the noise is large, or the batch size is small. 
In the non-interpolated case, see \cref{fig:lb-ablation-false}, we observe that the theoretical version of \IAM{} obtains a smaller final loss than \texttt{SGD-M}. This matches our theoretical result in the smooth setting, where we showed that we get convergence even if $\sigma_*^2 > 0$.

Compared to \texttt{MoMo}, we observe roughly the same convergence behaviour, with \texttt{IAM} typically having a slightly bigger slope. As a side note, we observe that \texttt{MoMo} also converges without interpolation, even though this case is not covered by the theory of \citet{Schaipp2024a}.

\begin{figure}[!htb]
    \centering
    \includegraphics[width=0.9\columnwidth]{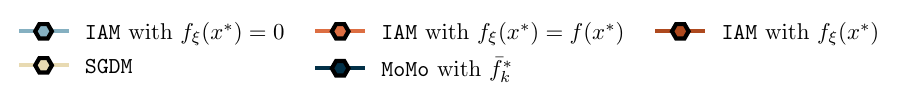}\\
    \includegraphics[width=0.32\columnwidth]{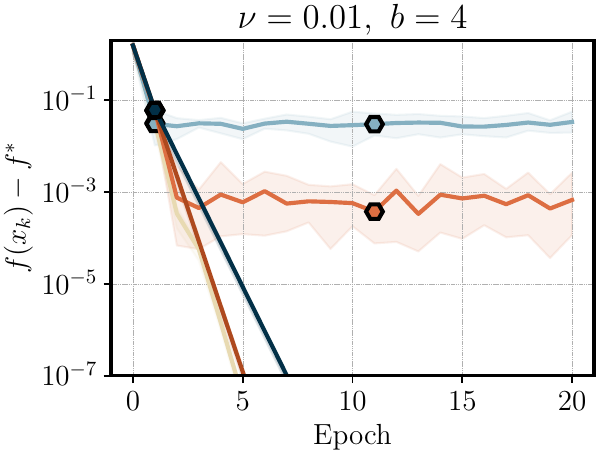}
    \includegraphics[width=0.32\columnwidth]{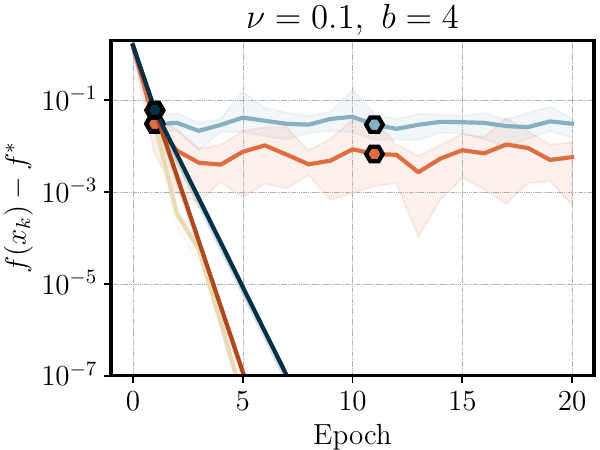}
    \includegraphics[width=0.32\columnwidth]{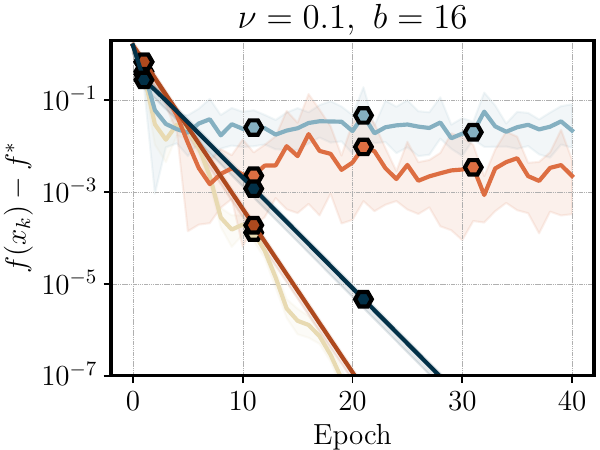}
    \caption{\textbf{Interpolation true:} \texttt{IAM} with the correct $f_{\xi_t}(x_*)$ converges as fast as \texttt{SGD-M} with the theoretical step size $\frac{1}{4L_{\max}}$. When $\nu$ is small \textbf{(left)}, the initial progress of \texttt{IAM} with the average $f(x_*)$ is equally good, before it stales. For $\nu$ large, the convergence stales earlier \textbf{(midlle)}. Increasing the batch size \textbf{(right)} slightly increases the gap between \IAM{} with $f_{\xi_t}(x_*)=0$ and $f_{\xi_t}(x_*)=f(x_*)$.}
    \label{fig:lb-ablation-true}
\end{figure}
\begin{figure}[!htb]
    \centering
    \includegraphics[width=0.9\columnwidth]{plots/lb_ablation/legend_train_loss.pdf}\\
    \includegraphics[width=0.32\columnwidth]{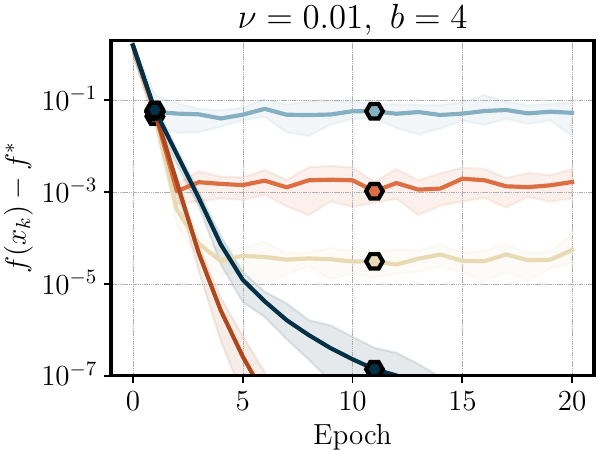}
    \includegraphics[width=0.32\columnwidth]{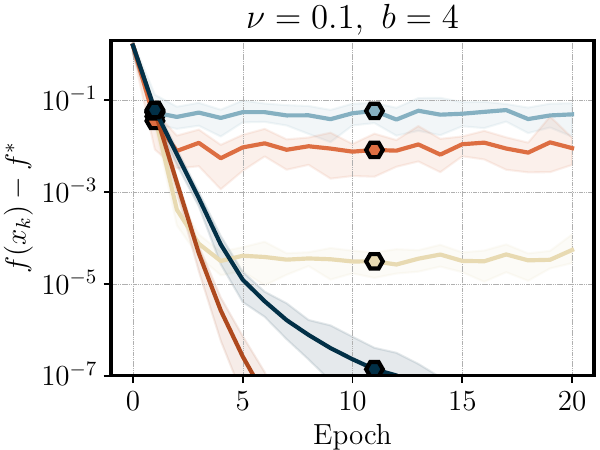}
    \includegraphics[width=0.32\columnwidth]{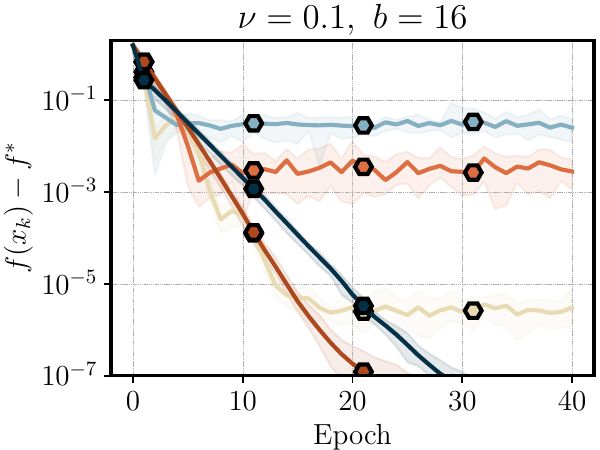}
    \caption{\textbf{Interpolation false:} see caption of \cref{fig:lb-ablation-true}. }
    \label{fig:lb-ablation-false}
\end{figure}

\subsection{Supplementary Material on Distillation Experiment}
\label{asec:distildetails}
Here we provide the complete details of our distillation experiments in \cref{sec:distillation}, together with some additional plots.

\paragraph{Datasets and models.}

The datasets we consider are below. We used the \texttt{GPT2Tokenizer} from the Transformers library.

\begin{itemize}
    \item \texttt{tinyShakespeare}~\citep{tinyshakespeare}: $40\,000$ lines from Shakespeare plays. The dataset has $303\,688$ tokens.

    Source: \url{https://huggingface.co/datasets/karpathy/tiny_shakespeare}

    \item \texttt{PTB} (Penn Treebank)~\citep{PTB}: The dataset contains $1\,094\,404$ tokens.

    Source: \url{https://huggingface.co/datasets/ptb-text-only/ptb_text_only}

    \item \texttt{Wikitext2}~\citep{wikitext-2}: This is a subset of a $100$ million token large collection of featured articles from Wikipedia. The dataset contains 
     $2\,389\,828$ tokens.

    Source: \url{https://huggingface.co/datasets/Salesforce/wikitext}
\end{itemize}

We use the module \texttt{GPT2LMHeadModel} from the HuggingFace \texttt{transformers} library \citep{Wolf2020} to define our \GPT{} models \href{https://huggingface.co/docs/transformers/v4.48.0/en/model_doc/gpt2#transformers.GPT2LMHeadModel}{[link]}. The teacher model we use for \texttt{tinyShakespeare} and \texttt{Wikitext2} is the \texttt{gpt2-large} configuration within this module, which has 774 million parameters, and was pretrained by a team from OpenAI~\citep{radford2019language}. We use the same tokenizer from the teacher model for the student model. For the \texttt{PTB} dataset, we use a different teacher model, as we found that \texttt{gpt2-large} had a poor fit, with the loss being above $5.0$ on this dataset. So instead, we used the \texttt{GPT-J-6B}  model~\citep{mesh-transformer-jax}, which has 6 billion parameters \href{https://huggingface.co/EleutherAI/gpt-j-6b}{[link]}. 

For the student model, we specify the configuration in \cref{tab:student-model-configs}.
\begin{table}[!htb]
    \centering
    \caption{Parameters of the Student GPT2 model}
    \label{tab:student-model-configs}
    \begin{tabular}{|c||c|c|c|}
        \hline
         Dataset & Embedding size & Number of layers & Number of attention heads \\
         \hline
         \texttt{tinyShakespeare} & $768$ & $2$ & $4$ \\
         \texttt{PTB} & $768$ & $2$ & $4$ \\
         \texttt{Wikitext2} & $1200$ & $12$ & $12$ \\
         \hline
    \end{tabular}
\end{table}

\paragraph{Hyperparameter tuning.}
For our methods \IAM{} (\cref{alg:IAM}) and \texttt{IAM-Adam} (\cref{alg:IMAP}) we set $\lambda_t=9$, which corresponds to using momentum $\beta =0.9$ if the learning rates were constant, see~\Cref{L:momentum is IMA}. Note that there is no hyperparameter tuning at all for \IAM{} and \texttt{IAM-Adam}.

For the baseline methods \SGD{} and \texttt{Adam}, we do the following tuning:
we run bot \SGD{} and \texttt{Adam} with constant learning rate and with a \textit{warmup+cosine decay} schedule \citep{Loshchilov2017}. This schedule does a linear warmup over the first 20\% of iterations to a peak learning-rate $\gamma$, then performs a cosine decay to $0$ over the remaining steps.
For \texttt{Adam}, we set the learning rate to its default value of $10^{-3}$ for the constant schedule, and we set $\gamma=1.5\times 10^{-3}$ for the \textit{warmup+cosine decay} schedule.

For \texttt{SGD} the learning rate needs to be tuned to get a reasonable performance: for the constant schedule, we chose the best-performing learning rate from the set 
$$\gamma_{\text{constant}}\in \{0.0001, 0.001, 0.01, 0.05, 0.1, 0.2 \}.$$
When using a scheduler with \texttt{SGD}, we take the best-performing value $\gamma_{\text{constant}}$ and then independently tune the peak learning rate within the set
$$\gamma_{\text{constant}} \cdot \{1.2, 1.5, 2, 3, 5 \}.$$
For \SGD{} we use a momentum parameter of $0.9$. In the \texttt{Pytorch} implementation of \SGD{}, we also set the \texttt{dampening} parameter to $0.9$ to ensure comparability of the tuned learning rate to the one of \IAM{}. 

\paragraph{Relationship to existing distillation techniques.} In this paragraph, we aim to give a short overview over various distillation techniques which often vary in terms of their general setup and loss function. However, as model distillation is not the main focus of this paper, we point to the references below for additional background.
In their seminal work, \citet{Hinton2015} propose to minimize the KL divergence between the teacher and student output probabilities. Follow-up works use a loss function that combines KL divergence and the standard loss for the student task (e.g., cross-entropy loss for classification, squared loss for regression) \citep{Romero2014}. 
On the other hand, \citet{hsieh-etal-2023-distilling} propose to use the teacher output as surrogate labels in case of unavailable labeled training data for the students.
We also refer to \citet{Beyer2022} for an overview of training techniques that improve the distillation performance.

The distillation setup that we propose in this paper is slightly different: we use only the final batch loss of the teacher model. The reason for this is that the \IAM{} methods we investigate rely on an accurate guess of the optimal batch loss $f_\xi(x_*)$. In the distillation setting, we can leverage the pretrained teacher model in order to approximate the optimal batch loss values. The notion of distillation we use here might of independent interest, as it only needs access to the final batch loss value, but not the output probabilities of the model (the \emph{logits}) nor its weights.

\paragraph{Additional plots.} In~\Cref{fig:distill-full-adam} we give the full plot of our distillation experiments, including the evolution of the learning rates for \texttt{IAM} and \texttt{IAM-Adam}.

\begin{figure*}[t]
    \centering
     \includegraphics[width=0.8\textwidth]{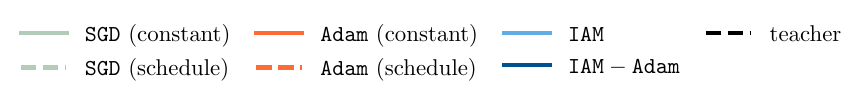}
      \begin{minipage}[t]{0.32\textwidth}
        \centering
        \includegraphics[width=\textwidth, trim=0 0.1in 0 0, clip]{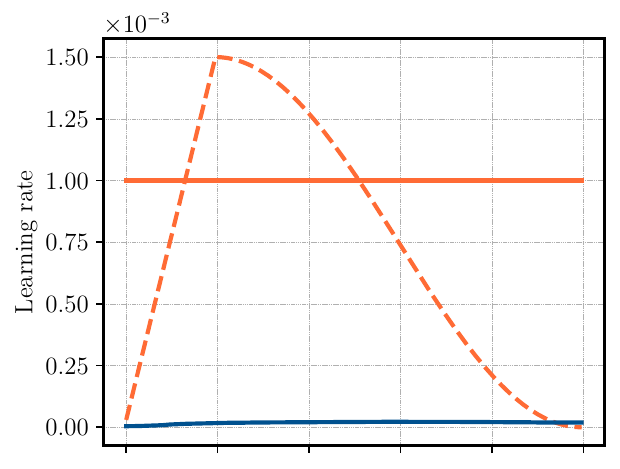}
    \end{minipage}
    \hfill
    \begin{minipage}[t]{0.32\textwidth}
        \centering
        \includegraphics[width=\textwidth, trim=0 0.1in 0 0, clip ]{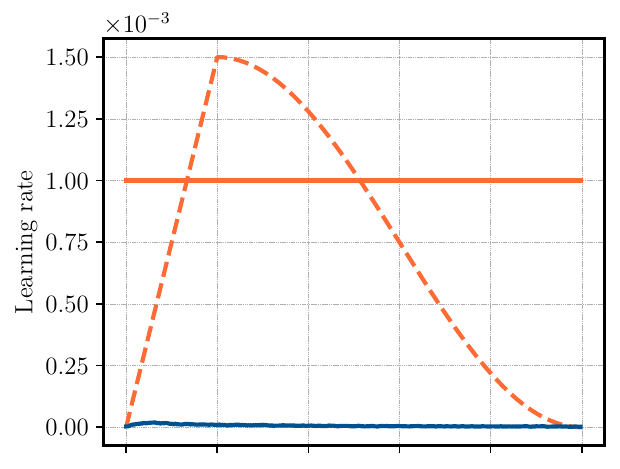}
    \end{minipage}
    \hfill
    \begin{minipage}[t]{0.32\textwidth}
        \centering
        \includegraphics[width=\textwidth, height = 0.7\textwidth, trim=0 0.1in 0 0, clip]{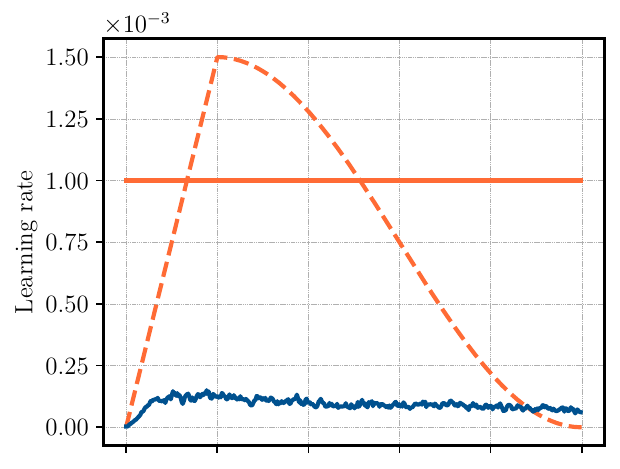}
    \end{minipage}

    \begin{minipage}[t]{0.32\textwidth}
        \centering
        \includegraphics[width=\textwidth, trim=0 0.1in 0 0, clip]{plots/GPT_distill/gpt_distill-lr-tiny_shakespeare_med.pdf}
    \end{minipage}
    \hfill
    \begin{minipage}[t]{0.32\textwidth}
        \centering
        \includegraphics[width=\textwidth, trim=0 0.1in 0 0, clip ]{plots/GPT_distill/gpt_distill-lr-ptb_text_med_j_05.pdf}
    \end{minipage}
    \hfill
    \begin{minipage}[t]{0.32\textwidth}
        \centering
        \includegraphics[width=\textwidth, trim=0 0.1in 0 0, clip]{plots/GPT_distill/gpt_distill-lr-wikitext-2.pdf}
    \end{minipage}

    \begin{minipage}[t]{0.32\textwidth}
        \centering
        \includegraphics[width=\textwidth]{plots/GPT_distill/gpt_distill-tiny_shakespeare_med.pdf}
        {\texttt{tinyShakespeare}} 
    \end{minipage}
    \hfill
        \begin{minipage}[t]{0.32\textwidth}
        \centering
        \includegraphics[width=\textwidth]{plots/GPT_distill/gpt_distill-ptb_text_med_j_05.pdf}
        {\texttt{PTB}}
    \end{minipage}
    \hfill
    \begin{minipage}[t]{0.32\textwidth}
        \centering
        \includegraphics[width=\textwidth]{plots/GPT_distill/gpt_distill-wikitext-2.pdf}
        {\texttt{Wikitext2}}
    \end{minipage}
    \caption{Full display of \cref{fig:distill}. Adaptive learning rate of \texttt{IAM-Adam} compared to \texttt{Adam} \textbf{(top)}, of \IAM{} compared to \SGD{} \textbf{(middle)}, and the cross-entropy training loss \textbf{(bottom)}. Black line marks the average teacher loss.}
    \label{fig:distill-full-adam}
\end{figure*}

In~\Cref{fig:tiny_shakespear_full} we give the distillation of several different small \GPT{} models for the \texttt{tinyShakespeare} data set.
\begin{figure*}[t]
    \centering
    \includegraphics[width=0.8\textwidth]{plots/GPT_distill/legend.pdf}
    \begin{minipage}[t]{0.32\textwidth}
        \centering
        \includegraphics[width=\textwidth, trim=0 0.1in 0 0, clip]{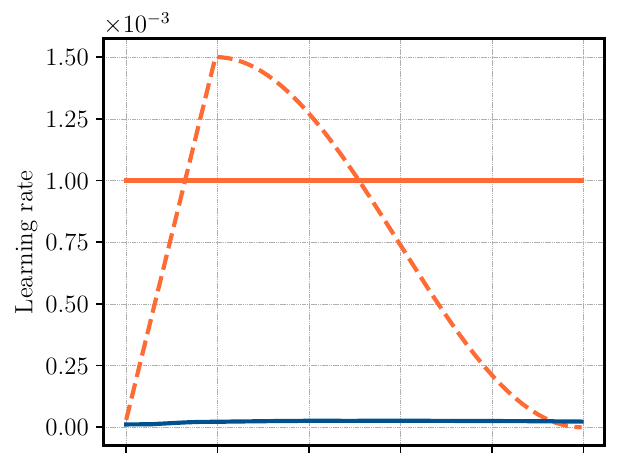}
    \end{minipage}
    \hfill
    \begin{minipage}[t]{0.32\textwidth}
        \centering
        \includegraphics[width=\textwidth, trim=0 0.1in 0 0, clip]{plots/GPT_distill/gpt_distill-lr-adam-tiny_shakespeare_med.pdf}
    \end{minipage}
    \hfill
    \begin{minipage}[t]{0.32\textwidth}
        \centering
        \includegraphics[width=\textwidth, trim=0 0.1in 0 0, clip]{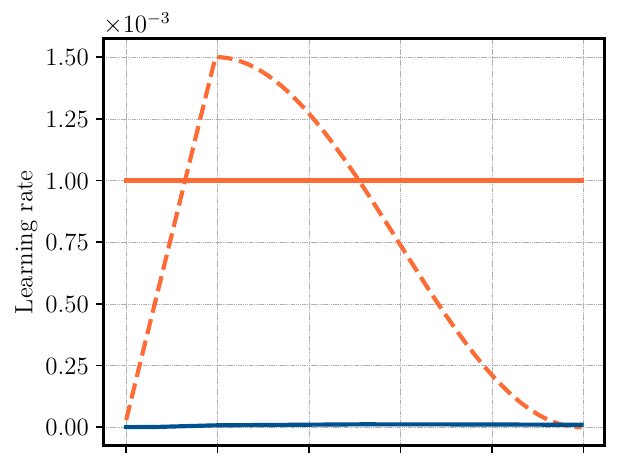}
    \end{minipage}
    \begin{minipage}[t]{0.32\textwidth}
        \centering
        \includegraphics[width=\textwidth, trim=0 0.1in 0 0, clip]{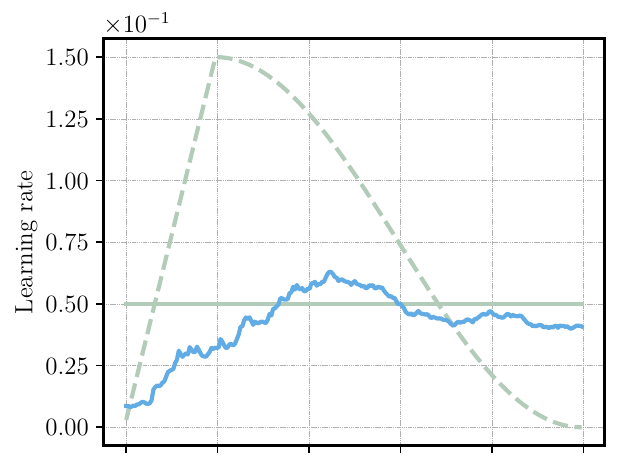}
    \end{minipage}
    \hfill
    \begin{minipage}[t]{0.32\textwidth}
        \centering
        \includegraphics[width=\textwidth, trim=0 0.1in 0 0, clip]{plots/GPT_distill/gpt_distill-lr-tiny_shakespeare_med.pdf}
    \end{minipage}
    \hfill
    \begin{minipage}[t]{0.32\textwidth}
        \centering
        \includegraphics[width=\textwidth, trim=0 0.1in 0 0, clip]{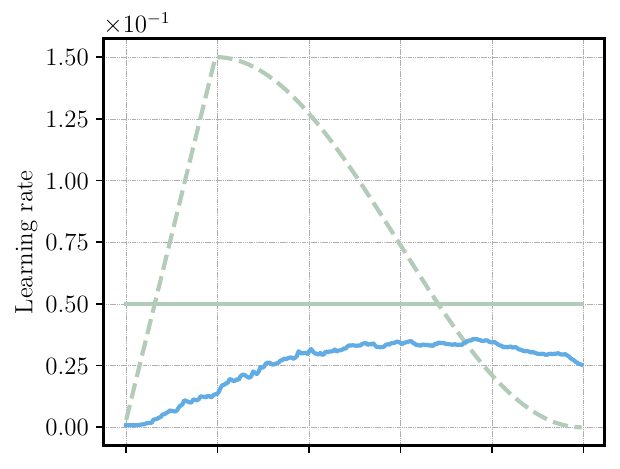}
    \end{minipage}

    \begin{minipage}[t]{0.32\textwidth}
        \centering
        \includegraphics[width=\textwidth]{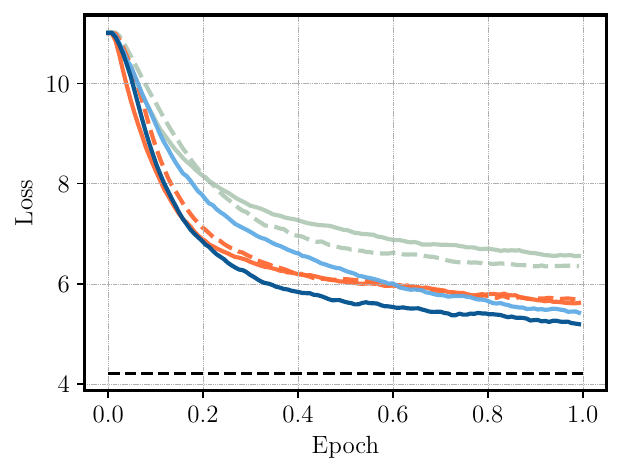}
        {\small n\_embd=768, n\_layer=2, n\_head=4}
    \end{minipage}
    \hfill
    \begin{minipage}[t]{0.32\textwidth}
        \centering
        \includegraphics[width=\textwidth]{plots/GPT_distill/gpt_distill-tiny_shakespeare_med.pdf}
        {\small n\_embd=768, n\_layer=4, n\_head=8}
    \end{minipage}
    \hfill
    \begin{minipage}[t]{0.32\textwidth}
        \centering
        \includegraphics[width=\textwidth]{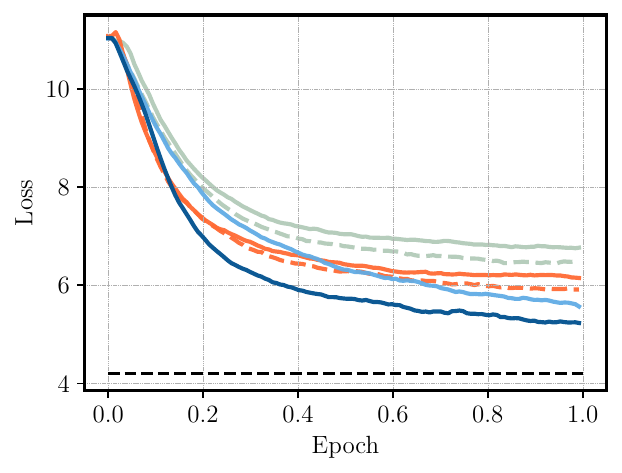}
        {\small n\_embd=1200, n\_layer=12, n\_head=12}
    \end{minipage}

    \caption{Distilling \texttt{gpt2-medium} into successively larger student models for the \texttt{tinyShakespeare} dataset.}
    \label{fig:tiny_shakespear_full}
\end{figure*}


\end{document}